\theoremstyle{plain}
\newtheorem{theorem}{Theorem} % revise the theorem numbering
\newtheorem{lemma}[theorem]{Lemma}
\theoremstyle{definition}
\newtheorem{assumption}[theorem]{Assumption}
\theoremstyle{remark}
\newcommand{\A}{\mathbb{A}}
\newcommand{\R}{\mathbb{R}}
\newcommand{\N}{\mathbb{N}}
\newcommand{\bw}{\boldsymbol{w}}
\newcommand{\cA}{\mathcal{A}}
\newcommand{\cE}{\mathcal{E}}
\newcommand{\cG}{\mathcal{G}}
\newcommand{\cI}{\mathcal{I}}
\newcommand{\cM}{\mathcal{M}}
\newcommand{\cS}{\mathcal{S}}
\mathchardef\mhyphen="2D
\newcommand{\argmax}{\operatornamewithlimits{argmax}}
\newcommand{\ex}{\mathbb{E}}
\newcommand{\var}{\textup{Var}}
\newcommand{\kl}{\textup{KL}}
\newcommand{\sbr}[1]{\left( #1 \right)}
\newcommand{\mbr}[1]{\left[ #1 \right]}
\newcommand{\lbr}[1]{\left\{ #1 \right\}}
\newcommand{\abr}[1]{\left| #1 \right|}
\newcommand{\term}[1]{\mathtt{Term\ #1}}
\newcommand{\regret}{\mathtt{Regret}}
\newcommand{\reward}{\mathtt{Reward}}
\newcommand{\univ}{\mathtt{univ}}
\newcommand{\reg}{\mathtt{reg}}
\newcommand{\sumtext}{\mathtt{sum}}
\newcommand{\unif}{\mathtt{unif}}
\newcommand{\indicator}[1]{\mathbbm{1}\lbr{ #1 }}
\newcommand{\binomial}{\mathtt{Binomial}}
\newcommand{\bernoulli}{\mathtt{Bernoulli}}
\newcommand{\aug}{\mathtt{aug}}
\newcommand{\algtteuler}{\mathtt{BranchVI}}
\newcommand{\algbonusleader}{\mathtt{BranchRFE}}
\newcommand{\compilehidecomments}{false}%HIDE comments
	\newcommand{\wei}[1]{}
	\newcommand{\longbo}[1]{}
	\newcommand{\yihan}[1]{}
	\newcommand{\wei}[1]{{\color{red}  [\text{Wei:} #1]}}
	\newcommand{\longbo}[1]{{\color{blue} [\text{Longbo:} #1]}}
	\newcommand{\yihan}[1]{{\color{teal} [\text{Yihan:} #1]}}
\newcommand{\compilefullversion}{true} %Compile Full Version
	\newcommand{\OnlyInFull}[1]{}
	\newcommand{\OnlyInShort}[1]{#1}
	\newcommand{\OnlyInFull}[1]{#1}%
	\newcommand{\OnlyInShort}[1]{}%
\icmltitlerunning{Branching Reinforcement Learning}
\begin{document}

\twocolumn[
\icmltitle{Branching Reinforcement Learning}

% It is OKAY to include author information, even for blind
% submissions: the style file will automatically remove it for you
% unless you've provided the [accepted] option to the icml2022
% package.

% List of affiliations: The first argument should be a (short)
% identifier you will use later to specify author affiliations
% Academic affiliations should list Department, University, City, Region, Country
% Industry affiliations should list Company, City, Region, Country

% You can specify symbols, otherwise they are numbered in order.
% Ideally, you should not use this facility. Affiliations will be numbered
% in order of appearance and this is the preferred way.
\icmlsetsymbol{equal}{*}

\begin{icmlauthorlist}
\icmlauthor{Yihan Du}{thu}
\icmlauthor{Wei Chen}{msra}
%\icmlauthor{Firstname3 Lastname3}{comp}
%\icmlauthor{Firstname4 Lastname4}{sch}
%\icmlauthor{Firstname5 Lastname5}{yyy}
%\icmlauthor{Firstname6 Lastname6}{sch,yyy,comp}
%\icmlauthor{Firstname7 Lastname7}{comp}
%%\icmlauthor{}{sch}
%\icmlauthor{Firstname8 Lastname8}{sch}
%\icmlauthor{Firstname8 Lastname8}{yyy,comp}
%%\icmlauthor{}{sch}
%%\icmlauthor{}{sch}
\end{icmlauthorlist}

\icmlaffiliation{thu}{IIIS, Tsinghua University, Beijing, China}
\icmlaffiliation{msra}{Microsoft Research}
%\icmlaffiliation{sch}{School of ZZZ, Institute of WWW, Location, Country}

\icmlcorrespondingauthor{Yihan Du}{duyh18@mails.tsinghua.edu.cn}
\icmlcorrespondingauthor{Wei Chen}{weic@microsoft.com}

% You may provide any keywords that you
% find helpful for describing your paper; these are used to populate
% the "keywords" metadata in the PDF but will not be shown in the document
\icmlkeywords{Branching reinforcement learning, branching Markov decision process, tree-structured trajectory}

\vskip 0.3in
]

% this must go after the closing bracket ] following \twocolumn[ ...

% This command actually creates the footnote in the first column
% listing the affiliations and the copyright notice.
% The command takes one argument, which is text to display at the start of the footnote.
% The \icmlEqualContribution command is standard text for equal contribution.
% Remove it (just {}) if you do not need this facility.

\printAffiliationsAndNotice{}  % leave blank if no need to mention equal contribution
%\printAffiliationsAndNotice{\icmlEqualContribution} % otherwise use the standard text.

\begin{abstract}
In this paper, we propose a novel Branching Reinforcement Learning (Branching RL) model, and investigate both Regret Minimization (RM) and Reward-Free Exploration (RFE) metrics for this model.
Unlike standard RL where the trajectory of each episode is a single $H$-step path, branching RL allows an agent to take multiple base actions in a state such that transitions branch out to multiple successor states correspondingly, and thus it generates a tree-structured trajectory.
This model finds important applications in hierarchical recommendation systems and online advertising.
For branching RL, we establish new Bellman equations and key lemmas, i.e., branching value difference lemma and branching law of total variance, and also bound the total variance by only $O(H^2)$ under an exponentially-large trajectory.
For RM and RFE metrics, we propose computationally efficient algorithms $\algtteuler$ and $\algbonusleader$, respectively, and derive nearly matching upper and lower bounds. 
Our regret and sample complexity results are polynomial in all problem parameters despite exponentially-large trajectories.
\end{abstract}

\section{Introduction}
Reinforcement Learning (RL)~\cite{burnetas1997optimal,sutton2018reinforcement} models a fundamental sequential decision making problem, where an agent interacts with the environment over time in order to maximize the obtained rewards. 
%Recently, there have been several works~[] which design provably efficient RL algorithms and provide optimal theoretical guarantees.
%Despite the huge success of existing standard RL model in game playing, autonomous driving and robotics, 
Standard RL~\cite{jaksch2010near,agrawal2017posterior,azar2017minimax,jin2018q,zanette2019tighter} considers taking only a single action in a state and formulates a single $H$-step path model. 
However, in many real-world applications such as recommendation systems~\cite{drl_recommendation} and online advertising~\cite{tree_based_recommendation}, we often need to select multiple options at a time, and each option can trigger a corresponding successor state. For example, in category-based shopping recommendation~\cite{drl_recommendation}, the recommendation system often displays a list of main categories at the first step, where each one has a probability to be clicked. If a main category is clicked,  at the second step, the system further provides a list of sub-categories according to the clicked main category. By analogy, at the last step, the system provides a list of items according to the chosen category path. In this process, users can select (trigger) more than one category-item paths, e.g., one may buy IT accessories-printers-laser printers and IT accessories-scanners-document scanners at once.

To handle such scenarios involving multiple actions and successor states, we propose a novel Branching Reinforcement Learning (Branching RL) framework, which is an episodic tree-structured forward model.
In each episode, an agent starts from an initial state and takes a \emph{super action} that contains multiple \emph{base actions}, where each base action in this state has a probability to be triggered. 
For each state-base action pair, if triggered successfully, the agent receives a reward and transitions to a next state; 
Otherwise, if it is not triggered, the agent receives zero reward and transitions to an absorbing state associated with zero reward. 
Thus, the transitions branch out to multiple successor states. At the second step, for each branched-out state, the agent also selects a super action that contains multiple base actions with trigger probabilities. 
She only obtains rewards from the triggered state-base action pairs, and each state-base action pair transitions to a corresponding next state. 
Then, the transitions at the second step branch out to more successor states. 
By analogy, till the last step, she traverses an $H$-layer tree-structured trajectory, and only collects rewards at the triggered state-base action pairs. 

Different from standard episodic RL~\cite{azar2017minimax,jin2018q,zanette2019tighter} where each trajectory is a single $H$-step path, the trajectory of branching RL is an $H$-layer triggered tree with exponentially increasing states and actions in each layer. 
This model allows an agent to take multiple base actions at once and handle multiple successor states. It can be applied to many hierarchical decision making scenarios, such as category-based recommendation systems~\cite{drl_recommendation} and online advertising~\cite{tree_based_recommendation}.

Under the branching RL model, we investigate two popular metrics in the RL literature, i.e., Regret Minimization (RM) and Reward-Free Exploration (RFE). In regret minimization~\cite{jaksch2010near,azar2017minimax,zanette2019tighter}, the agent aims to minimize the gap between the obtained reward and the reward that can obtained by always taking the optimal policy. In reward-free exploration~\cite{jin2020reward,kaufmann2021adaptive,menard2021fast}, the agent explores the unknown environment (model) without observation of rewards, in order to estimate the model accurately such that for any given reward function, she can plan a near-optimal policy using the estimated model.
The performance in RFE is measured by the number of episodes used during exploration (i.e., sample complexity).

Our work faces several unique challenges: (i) Since branching RL is a tree-structured forward model which greatly differs from standard RL,  existing analytical tools for standard RL, e.g., Bellman equations, value difference lemma and law of total variance, cannot be directly applied to our problem. (ii) With exponentially-large trajectories, it is challenging to analyze the total variance and derive tight (polynomial) regret and sample complexity guarantees. (iii) Since the number of possible super actions can be combinatorially large, how to design a computationally efficient algorithm that avoids naive enumeration over all super actions is another challenge.

To tackle the above challenges, we establish novel analytical tools, including branching Bellman equations, branching value difference lemma and branching law of total variance, and bound the total variance by only $O(H^2)$ under exponentially-large trajectories. 
We also propose computationally efficient algorithms for both RM and RFE metrics, and provide nearly matching upper and lower bounds, which are polynomial in all problem parameters despite exponentially-large trajectories. 

To sum up, our contributions in this paper are as follows:

\begin{itemize}
	\item We propose a novel Branching Reinforcement Learning (Branching RL) framework, which is an episodic $H$-layer tree-structured forward model and finds important applications in hierarchical recommendation systems and online advertising.  Under branching RL, we investigate two popular metrics, i.e., Regret Minimization (RM) and  Reward-Free Exploration (RFE).
	\item We establish new techniques for branching RL, including branching Bellman equations, branching value difference lemma and  branching law of total variance, and bound the total variance by only $O(H^2)$ despite exponentially-large trajectories.
	\item For both RM and RFE metrics, we design computationally efficient algorithms $\algtteuler$ and $\algbonusleader$, respectively, and build near-optimal upper and lower bounds, which are polynomial in all problem parameters even with exponentially-large trajectories. When our problem reduces to standard RL, our results match the state-of-the-arts.
\end{itemize}

Due to space limit, we defer all proofs to Appendix.

\section{Related Work}

Below we review the literature of standard (episodic and tabular) RL with regret minimization (RM) and reward-free exploration (RFE) metrics.

\textbf{Standard RL-RM.}
For the regret minimization (RM) metric, \citet{jaksch2010near} propose an algorithm that adds optimistic bonuses on transition probabilities, and achieves a regret bound with a gap in factors $H,S$ compared to the lower bound~\cite{jaksch2010near,osband2016lower}. Here $H$ is the length of an episode, and $S$ is the number of states.
\citet{agrawal2017posterior} use posterior sampling and obtain an improved regret bound. 
\citet{azar2017minimax} build confidence intervals directly for value functions rather than transition probabilities, and provide the first optimal regret.
\citet{zanette2019tighter} design an algorithm based on both optimistic and pessimistic value functions, and achieve a tighter problem-dependent regret bounds without requiring domain knowledge.
The above works focus on model-based RL algorithms. There are also other works~\cite{jin2018q,zhang2020almost} studying model-free algorithms based on Q-learning with exploration bonus or advantage functions.
%\cite{jin2018q} analyze the Q-learning with exploration bonus algorithm and provide the first optimal regret bound with respect to $T$ (the total number of steps), but still have a gap in $H$.
%\cite{zhang2020almost} design an improved algorithm based on advantage functions, and achieve the optimal regret bound.

\textbf{Standard RL-RFE.} \citet{jin2020reward} introduce the reward-free-exploration (RFE) metric and design an algorithm that runs multiple instances of existing RM algorithm~\cite{zanette2019tighter}, and their sample complexity has a gap to the lower bound~\cite{jin2020reward,domingues2021episodic} in factors $H,S$.
\citet{kaufmann2021adaptive} propose an algorithm which builds upper confidence bounds for the estimation error of value functions, and improve the sample complexity of \cite{jin2020reward}. \citet{menard2021fast} achieve a near-optimal sample complexity by applying an empirical Bernstein inequality and upper bounding the overall estimation error.

There are huge differences between standard RL and our branching RL. % in both formulation and analysis. 
The exponentially-large trajectory of branching RL brings unique challenges in developing Bellman equations and key lemmas, designing computationally efficient algorithms and deriving optimal (polynomial) bounds. Existing RL algorithms and analysis cannot be applied to solve our challenges.

\section{Problem Formulation}
In this section, we present the formal formulation of Branching Reinforcement Learning (Branching RL).

\textbf{Branching Markov Decision Process (Branching MDP).}
We consider an episodic branching MDP defined by a tuple $\cM=(\cS,A^{\univ},\cA,m,H,q,p,r)$. Here $\cS=\cS^{\reg}\cup \{s_{\perp}\}$ is the state space with cardinality $S$. $\cS^{\reg}$ is the set of \emph{regular states}, and $s_{\perp}$ is an \emph{ending state}, which is an absorbing state with zero reward. 
$A^{\univ}$ is the set of \emph{base actions}, which represents the set of all feasible items in recommendation. Let $N:=|A^{\univ}|$ denote the number of base actions.
A \emph{super action} $A \subset A^{\univ}$ consists of $m$ ($m \leq N$) base actions, which stands for a recommended list.
$\cA$ is the collection of all feasible \emph{super actions} and can be combinatorially large. 
$H$ is the length of an episode.
%, i.e., $|\cA| \leq \binom{A}{m}$.
Throughout the paper, we call a super action an action for short, call $(s,a) \in \cS \times A^{\univ}$ a \emph{state-base action pair}, and call  $(s,a) \in \cS \setminus \{s_{\perp}\} \times A^{\univ}$ a \emph{regular state-base action pair}.

$q(s,a)$ is the trigger probability of state-base action pair $(s,a) \in \cS \times A^{\univ}$. 
%Let $J(s,a) \sim \bernoulli(q(s,a))$ be a Bernoulli random variable denoting whether $(s,a)$ is triggered.
$p(s'|s,a)$ is the probability of transitioning to state $s'$ on state-base action pair $(s,a)$, for any $(s',s,a) \in \cS \times \cS \times A^{\univ}$.
%$p(\cdot|s,a)$ is a distribution on $\cS$.
$r(s,a) \in [0,1]$ is the reward of pair $(s,a) \in \cS \times A^{\univ}$.
We assume that reward function $r$ is deterministic as many prior RL works~\cite{azar2017minimax,jin2018q,zhang2020almost}, and our work can generalize to stochastic rewards easily. 
%The trigger and transition probabilities $q,p$ are unknown.
Parameters $q,p,r$ are time-homogeneous, i.e., have the same definitions for different step $h \in [H]$. 
The ending state $s_{\perp}$ has zero reward and always transitions back to itself, i.e., $q(s_{\perp},a)=0$, $p(s_{\perp}|s_{\perp},a)=1$ and $r(s_{\perp},a)=0$ for all $a \in A^{\univ}$.
We define a policy $\pi$ as a collection of $H$ functions $\{\pi_h\!:\!\cS \!\mapsto\! \cA\}_{h \in [H]}$, and $K$ as the number of episodes.

\textbf{String-based Notations.}
As shown in Figure~\ref{fig:illustrating_example}, in branching RL, the trajectory of each episode is an $m$-ary tree, where there are $H$ layers (steps), and each layer $h \in [H]$ has $m^{h-1}$ states (nodes) and $m^{h}$ state-base action pairs (edges).
We use the following string-based notations to denote a trajectory:
Each tree node in layer $h$ has a string index $\langle i_1, \ldots, i_{h-1} \rangle$, with the root node for layer $1$ having the empty string $\emptyset$,
and $i_1, \ldots, i_{h-1} \in \{1, 2, \ldots, m\}$.
The $m$ children of this node have indices that concatenate $i_h \in [m]$ to the string, making it  $\langle i_1, \ldots, i_{h-1}, i_h \rangle$, where $i_h$ stands for that
this node is the $i_h$-th child of the node $\langle i_1, \ldots, i_{h-1} \rangle$.
%We regard each state as a node, and it has $m$ associated state-base action pairs, where each state-base action pair is an edge connecting the current state with the successor state. 
Operator $\oplus$ is the concatenation operation for strings, and $i^{\oplus h}$ denotes the concatenation of $h$ strings $\Braket{i}$ for any $i\in [m], h \in [H]$. For any string $\sigma$, $|\sigma|$ denotes its length, and thus state $s_{\sigma}$ is at step $|\sigma|+1$.

\textbf{Online Game.}
In each episode $k \in [K]$, an agent selects a policy $\pi^k$ at the beginning, and starts from an initial state $s_{\emptyset}$. 
At step $1$, she chooses an action $A_{\emptyset}=\{a_{\Braket{1}},\dots,a_{\Braket{m}}\}$ according to $\pi^k_1$. 
%\wei{Should $\pi^k$ be $\pi^k_1$? The previous paragraph talks about $\pi_h$.}\yihan{Revised it.}
Each state-base action pair $(s_{\emptyset},a_{\Braket{i}})$ for $i \in [m]$ has probability $q(s_{\emptyset},a_{\Braket{i}})$ to be triggered. If triggered successfully, the agent obtains reward $r(s_{\emptyset},a_{\Braket{i}})$ and this state-base action pair transitions to a next state $s_{\Braket{i}} \sim p(\cdot|s_{\emptyset},a_{\Braket{i}})$; 
Otherwise, if not triggered successfully, she obtains zero reward and this state-base action pair transitions to the ending state $s_{\perp}$. Hence, the transitions at step $1$ branch out to $m$ successor states $s_{\Braket{1}},\dots,s_{\Braket{m}}$.
At step $2$, for each state $s_{\Braket{i}}$ ($i \in [m]$), she chooses an action $A_{\Braket{i}}=\{a_{\Braket{i,1}},\dots,a_{\Braket{i,m}} \}$ according to $\pi^k_2$. 
%\wei{Should $\pi^k$ be $\pi^k_2$?}\yihan{Revised it}
Then, there are $m^2$ state-base action pairs $\{(s_{\Braket{i}},a_{\Braket{i,j}})\}_{i,j \in [m]}$ at step $2$, and each of them
%. Each state-base action pair $(s_{\Braket{i}},a_{\Braket{i,j}})$ 
is triggered with probability $q(s_{\Braket{i}},a_{\Braket{i,j}})$. If triggered successfully, the agent receives reward $r(s_{\Braket{i}},a_{\Braket{i,j}})$ and this pair transitions to a next state $s_{\Braket{i,j}} \sim p(\cdot|s_{\Braket{i}},a_{\Braket{i,j}})$; Otherwise, she receives zero reward and this pair transitions to $s_{\perp}$. Then, the transitions at step 2 branch out to $m^2$ successor states $\{s_{\Braket{i,j}}\}_{i,j\in [m]}$. The episode proceeds by analogy at the following steps $3,\dots,H$.
% after the episode ends, we obtain an $H$-layer $m$-ary trajectory tree. 
In the trajectory tree, once the agent reaches $s_{\perp}$ at some node, she obtains no reward throughout this branch (This is so-called ``ending state'').

\begin{figure}[t] 
	\centering    
	\includegraphics[width=0.99\columnwidth]{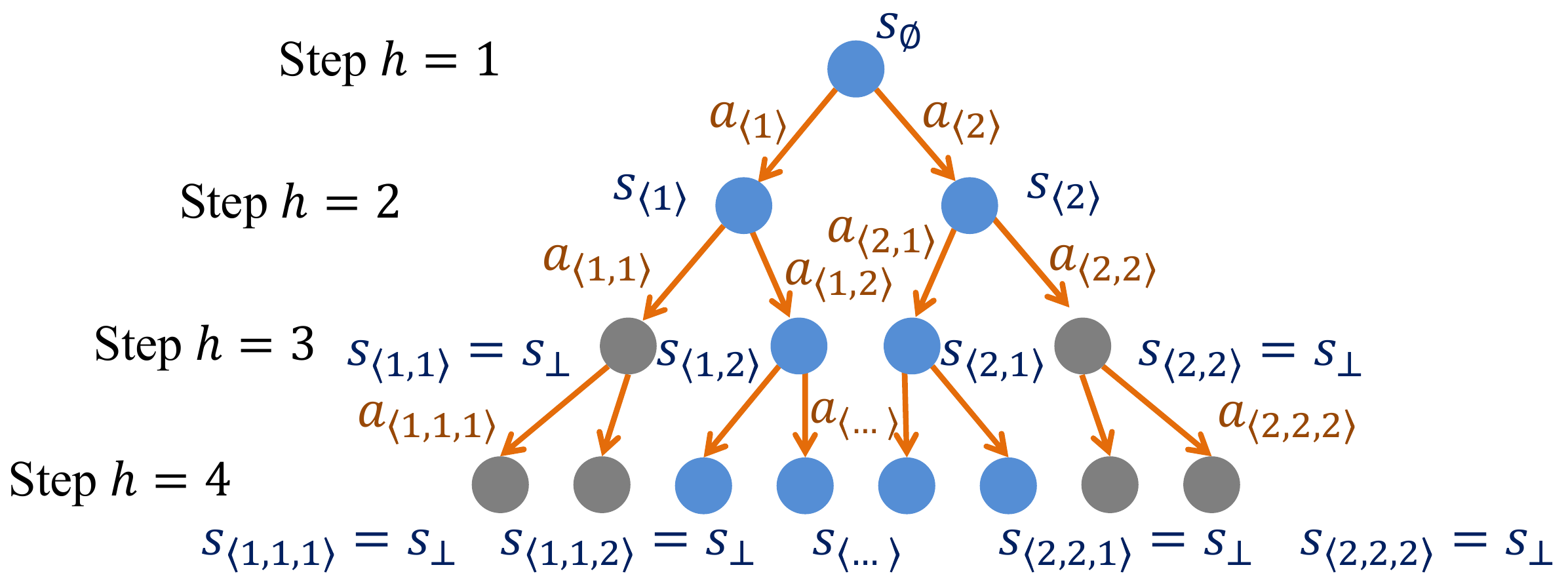} 
	\caption{Illustrating example with $m=2$ for branching RL.} 
	\label{fig:illustrating_example}     
\end{figure}

\textbf{Branching Value functions and Bellman Equations.}
For any policy $\pi$, we define value function $V_h^{\pi}:\cS \mapsto \R$, so that 
\begin{align}
V^{\pi}_h(s)  = 
& \ex_{q,p,\pi} \Big[ \sum_{\sigma'=\emptyset}^{m^{\oplus(H-h)}}
\sum_{\ell=1}^{m} q(s_{\sigma \oplus \sigma'}, a_{\sigma \oplus \sigma'\oplus \ell} ) \cdot  
\nonumber\\
& \quad \quad \quad  r(s_{\sigma \oplus \sigma'}, a_{\sigma \oplus \sigma'\oplus \ell} ) | s_{\sigma}=s \Big] \label{eq:V_function2}
\end{align}
gives the expected cumulative reward starting from some state $s$ at step $h$ till the end of this branch, under policy $\pi$. 
Here $\sigma$ is the index string for an arbitrary state at step $h$, and thus $\sigma \!\in\! \{1^{\oplus(h-1)},\dots,m^{\oplus(h-1)} \}$. $\sum_{\sigma'=\emptyset}^{m^{\oplus(H-h)}}$ denotes the summation over strings $\sigma'=\emptyset,\Braket{1},\dots,\Braket{m},\Braket{1,1},\dots,\Braket{m,m}, \dots, m^{\oplus(H-h)}$, which effectively enumerates all tree nodes of $H\!-\!h\!+\!1$ layers.
The expectation is taken with respect to the trajectory, which is dependent on trigger distribution $q$, 
transition distribution $p$ and policy $\pi$.

Accordingly, we also define Q-value function $Q_h^{\pi}:\cS \times \cA \mapsto \R$, so that
\begin{align*}
Q^{\pi}_h(s,A) =& 
\ex_{q,p,\pi} \Big[ \sum_{\sigma'=\emptyset}^{m^{\oplus(H-h)}} 
\sum_{\ell=1}^{m} q(s_{\sigma \oplus \sigma'}, a_{\sigma \oplus \sigma'\oplus \ell} ) \cdot  
\\
& \quad \quad \quad  r(s_{\sigma \oplus \sigma'}, a_{\sigma \oplus \sigma'\oplus \ell} ) | s_{\sigma}=s, A_{\sigma}=A \Big]
\end{align*}
denotes the expected cumulative reward starting from some state-action pair $(s,A)$ at step $h$ till the end of this branch, under policy $\pi$. From the definitions of $r,p,q$ for ending state $s_{\perp}$, we have $V^{\pi}_h(s_{\perp})=Q^{\pi}_h(s_{\perp},A)=0$ for any $A \in \cA, h \in [H], \pi$.

Since $\cS$, $\cA$ and $H$ are all finite, there exists a deterministic optimal policy $\pi^*$ which has the optimal value $V_h^*(s)=\sup_{\pi} V_h^{\pi}(s)$ for any $s \in \cS$ and $h \in [H]$.
Then, we can establish the Bellman (optimality) equations as follows:

\begin{equation*}
\left\{
\begin{aligned}
Q^{\pi}_h(s,A)=&\sum_{a \in A} q(s,a) \sbr{r(s,a)+p(\cdot|s,a)^\top V^{\pi}_{h+1}}
\\
V^{\pi}_h(s)=&Q^{\pi}_h(s,\pi_h(s))
\\
V^{\pi}_{H+1}(s)=&0, \ \forall s \in \cS ,
\end{aligned}
\right.
\end{equation*}
\begin{equation*}
\left\{
\begin{aligned}
Q^{*}_h(s,A)=&\sum_{a \in A} q(s,a) \sbr{r(s,a)+p(\cdot|s,a)^\top V^{*}_{h+1}}
\\
V^{*}_h(s)=& \max_{A \in \cA} Q^{*}_h(s,A)
\\
V^{*}_{H+1}(s)=&0, \ \forall s \in \cS .
\end{aligned}
\right.
\end{equation*}

Under the framework of branching RL, we consider two important RL settings, i.e., regret minimization (branching RL-RM) and reward-free exploration (branching RL-RFE).

\textbf{Regret Minimization (RM).} 
In branching RL-RM, the agent plays the branching RL game for $K$ episodes, and the goal is to minimize the following regret

\begin{align*}
	\regret(K)=\sum_{k=1}^{K} \sbr{V_1^*(s_{\emptyset}^k)-V_1^{\pi_k}(s_{\emptyset}^k) } .
\end{align*}

\textbf{Reward-Free Exploration (RFE).} 
Branching RL-RFE consists of two phases, i.e., exploration and planning. (i) In the exploration phase, given a fixed initial state $s_{\emptyset}$, the agent plays the branching RL game \emph{without} the observation of reward function $r$, and estimates a trigger and transition model $(\hat{q},\hat{p})$. (ii) In the planning phase, the agent is given reward function $r$, and computes the optimal policy $\hat{\pi}^{*}$ under her estimated model $(\hat{q},\hat{p})$ with respect to $r$.
Given an accuracy parameter $\varepsilon$ and a confidence parameter $\delta$, she needs to guarantee that for \emph{any} given reward function $r$, the policy $\hat{\pi}^{*}$ with respect to $r$ is \emph{$\varepsilon$-optimal}, i.e.,

\begin{align*}
V_1^{\hat{\pi}^{*}}(s_{\emptyset};r) \geq V_1^{\pi^{*}}(s_{\emptyset};r)-\varepsilon ,
\end{align*}

with probability at least $1-\delta$. We measure the performance by \emph{sample complexity}, i.e., the number of episodes used in the exploration phase to guarantee an $\varepsilon$-optimal policy for any given $r$.

%For unlimited trigger probability $q$, the number of regular (triggered) state-base action pairs in a trajectory can be exponentially large, and any algorithm must suffer $\Omega( \exp(H) \sqrt{SNK})$ regret (see \OnlyInShort{the supplementary material}\OnlyInFull{Appendix~\ref{apx:unlimited_trigger_prob}} for details).
%To restrict the number of triggered state-base action pairs, we introduce the following technical assumption:
In order to ensure that the number of triggered state-base action pairs will not increase exponentially and designing sample efficient algorithms is possible in branching RL, we introduce the following assumption. 

\begin{assumption}[Bounded Trigger Probability] \label{assumption:tri_prob}
	For any $(s,a) \in \cS \times A^{\univ}$, we have
	$ q(s,a) \leq \frac{1}{m}$.
\end{assumption}

To justify the necessity of Assumption~\ref{assumption:tri_prob}, we provide a rigorous lower bound to show that once relaxing the threshold of trigger probability, any branching RL algorithm must suffer an exponential regret.

\begin{theorem}\label{thm:relax_assumption}
	Suppose that for any $(s,a) \in \cS \times A^{\univ}$, $q(s,a) \leq \bar{q}$ for some threshold parameter $\bar{q} > \frac{1}{m}$. Then, there exists an instance of branching RL with $H>1$, where the regret of any algorithm is bounded by $\Omega ( \frac{m \bar{q} ( (m \bar{q})^{H-1}-1 )}{m \bar{q}-1} \sqrt{SNK} ) $.
\end{theorem}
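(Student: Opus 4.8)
The plan is to adapt the classical minimax regret lower bound for tabular episodic RL (which gives $\Omega(\sqrt{H^2SAK})=\Omega(H\sqrt{SAK})$, with the horizon $H$ acting as the value range) to the branching setting, where a branching-induced value range replaces $H$. Write $L := \frac{m\bar q((m\bar q)^{H-1}-1)}{m\bar q-1}=\sum_{h=1}^{H-1}(m\bar q)^h$ for the factor in the bound. First I would build a family of hard branching MDP instances indexed by a hidden parameter $\theta$, chosen so that (a) the attainable value is $\Theta(L)$, and (b) identifying $\theta$ is as hard as solving $\Theta(SN)$ coupled bandit problems, so that the learning difficulty contributes the $\sqrt{SNK}$ factor.

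Second, I would fix all trigger probabilities at the largest allowed value $q(s,a)=\bar q>\tfrac1m$, so that the expected number of surviving nodes at layer $h$ is $(m\bar q)^{h-1}$ and the expected number of rewarding regular state-base action pairs over layers $2,\dots,H$ is exactly $\Theta(L)$; this is precisely where relaxing Assumption~\ref{assumption:tri_prob} bites, since $m\bar q>1$ makes the triggered tree grow geometrically. At each regular state I would embed a hard instance over the $N$ base actions: all rewards equal a baseline except that a hidden good base action $a^*_s$ (encoded by $\theta$) carries a small advantage $\Delta$, and the transition/reward structure is arranged so that an erroneous super action at a visited node degrades the reward of its \emph{entire subtree}. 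Consequently an error at a single node costs $\Theta(\Delta\cdot(\text{subtree size}))$, so the per-episode value gap between $\pi^{*}$ and any erring policy is amplified to $\Theta(\Delta L)$, while --- and this is the crucial feature --- the information revealed per episode about each $a^*_s$ stays $O(\Delta^2)$ and is \emph{not} amplified by the tree size.

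Third, I would run a change-of-measure argument. Using the Bretagnolle--Huber (or Fano) inequality, I would bound the KL divergence between the laws of the observed trajectories under $\theta$ and a one-coordinate perturbation $\theta'$ by the expected number of informative visits times $\Delta^2$, which over $K$ episodes is $O(K\Delta^2)$ per hidden coordinate once visits are spread over the $S$ states. This forces any algorithm to misidentify a constant fraction of the $\Theta(SN)$ hidden good actions unless $K$ is large; each misidentification costs $\Theta(\Delta L)$ in every episode where the corresponding state is reached. Summing over episodes and optimizing the gap at $\Delta=\Theta(\sqrt{SN/K})$ gives $\regret(K)=\Omega(L\sqrt{SNK})$, and one finally verifies that the construction is a legitimate branching MDP with $H>1$ satisfying $q\le\bar q$.

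The main obstacle is twofold. (i) Realizing the asymmetry between a regret amplified by the subtree value $\Theta(L)$ and an information content that remains $O(\Delta^2)$ per decision, all within the branching dynamics and the relaxed constraint $q\le\bar q$; this asymmetry is exactly what upgrades the naive $\sqrt{L}$ scaling (which one gets if reward and information are amplified symmetrically at every node) to the linear $L$ scaling demanded by the bound. (ii) Controlling the KL divergence over an \emph{exponentially large} trajectory: each episode produces an $H$-layer tree with up to $(m\bar q)^{H-1}$ leaves, so I must show, via a careful per-node decomposition (in the spirit of the branching law of total variance), that the cumulative information about $\theta$ is governed only by the expected visit counts rather than blowing up with the tree size. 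Obtaining this divergence bound tightly is the crux of the argument.
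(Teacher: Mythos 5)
Your high-level plan has the same shape as the paper's argument (hidden good actions resolved by a change of measure, per-mistake cost amplified by the geometric factor $L:=\sum_{h=1}^{H-1}(m\bar q)^h$, and a $\sqrt{SNK}$ statistical term), but the concrete construction you sketch cannot deliver the regret/information asymmetry that you yourself identify as the crux, and that asymmetry is not an unresolved technicality --- it is the whole proof. First, with every trigger probability pinned at $\bar q$ and the hidden advantage $\Delta$ placed on the \emph{reward} of a good base action, a wrong super action costs exactly $\Theta(\Delta)$ per visit and nothing more: rewards are local, and the law of the subtree below a node is completely unaffected by which super action is chosen there, so there is no mechanism by which ``an erroneous super action degrades the reward of its entire subtree.'' The amplification by $L$ can only come from perturbing the continuation probabilities themselves. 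This is exactly what the paper does: it hides the signal in the \emph{trigger} probabilities (the optimal super action at a bandit state triggers with probability $\bar q$, suboptimal ones with $\bar q-\eta$), so a single mistake shrinks the expected number of rewarding descendants at every later depth and costs $m\eta\sum_{h=1}^{H-1}(m\bar q)^{h}=\Theta(\eta m L)$ in that episode, while the observation of the $m$ trigger outcomes reveals only $O(m\eta^2)$ of information.

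Second, and more fundamentally, embedding a hidden good action at \emph{every} regular state contradicts your claim that the per-episode information about each coordinate stays $O(\Delta^2)$. The triggered tree has $\Theta(L)$ nodes per episode but there are only $S$ states, so informative states are necessarily revisited $\Theta(L/S)$ times per episode; the cumulative KL about the hidden coordinates then scales like $m L K\Delta^2$ in total, while the typical visited node is deep, where a mistake costs only $\Theta(\Delta)$ rather than $\Theta(\Delta L)$. Redoing your optimization with this (correct) KL budget forces $\Delta\lesssim\sqrt{SN/(m^2LK)}$ and yields a regret of order $H\sqrt{L}\cdot\sqrt{SNK}$ up to factors of $m,\bar q$ --- precisely the ``naive $\sqrt L$ scaling'' you set out to avoid, which is exponentially weaker than the claimed $\Omega(L\sqrt{SNK})$. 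The paper's instance resolves both issues simultaneously by a design your proposal is missing: the informative ``bandit states'' $x_1,\dots,x_{S-3}$ are reachable \emph{only at step 2} (hence each episode produces at most $m$ informative pulls, and per-episode KL stays $O(m^2\eta^2)$), all deeper nodes funnel into a single homogeneous state $s_2$ that carries no hidden information, and the trigger perturbation at step 2 is what propagates multiplicatively through the remaining $H-1$ layers. Without this ``informative only near the root, homogeneous below'' structure, the two requirements in your obstacle (i) --- cost amplified by $L$ but information not --- cannot hold at the same time.
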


We describe the intuition behind this lower bound, and defer the full proof to Appendix~\ref{apx:proof_lb_relax_assump}.
Consider a branching MDP, where at an early step the agent has to distinguish the optimal action that has trigger probability $\bar{q}$, from the sub-optimal actions that have trigger probabilities only $\bar{q}-\eta$. 
Once the agent takes a sub-optimal action at the early step, such trigger sub-optimality will impact exponentially many states in the following steps, and 
she will suffer a regret of $m\eta \cdot \sbr{ m\bar{q}+m^2\bar{q}^2+\dots+m^{H-1}\bar{q}^{H-1} }$ in this episode, which is the sum of a geometric progression with common ratio $m\bar{q}$. If $\bar{q}>\frac{1}{m}$, summing over all episodes, the total regret is exponentially large with respect to $H$.

%We construct an instance where an agent uniformly enters one of $\Theta(S)$ bandit states, where an optimal action has trigger probability $\alpha+\eta$ and a sub-optimal action has trigger probability only $\alpha$. The choice of actions in bandit states will impact exponential states in the following steps.
%Once the agent chooses a sub-optimal action in the bandit state, she will suffer a regret of $m\eta \cdot \sbr{ m(\alpha+\eta)+m^2(\alpha+\eta)^2+\dots+m^{H-1}(\alpha+\eta)^{H-1} }$ in this episode, which is the sum of a geometric progression.
%With $\alpha+\eta=\bar{q}>\frac{1}{m}$, this per-episode regret equals to $m\eta\cdot \frac{m\bar{q} ((m\bar{q})^{H-1}-1)}{m\bar{q}-1}$, which is exponential with respect to $H$. 
%Following the same analytical procedure as Theorem 6, we obtain that the total regret is at least $\Omega(\frac{m \bar{q} ( (m \bar{q})^{H-1}-1 )}{m \bar{q}-1} \cdot \sqrt{SNK})$.

Besides this lower bound, Assumption~\ref{assumption:tri_prob} is also mild in practice, since in real-world applications such as recommendation systems~\cite{drl_recommendation} and online advertising~\cite{tree_based_recommendation}, it is often the case that users are only attracted to and click on a few items in a recommended list. In addition, in multi-step (e.g., category-based) recommendation, the interests of users usually converge to a single branch in the end (in expectation).

When $m=1$, our branching RL reduces to standard episodic RL~\cite{azar2017minimax,jin2018q,zanette2019tighter} with transition probability $p^{\aug}$, such that $p^{\aug}(s_{\perp}|s,a)=1-q(s,a)$ and $p^{\aug}(s'|s,a)=q(s,a)p(s'|s,a)$ for any $s' \neq s_{\perp}$. In this case, our results match the state-of-the-art results for standard RL in both RM~\cite{azar2017minimax,zanette2019tighter} and RFE~\cite{menard2021fast,zhang2021nearly} settings.

\section{Properties of the Branching Markov Decision Process}\label{sec:property_study}

Before introducing our algorithms for branching RL, in this section, we first investigate special structural properties of branching MDP, which are critical to deriving tight (polynomial) regret and sample complexity guarantees.

\subsection{Branching Value Difference Lemma and Law of Total Variance} \label{sec:key_lemmas}

Different from standard episodic MDP~\cite{azar2017minimax,zanette2019tighter} where a trajectory is an $H$-step path, the trajectory of branching MDP is an $m$-ary tree with each node a state and each edge a state-base action pair. 
% to this structural difference, 
Thus, many analytical tools in standard MDP, e.g., value difference lemma~\cite{dann2017unifying} and law of total variance~\cite{jin2018q,zanette2019tighter}, cannot be directly applied to branching MDP. To handle this problem, we establish new fundamental techniques for branching MDP, including branching value difference lemma and branching law of total variance.
 
First, we present a branching value difference lemma.

\begin{lemma}[Branching Value Difference Lemma]\label{lemma:value_diff_main_text}
	For any two branching MDP $\cM'(\cS,A^{\univ},\cA,m,H,q',p',r)$ and $\cM''(\cS,A^{\univ},\cA,m,H,q'',p'',r)$, the difference in values under the same policy $\pi$ satisfies that
	\begin{align*}
	& V'^{\pi}_h(s)-V''^{\pi}_h(s) =
	\!\!\! \sum_{\sigma'=\emptyset}^{m^{\oplus(H-h)}}
	\! \sum_{\ell=1}^{m}
	\ex_{q'',p'',\pi} \Big[  \big( q'(s_{\tau},a_{\tau \oplus \ell}) \\& \!-\! q''(s_{\tau},a_{\tau \oplus \ell}) \big) \!\cdot\! r(s_{\tau},a_{\tau \oplus \ell}) + \big(q'(s_{\tau},a_{\tau \oplus \ell}) p'(s_{\tau},a_{\tau \oplus \ell}) \\& \!-\! q''(s_{\tau},a_{\tau \oplus \ell}) p''(s_{\tau},a_{\tau \oplus \ell}) \big)^{\top}  V'^{\pi}_{|\tau \oplus \ell|+1} \big| s_{\sigma}=s \Big],
	\end{align*}
	where $\tau:=\sigma \oplus \sigma'$.
\end{lemma}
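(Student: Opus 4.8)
The plan is to prove this by a telescoping argument over the $H$ layers of the tree, exactly paralleling the one-dimensional value difference lemma but carried out on the branching structure. I would start from the branching Bellman equation established earlier, which expresses $V'^{\pi}_h(s)$ in terms of the immediate triggered reward plus the expected continuation value $\sum_{a\in A} q'(s,a)\, p'(\cdot|s,a)^\top V'^{\pi}_{h+1}$, and likewise for $V''$. Subtracting the two Bellman recursions at a single state $s$ and action $A=\pi_h(s)=\{a_{\sigma\oplus 1},\dots,a_{\sigma\oplus m}\}$ gives, after adding and subtracting the mixed term $q''(s,a)\,p''(\cdot|s,a)^\top V'^{\pi}_{h+1}$,
\begin{align*}
V'^{\pi}_h(s)-V''^{\pi}_h(s) =& \sum_{\ell=1}^{m} \Big[ \big(q'(s,a_{\sigma\oplus\ell})-q''(s,a_{\sigma\oplus\ell})\big) r(s,a_{\sigma\oplus\ell}) \\ &+ \big(q'(s,a_{\sigma\oplus\ell})p'(s,a_{\sigma\oplus\ell}) - q''(s,a_{\sigma\oplus\ell})p''(s,a_{\sigma\oplus\ell})\big)^{\top} V'^{\pi}_{h+1} \Big] \\ &+ \sum_{\ell=1}^{m} q''(s,a_{\sigma\oplus\ell})\, p''(\cdot|s,a_{\sigma\oplus\ell})^\top \big(V'^{\pi}_{h+1}-V''^{\pi}_{h+1}\big).
\end{align*}
This is the key single-step identity: the first bracketed sum is exactly the $\sigma'=\emptyset$ term of the claimed formula (with $\tau=\sigma$), while the last sum is a weighted average, under the $(q'',p'')$-dynamics, of the value-difference one layer deeper.

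Next I would set up an induction (or equivalently unroll the recursion) on the layer index, running downward from $h=H+1$ where both value functions vanish. The crucial observation is that the weights $q''(s,a)\,p''(\cdot|s,a)$ appearing in the last sum are precisely the one-step branching transition weights of the $\cM''$ dynamics, so iterating the identity and stacking these weights along a root-to-node path reproduces the expectation operator $\ex_{q'',p'',\pi}[\,\cdot\,]$ over the tree generated by $(q'',p'',\pi)$. At each level of the unrolling, the ``leading'' bracketed term at the current node $s_\tau$ contributes exactly one summand $\sum_{\ell=1}^m[(q'-q'')r + (q'p'-q''p'')^\top V'^{\pi}_{|\tau\oplus\ell|+1}]$, and summing these contributions over all descendant nodes $\tau=\sigma\oplus\sigma'$ with $\sigma'$ ranging from $\emptyset$ up through $m^{\oplus(H-h)}$ yields the outer sum $\sum_{\sigma'=\emptyset}^{m^{\oplus(H-h)}}$ in the statement. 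I would make this bookkeeping precise by defining, for each descendant node at relative position $\sigma'$, the probability that the $\cM''$-trajectory reaches it, and noting that the running product of trigger-times-transition weights telescopes correctly into $\ex_{q'',p'',\pi}[\cdot\,|\,s_\sigma=s]$.

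The main obstacle I anticipate is the branching bookkeeping rather than any analytic subtlety: I must verify that enumerating nodes by the string index $\sigma'$ exactly matches the recursive expansion, i.e. that every descendant contributes once and with the correct reaching-probability weight, and in particular that nodes which have hit the ending state $s_\perp$ contribute zero automatically (which they do, since $q(s_\perp,\cdot)=0$ and $V^{\pi}_h(s_\perp)=0$, so those branches are silently pruned). Care is also needed so that the continuation value $V'^{\pi}_{|\tau\oplus\ell|+1}$ that appears is the \emph{primed} value $V'$, not $V''$ — this is forced by the particular choice of which mixed term to add and subtract, and flipping that choice would instead put $V''$ inside the sum with the expectation taken under $(q',p')$. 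Once the indexing is pinned down, the remaining steps are routine linearity-of-expectation manipulations, and the base case $V'^{\pi}_{H+1}=V''^{\pi}_{H+1}=0$ closes the induction.
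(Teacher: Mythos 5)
Your proposal is correct and follows essentially the same route as the paper's proof: subtract the branching Bellman recursions, add and subtract the mixed term $q''(s,a)\,p''(\cdot|s,a)^\top V'^{\pi}_{h+1}$ so that the primed value $V'$ stays in the leading term while the residual value difference propagates under the $(q'',p'')$ dynamics, and then unroll this identity over the tree layers to assemble the expectation $\ex_{q'',p'',\pi}[\cdot]$ over all descendant strings $\sigma'$. Your additional remarks on the $s_{\perp}$ pruning and on why the choice of mixed term forces $V'$ (rather than $V''$) inside the sum are correct and consistent with the paper's argument.
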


Using Lemma~\ref{lemma:value_diff_main_text} with $\cM'$ and $\cM''$ being the optimistic and true models, respectively, we can bound the difference between optimistic and true values by the deviations between optimistic and true trigger and transition probabilities, in expectation with respect to the true model.

Next, we provide a branching law of total variance, which is critical to analyzing the estimation error of transition.

\begin{lemma}[Branching Law of Total Variance]\label{lemma:LTV_main_text}
	For any policy $\pi$, 
	\begin{align}
	& \!\!\! \ex_{q,p,\pi} \Big[ \sum_{\sigma=\emptyset}^{m^{\oplus(H-1)}} \!\! \sum_{\ell=1}^{m} \var_{q,p}\sbr{V^{\pi}_{|\sigma \oplus \ell|+1}(s_{\sigma \oplus \ell}) | s_{\sigma},a_{\sigma \oplus \ell} } \Big] 
	\nonumber\\
	= & \ex_{q,p,\pi} \! \Big[ \! \Big( \!\!\!\!\!\!\sum_{\sigma=\emptyset}^{m^{\oplus (\! H-1 \!)}} \!\!\!\!\sum_{\ell=1}^{m} 
	q(s_{\sigma},a_{\sigma \oplus \ell}) r(s_{\sigma},a_{\sigma \oplus \ell}) \!-\!  V^{\pi}_{1}\!(s_{\emptyset}) \! \Big)^{\!\!2} \Big] \!\!\!\!\label{eq:LTV}
	\\
	\leq & \ex_{q,p,\pi} \Big[ \Big( \sum_{\sigma=\emptyset}^{m^{\oplus(H-1)}}  \indicator{ s_{\sigma} \neq s_{\perp}} \Big)^2 \Big] . \label{eq:LTV_ub}
	\end{align}
\end{lemma}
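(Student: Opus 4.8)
The plan is to recognize the claimed identity \eqref{eq:LTV} as a branching analogue of the classical law of total variance, proved by a Doob martingale decomposition along the layers of the tree, and then to derive the bound \eqref{eq:LTV_ub} by a crude pointwise estimate using Assumption~\ref{assumption:tri_prob}. First I would introduce the total reward-weighted return $M := \sum_{\sigma=\emptyset}^{m^{\oplus(H-1)}}\sum_{\ell=1}^m q(s_\sigma,a_{\sigma\oplus\ell})\, r(s_\sigma,a_{\sigma\oplus\ell})$, a random variable determined by the whole state tree. By the definition of $V_1^\pi$ in \eqref{eq:V_function2} we have $\ex_{q,p,\pi}[M \mid s_\emptyset] = V_1^\pi(s_\emptyset)$, so the right-hand side of \eqref{eq:LTV} is exactly $\var_{q,p,\pi}(M)$ (recall $s_\emptyset$ is fixed, so the subtracted term is the mean). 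The task then reduces to showing that this single variance splits into the layer-by-layer, node-by-node sum of conditional variances on the left.

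To obtain that decomposition, let $\cF_h$ be the $\sigma$-algebra generated by the states in layers $1,\dots,h$ (the nodes $s_\sigma$ with $|\sigma|\le h-1$), and set $M_h := \ex_{q,p,\pi}[M \mid \cF_h]$, so that $M_1 = V_1^\pi(s_\emptyset)$ is deterministic while $M_H = M$. Splitting $M$ into its per-layer reward contributions and applying the branching Bellman equation $V_h^\pi(s)=\sum_{a\in\pi_h(s)}q(s,a)\sbr{r(s,a)+p(\cdot|s,a)^\top V_{h+1}^\pi}$ to collapse, at each layer-$h$ node, the immediate reward plus the expected continuation value back into $V_h^\pi(s_\sigma)$, I would show the increment takes the clean form $M_h - M_{h-1} = \sum_{\sigma:\,|\sigma|=h-2}\sum_{\ell=1}^m \big(V_h^\pi(s_{\sigma\oplus\ell}) - \ex[V_h^\pi(s_{\sigma\oplus\ell})\mid\cF_{h-1}]\big)$, i.e.\ a sum of centered value fluctuations of the layer-$h$ children. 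Using that, given $\cF_{h-1}$, the children $\{s_{\sigma\oplus\ell}\}$ are mutually independent and each term is conditionally mean-zero, the martingale orthogonality together with the vanishing of the conditional cross terms gives $\ex[(M_h-M_{h-1})^2] = \ex\big[\sum_{\sigma:\,|\sigma|=h-2}\sum_\ell \var_{q,p}(V_h^\pi(s_{\sigma\oplus\ell})\mid s_\sigma,a_{\sigma\oplus\ell})\big]$, where the conditional variance reduces to dependence on $(s_\sigma,a_{\sigma\oplus\ell})$ by the Markov property. Summing $\var(M)=\sum_{h=2}^H\ex[(M_h-M_{h-1})^2]$ and reindexing (the deepest layer contributes $0$ since $V_{H+1}^\pi\equiv0$) reproduces \eqref{eq:LTV}.

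For the upper bound \eqref{eq:LTV_ub} I would argue surely and pointwise: at any node with $s_\sigma\neq s_\perp$, Assumption~\ref{assumption:tri_prob} gives $\sum_{\ell=1}^m q(s_\sigma,a_{\sigma\oplus\ell})\le m\cdot\frac1m=1$, and since $r\in[0,1]$ this node's contribution to $M$ is at most $1$, while nodes at $s_\perp$ contribute $0$. Hence $0 \le M \le \sum_{\sigma}\indicator{s_\sigma\neq s_\perp}$ pointwise, so $\var(M)\le \ex[M^2]\le \ex[\sbr{\sum_\sigma \indicator{s_\sigma\neq s_\perp}}^2]$, which is \eqref{eq:LTV_ub}.

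The main obstacle I anticipate is the increment computation of the second paragraph: one must verify both that the Bellman collapse matches the telescoping of the Doob martingale across an exponentially branching layer, and --- crucially, unlike standard single-path RL --- that the value fluctuations of the $m^{h-1}$ children within a layer are conditionally independent given the previous layer, so all cross terms vanish and only the diagonal conditional variances survive. Keeping the string-index bookkeeping and layer ranges consistent, so that the root term and the vanishing deepest-layer term are handled correctly, is the remaining point of care.
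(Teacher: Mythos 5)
Your proposal is correct and follows essentially the same route as the paper's own proof: the paper likewise establishes the equality by centering each child's value at its conditional mean $q(s_\sigma,a_{\sigma\oplus\ell})p(\cdot|s_\sigma,a_{\sigma\oplus\ell})^\top V^{\pi}_{|\sigma\oplus\ell|+1}$, cancelling all cross terms via the within-layer conditional independence of triggers/transitions plus the Markov property, and telescoping with the branching Bellman equation — your Doob-martingale decomposition is the same computation read from the variance side rather than from the sum-of-conditional-variances side. The inequality step is also identical to the paper's: $\var(M)\le\ex[M^2]$ combined with the pointwise bound $\sum_{\ell=1}^m q(s_\sigma,a_{\sigma\oplus\ell})r(s_\sigma,a_{\sigma\oplus\ell})\le \indicator{s_\sigma\neq s_{\perp}}$ from Assumption~\ref{assumption:tri_prob} and $q(s_{\perp},a)=0$.
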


Here $\var_{q,p}\sbr{V^{\pi}_{|\sigma \oplus \ell|+1}(s_{\sigma \oplus \ell}) | s_{\sigma},a_{\sigma \oplus \ell} }$ denotes the variance of value $V^{\pi}_{|\sigma \oplus \ell|+1}(s_{\sigma \oplus \ell})$ with respect to $s_{\sigma \oplus \ell}$, which depends on trigger probability $q(s_{\sigma},a_{\sigma \oplus \ell})$ and transition probability $p(\cdot|s_{\sigma},a_{\sigma \oplus \ell})$, conditioning on $(s_{\sigma},a_{\sigma \oplus \ell})$.

\textbf{Remark 1.}
Lemma~\ref{lemma:LTV_main_text} exhibits that under branching MDP, the sum of conditional variances over all state-base action pairs is equal to the overall variance considering the whole trajectory, shown by Eq.~\eqref{eq:LTV}. 
Furthermore, the overall variance can be bounded by the total number of regular (triggered) states, revealed by Eq.~\eqref{eq:LTV_ub}.

From Lemma~\ref{lemma:LTV_main_text}, we have that to bound the estimation error of transition, which is related to the sum of conditional variances, it suffices to bound the total number of triggered states in a trajectory tree (discussed in the following).

%In addition, since $r(s,a) \in [0,1], q(s,a)\leq \frac{1}{m}$ for any $(s,a) \in \cS \times A^{\univ}$ and $q(s_{\perp},a)=0$ for any $a \in A^{\univ}$, we can further upper bound Eq.~\eqref{eq:LTV} by

\subsection{The Number of Triggered States}

In this subsection, we show that with Assumption~\ref{assumption:tri_prob} that only constrains the first moment of trigger distribution, we can bound both the first and second moments of the number of triggered states in a trajectory tree.

%In the following subsection, we analyze the total number of regular state-base action pairs in an episode, i.e., the triggered nodes in a trajectory tree.

\begin{lemma}[The Number of Triggered States]\label{lemma:triggered_nodes_main_text}
	For any policy $\pi$, 
	\begin{align}
	&\ex_{q,p,\pi} \Big[  \sum_{\sigma=\emptyset}^{m^{\oplus(H-1)}}  \indicator{ s_{\sigma} \neq s_{\perp}} \Big] \leq H , \label{eq:triggered_nodes_ex}
	\\
	&\ex_{q,p,\pi} \Big[\Big( \sum_{\sigma=\emptyset}^{m^{\oplus(H-1)}}  \indicator{ s_{\sigma} \neq s_{\perp}} \Big)^2 \Big] \leq 3 H^2 . \label{eq:triggered_nodes_sec_moment}
	\end{align}
\end{lemma}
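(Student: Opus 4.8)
The plan is to write $N_h := \sum_{|\sigma|=h-1}\indicator{s_\sigma\neq s_\perp}$ for the number of triggered (regular) states in layer $h$, so that the quantity to bound is $W := \sum_{\sigma=\emptyset}^{m^{\oplus(H-1)}}\indicator{s_\sigma\neq s_\perp}=\sum_{h=1}^H N_h$, with $N_1=1$ because $s_\emptyset$ is regular, and to treat $(N_h)_{h\in[H]}$ as a (sub)critical branching process whose first two moments I control layer by layer. Let $\cF_h$ be the $\sigma$-field generated by the realized tree through layer $h$. Conditioned on $\cF_h$, each triggered node $\sigma$ in layer $h$ emits $m$ base actions, the pair $(s_\sigma,a_{\sigma\oplus\ell})$ is triggered independently with probability $q(s_\sigma,a_{\sigma\oplus\ell})$, and only triggered pairs create a regular child, so $N_{h+1}=\sum_{\sigma}\sum_{\ell=1}^m B_{\sigma,\ell}$ for independent Bernoulli $B_{\sigma,\ell}$. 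Assumption~\ref{assumption:tri_prob} would then give the two conditional estimates
\[
\E[N_{h+1}\mid\cF_h]=\sum_{\sigma}\sum_{\ell=1}^m q(s_\sigma,a_{\sigma\oplus\ell})\le N_h, \qquad \var(N_{h+1}\mid\cF_h)=\sum_{\sigma}\sum_{\ell=1}^m q(1-q)\le N_h .
\]

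\textbf{First moment.} Taking expectations in the first estimate gives $\E[N_{h+1}]\le\E[N_h]$, so by induction $\E[N_h]\le N_1=1$ for every $h$, and summing over layers yields $\E[W]=\sum_{h=1}^H\E[N_h]\le H$, which is exactly \eqref{eq:triggered_nodes_ex}.

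\textbf{Second moment.} Next I would combine the two estimates into $\E[N_{h+1}^2\mid\cF_h]=\var(N_{h+1}\mid\cF_h)+(\E[N_{h+1}\mid\cF_h])^2\le N_h+N_h^2$; taking expectations and using $\E[N_h]\le1$ gives the scalar recursion $\E[N_{h+1}^2]\le\E[N_h^2]+1$, hence $\E[N_h^2]\le h$. For a cross term with $h\le h'$, iterating the first estimate gives the supermartingale bound $\E[N_{h'}\mid\cF_h]\le N_h$, so by the tower rule $\E[N_hN_{h'}]=\E\big[N_h\,\E[N_{h'}\mid\cF_h]\big]\le\E[N_h^2]$. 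The last step expands $W^2=\sum_h N_h^2+2\sum_{h<h'}N_hN_{h'}$ and assembles these bounds, paying attention to the boundary value $N_1=1$.

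\textbf{Main obstacle.} The hard part will be the second moment, and specifically the cross-layer terms. Summing the crude bound $\E[N_hN_{h'}]\le\E[N_h^2]\le h$ over all pairs yields $\sum_{h\le h'}\min(h,h')=\Theta(H^3)$, which overshoots the claimed $O(H^2)$; so the crude tower-rule bound cannot be used as is. To reach $3H^2$ I expect to exploit that $q\le 1/m$ pins the process exactly at criticality, splitting $\E[W^2]$ into a ``mean'' part $(\E W)^2\le H^2$ and a ``fluctuation'' part $\sum_h\var(N_h)$, and to control the positive correlation between distant layers so that it does not accumulate across the $H$ levels. Making this correlation bookkeeping tight enough to keep the constant at $3$, rather than letting the cross terms inflate the bound, is where essentially all the difficulty lies.
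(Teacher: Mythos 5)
Your first-moment argument and your per-layer second-moment recursion are correct and essentially identical to the paper's own proof: the paper also reduces to the relaxed critical model $q\equiv q^*=\tfrac1m$, uses $\omega_h\mid\omega_{h-1}\sim\binomial(m\omega_{h-1},q^*)$, and derives exactly your two estimates $\ex[\omega_h]=1$ and $\ex[\omega_h^2]\le h$. The genuine gap is the one you flag yourself: you never actually bound the cross-layer terms, so \eqref{eq:triggered_nodes_sec_moment} is not established by your proposal. The paper fills this hole by a symmetry/path-counting argument claiming $\ex[\omega_i\omega_j]\le 2$ for all $i<j$ (its Eq.~\eqref{eq:omega_ij}): it fixes the level-$i$ node $1^{\oplus(i-1)}$, splits the level-$j$ nodes into descendants and non-descendants of that node, and treats every non-descendant as \emph{independent} of $W_{1^{\oplus(i-1)}}$.

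However — and you very nearly reached this — your ``crude'' tower-rule bound is not crude at all: at criticality it is an identity. If $q\equiv\tfrac1m$ and triggered transitions never land on $s_{\perp}$ (an instance fully permitted by Assumption~\ref{assumption:tri_prob}; the paper's own lower-bound construction is of this type), then $\E[N_{h'}\mid\cF_h]=N_h$ exactly, so for $h\le h'$
\begin{align*}
\E[N_hN_{h'}]\;=\;\E\big[N_h^2\big]\;=\;1+(h-1)\Big(1-\tfrac1m\Big),
\end{align*}
which grows linearly in $h$ and already exceeds $2$ for $m=2,\ h\ge 4$ (e.g.\ $\E[N_4N_5]=2.5$). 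The paper's independence claim is false precisely because non-descendant level-$j$ nodes that share a partial ancestral prefix with $1^{\oplus(i-1)}$ are positively correlated with it; this is where its proof breaks. Summing the exact cross terms at criticality gives
\begin{align*}
\E[W^2]\;=\;H^2+\Big(1-\tfrac1m\Big)\frac{H(H-1)(2H-1)}{6}\;=\;\Theta(H^3),
\end{align*}
which exceeds $3H^2$ already for $m=2$, $H\ge 14$. So the finer ``correlation bookkeeping'' you hoped would keep the constant at $3$ does not exist: the obstacle you identified is not an artifact of a loose bound but an exact feature of the critical branching process. Completed honestly, your approach refutes \eqref{eq:triggered_nodes_sec_moment} at the stated constant (for large $H$) rather than proving it, and the discrepancy traces back to the flawed independence step in the paper's own cross-term bound.
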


\textbf{Remark 2.}
Eq.~\eqref{eq:triggered_nodes_ex} gives a universal upper bound of value function as
$
V^{\pi}_h(s) \leq \mbr{ \sum_{\sigma=\emptyset}^{m^{\oplus(H-1)}}  \indicator{ s_{\sigma} \neq s_{\perp}}  } \leq H
$ for any $s\in\cS,h \in [H],\pi$.
Moreover, Eq.~\eqref{eq:triggered_nodes_sec_moment} provides a sharp upper bound for overall variance, as well as the sum of conditional variances of transition (by plugging Eq.~\eqref{eq:triggered_nodes_sec_moment} into Lemma~\ref{lemma:LTV_main_text}). 
To our best knowledge, this second moment result is novel.

Lemma~\ref{lemma:triggered_nodes_main_text} shows that despite the exponentially increasing nodes in branching MDP, its value and overall variance (estimation error) will not explode. This critical property enables us to avoid an exponentially-large regret or sample complexity.

\textbf{Novel Analysis for Triggered States.}
The analysis of Lemma~\ref{lemma:triggered_nodes_main_text} is highly non-trivial. We first relax all regular trigger probabilities to $\frac{1}{m}$, and then investigate the number of triggered states for each step individually.
While we can show that the number of triggered states at each step is a conditional Binomial random variable, the distribution of their sum is too complex to express.
This incurs a non-trivial challenge on analyzing the second moment of the total number of triggered states.
To tackle this challenge, we investigate the correlation of triggered states between any two steps, by exploiting the structure of branching MDP.

%Due to space limit, we present a proof sketch here and defer the full proof to Appendix~[].
 
\emph{Proof sketch.}
Under Assumption~\ref{assumption:tri_prob}, to bound the total number of triggered (regular) states for any branching MDP and policy $\pi$, it suffices to bound it under a relaxed model $\cM^*$ with $q(s,a)=q^*:=\frac{1}{m}$ for all $(s,a) \in \cS \setminus \{s_{\perp}\} \times A^{\univ}$. 
Let $\omega_h$ denote the number of triggered states at each step $h$ under $\cM^*$, and $\omega:=\sum_{h=1}^{H} \omega_h$. 
Below we prove that $\ex[\omega] \leq H$ and $\ex[\omega^2] \leq 3H^2$.

For $h=1$, $\omega_h=1$ deterministically. 
For $h \geq 2$, $\omega_h | \omega_{h-1} \sim \binomial(m \omega_{h-1} ,q^*)$. According to the properties of Binomial distribution and $q^*:=\frac{1}{m}$, for $h \geq 2$,
\begin{align*}
	&\ex \mbr{\omega_h} \hspace*{0.5em} =  mq^* \ex \mbr{\omega_{h-1}} = 1 ,
	\\
	&\ex \mbr{(\omega_h)^2} \!=\!  m q^* (1-q^*) \ex \mbr{\omega_{h-1}}  \!+\! m^2 (q^*)^2 \ex \mbr{(\omega_{h-1})^2} 
	\\
	& \hspace*{3.85em} = (1-q^*) + \ex \mbr{(\omega_{h-1})^2} \leq h .
\end{align*}

Hence, we have that $\ex[\omega]=\sum_{h=1}^{H} \ex [\omega_h]=H$, and 

\begin{align}
\ex[\omega^2] = & \sum_{h=1}^{H} \ex [(\omega_h)^2] + 2 \sum_{1<i,j<H} \ex [\omega_i \omega_j] 
\nonumber\\
\leq & \frac{H(H+1)}{2} + 2 \sum_{1<i,j<H} \ex [\omega_i \omega_j] . \label{eq:sec_moment_omega}
\end{align}

Now, the challenge falls on how to bound $\ex [\omega_i \omega_j]$ for any $1<i,j<H$.
Let $W_{\sigma}$ be a Bernoulli random variable denoting whether state $s_{\sigma}$ is triggered for any index string $\sigma$. Then, we can write $\ex [\omega_i \omega_j]$ as 
\begin{align}
%	\ex [\omega_i \omega_j] \! = \! & 
	& \ex \Big[  \big( \sum_{\sigma=1^{\oplus(i-1)}}^{m^{\oplus(i-1)}} W_{\sigma} \big) \cdot  \big(\sum_{\sigma'=1^{\oplus(j-1)}}^{m^{\oplus(j-1)}} W_{\sigma'} \big) \Big]
	\nonumber\\
	\overset{\textup{(a)}}{=} & m^{i-1} \ex \Big[   W_{1^{\oplus(i-1)}}  \big(\sum_{\sigma'=1^{\oplus(j-1)}}^{m^{\oplus(j-1)}} W_{\sigma'} \big) \Big]
	\nonumber\\
	= & m^{i-1} \ex \Big[ W_{1^{\oplus(i-1)}} \big( \hspace*{-1.5em} \sum_{\begin{subarray}{c}
	\sigma'=1^{\oplus(j-1)}
	\\ \sigma' \textup{starts with } 1^{\oplus(i-1)}
	\end{subarray} }^{m^{\oplus(j-1)}} \hspace*{-2em} W_{\sigma'} + \hspace*{-1em} \sum_{\begin{subarray}{c}
	\sigma'=1^{\oplus(j-1)}
	\\ \sigma' \textup{does not start with } 1^{\oplus(i-1)}
	\end{subarray} }^{m^{\oplus(j-1)}} \hspace*{-3.5em} W_{\sigma'} \big) \Big] 
	\nonumber\\
	\overset{\textup{(b)}}{\leq} & m^{i-1}  \sbr{m^{j-i} (q^{*})^{i-1+j-i} + m^{j-1} (q^{*})^{i-1+j-1} }
	\nonumber\\
	= & 2 . \label{eq:omega_ij}
\end{align}

Here (a) comes from the symmetry of trajectory tree. (b) is due to that at step $j$, the children states of $s_{1^{\oplus(i-1)}}$ have dependency on it and the other states are independent of it, and $\ex [W_{\sigma}W_{\sigma'}] =\Pr[W_{\sigma}=1, W_{\sigma'}=1]$ for any $\sigma,\sigma'$.
By plugging Eq.~\eqref{eq:omega_ij} into Eq.~\eqref{eq:sec_moment_omega}, we have 
$
\ex[\omega^2]
\leq \frac{H(H+1)}{2} + 4 \frac{H(H-1)}{2} \leq 3H^2
$. 
Thus, we obtain Lemma~\ref{lemma:triggered_nodes_main_text}.
\hfill $\square$

\section{Branching Reinforcement Learning with Regret Minimization}

In this section, we study branching RL-RM, and propose an efficient algorithm $\algtteuler$ with a near-optimal regret guarantee for large enough $K$. A lower bound is also established to validate the optimality of $\algtteuler$.

\begin{algorithm}[t]
	\caption{$\algtteuler$} \label{alg:tt_euler}
	\begin{algorithmic}[1]
		\STATE {\bfseries Input:}  confidence parameter $\delta$,
		$\delta' := \frac{1}{6}\delta$, $L:=\log(\frac{SNH(m^{H} \vee K)}{\delta'})$. Initialize $\bar{V}^k_h(s_{\perp})=\underline{V}^k_h(s_{\perp}) = 0 ,\ \forall h \in [H],k $.
		\FOR{$k=1,2,\dots$}
		\FOR{$h=H,H-1,\dots,1$}
		\FOR{$s \in \cS \setminus \{s_{\perp}\}$}
		\FOR{$a \in A^{\univ}$}
			\STATE $\hat{q}^k(s,a) \!\leftarrow\! \frac{J^k_{\sumtext}(s,a)}{n^k(s,a)}$.  $b^{k,q}(s,a) \!\leftarrow\! 4\sqrt{\frac{L}{n^k(s,a)}}$; \label{line:rm_hat_q_bonus_q}
			\STATE $\hat{p}^k(s'|s,a) \leftarrow \frac{P^k_{\sumtext}(s'|s,a)}{J^k_{\sumtext}(s,a)}, \ \forall s' \in \cS $; \label{line:rm_hat_p}
			\STATE $b^{k,qpV}(s,a) \leftarrow 4\sqrt{\frac{\var_{s'}\sbr{\bar{V}^{k}_{h+1}(s')} L }{n^k(s,a)}} 
			+  4\sqrt{\frac{\!\ex_{s'} \!\mbr{\sbr{ \bar{V}^{k}_{h+1}(s')-\underline{V}^{k}_{h+1}(s') }^2 } \!L }{n^k\!(s,a)}} \!+\! \frac{36 H L}{n^k \!(s,a)}$;
			\label{line:rm_bonus_qpV_ub}
			\STATE $f^k_h(s,a) \leftarrow (\hat{q}^k(s,a)+b^{k,q}(s,a)) r(s,a) + \hat{q}^k(s,a) \hat{p}^k(\cdot|s,a)^\top \bar{V}^k_{h+1}+b^{k,qpV}(s,a)$; \label{line:rm_component_function}
		\ENDFOR
		\STATE $\bar{V}^k_h(s) \!\leftarrow\! \min\{\max_{A \in \cA} \sum_{a \in A} \! f^k_h(s,a),\  H \}$; 
		\label{line:rm_V_function_ub}
		\STATE $\pi^k_h(s) \leftarrow \argmax_{A \in \cA} \sum_{a \in A} f^k_h(s,a)$; \label{line:rm_pi}
%		\STATE $b^{k,qp\underline{V}} \leftarrow 2\sqrt{\frac{\var_{s_{h+1} \sim \hat{q},\hat{p}} \sbr{ \underline{V}^{k}_{h+1}(s_{h+1})} L }{n^k(s,a)}} + 2\sqrt{\frac{\ex_{s_{h+1} \sim \hat{q},\hat{p}} \mbr{ \bar{V}^{k}_{h+1}(s_{h+1})-\underline{V}^{k}_{h+1}(s_{h+1}) }^2 L }{n^k(s,a)}} + \frac{9 H L}{n^k(s,a)}$ \label{line:rm_bonus_qpV_lb}
		\STATE $\underline{V}^k_h(s) \leftarrow \max\{\sum_{a \in \pi^k_h(s)} ((\hat{q}^k(s,a)-b^{k,q}(s,a)) r(s,a) + \hat{q}^k(s,a) \hat{p}^k(\cdot|s,a)^\top \underline{V}^k_{h+1}-b^{k,qpV}(s,a)),\  0\}$; \label{line:rm_V_function_lb}
		\ENDFOR
		\ENDFOR
		\STATE Take policy $\pi^k$ and observe the trajectory; \label{line:rm_execute_policy}
		\ENDFOR
	\end{algorithmic}
\end{algorithm}

\subsection{Algorithm $\algtteuler$}

The algorithm design for branching RL faces two unique \emph{challenges}: (i) Computation efficiency. Since the action space $\cA$ can be combinatorially large, it is inefficient to explicitly maintain $Q$ function as in standard RL~\cite{azar2017minimax,zanette2019tighter}; 
(ii) Tight optimistic estimator. Naively adapting standard RL algorithms~\cite{azar2017minimax,zanette2019tighter} and adding optimistic bonuses on trigger and transition probabilities, respectively, will lead to a loose regret bound (see Appendix~\ref{apx:discussion_bonus} for details).   
To handle these challenges, $\algtteuler$ only maintains the components of $Q$ function, and uses a maximization oracle to directly calculate $V$ function. In addition, $\algtteuler$ considers trigger and transition distributions as a whole and adds a composite bonus.
%which avoids the enumeration of Q values on $\cA$.

The procedure of $\algtteuler$ (Algorithm~\ref{alg:tt_euler}) is as follows.
In each episode $k$, we first calculate the empirical trigger and transition probabilities $\hat{q}^k,\hat{p}^k$, a bonus for trigger probability $b^{k,q}$, and a composite bonus for trigger and transition probabilities $b^{k,qpV}$ (Lines~\ref{line:rm_hat_q_bonus_q}-\ref{line:rm_bonus_qpV_ub}). Here $n^k(s,a)$, $J^k_{\sumtext}(s,a)$ and $P^k_{\sumtext}(s'|s,a)$ denote the number of times $(s,a)$ was visited, the number of times $(s,a)$ was successfully triggered, and the number of times the agent transitioned to $s'$ from $(s,a)$ up to episode $k$, respectively. 
Then, we calculate a component function $f^k_h(s,a)$, which represents the contribution to value function from each $(s,a)$ (Line~\ref{line:rm_component_function}). 

We allow $\algtteuler$ to access a maximization oracle which can efficiently calculate $\max_{A \in \cA} \sum_{a \in A} w(a)$ and $\argmax_{A \in \cA} \sum_{a \in A} w(a)$ for any vector $\bw \in \R^{N}$ ($N:=|A^{\univ}|$). Since the objective function $\sum_{a \in A} w(a)$ is linear, such oracle exists for many combinatorial decision classes, e.g., all $m$-cardinality subsets and $m$-cardinality matchings.
%
%We allows $\algtteuler$ to access a maximization oracle such that for any weight function $\phi: A^{\univ} \mapsto \R$, $\mathtt{Oracle}(\phi,\cA)=(\argmax_{A\in\cA} \sum_{a \in A} \phi(a), \max_{A\in\cA} \sum_{a \in A} \phi(a))$ returns the super action with the maximum sum of weights of its containing base actions and this maximum value. Such oracle exists for many combinatorial action space $\cA$, e.g., a collection of all $m$-cardinality subsets and a collection of all matchings.
%By calling such maximization oracle with $f^k_h(s,a)$, 
%
By utilizing this oracle with $f^k_h(s,a)$, we can efficiently calculate the optimistic value function $\bar{V}^k_h(s)$
and policy $\pi^k_h$, and further compute the pessimistic value function $\underline{V}^k_{h}$  (Lines~\ref{line:rm_V_function_ub}-\ref{line:rm_V_function_lb}).
After figuring out $\bar{V}^k_h(s),\underline{V}^k_{h},\pi^k_h(s)$ for all $s,h$, we execute policy $\pi^k$ in episode $k$ (Line~\ref{line:rm_execute_policy}).

\textbf{Computation Efficiency.}
We remark that the computation complexity of $\algtteuler$ is $O(S^2 N)$, instead of expensive $O(S^2 |\cA|)$ as one may suffer by naively adapting standard RL algorithms~\cite{azar2017minimax,zanette2019tighter}.
This advantage is due to that $\algtteuler$ only maintains the component function instead of the $Q$ function, and utilizes a maximization oracle to directly compute $V$ function. 

%In other words, $\algtteuler$ separates the online algorithm design and offline computation hardness, and tackle this hardness by calling existing oracles.
%Such offline maximization oracles are common for many combinatorial decision class, e.g., the shortest $s$-$t$ path~\cite{shortest_path}, maximum matching~\cite{maximum_matching} and minimum spanning tree~\cite{minimum_spanning_tree} algorithms.

\subsection{Regret Upper Bound for $\algtteuler$}

Now we provide the regret guarantee for $\algtteuler$.

\begin{theorem}[Regret Upper Bound] \label{thm:regret_ub}
	With probability at least $1-\delta$, for any episode $K>0$, the regret of algorithm $\algtteuler$ is bounded by
	$
	O ( H \sqrt{SNK} \log (\frac{SNH(m^{H} \vee K)}{\delta} ) ) .
	$
	In particular, when $K\geq m^H$, the regret is bounded by
	\begin{align*}
		O \sbr{ H \sqrt{SNK} \log\sbr{\frac{SNHK}{\delta} } }
	\end{align*}
\end{theorem}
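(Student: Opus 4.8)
The plan is to run a standard optimism-based regret decomposition, but carefully exploiting the three branching-specific tools developed earlier: the branching value difference lemma (Lemma~\ref{lemma:value_diff_main_text}), the branching law of total variance (Lemma~\ref{lemma:LTV_main_text}), and the second-moment bound on triggered states (Lemma~\ref{lemma:triggered_nodes_main_text}). First I would establish the \emph{optimism/pessimism invariant}: on a high-probability event (governed by the concentration of $\hat q^k, \hat p^k$ and the Bernstein-type bonuses $b^{k,q}, b^{k,qpV}$ defined in Lines~\ref{line:rm_hat_q_bonus_q}--\ref{line:rm_bonus_qpV_ub}), the computed value functions sandwich the truth, i.e.\ $\underline V^k_h(s) \le V^{\pi^k}_h(s) \le V^*_h(s) \le \bar V^k_h(s)$ for all $s,h,k$. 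This follows by backward induction on $h$: the base case $h=H+1$ is the initialization to $0$, and the inductive step uses that the composite bonus $b^{k,qpV}$ dominates the one-step deviation of $q\,p^\top V_{h+1}$ coming from the branching Bellman equation, applied to each component $f^k_h(s,a)$, then passed through the linear maximization oracle.

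Given optimism, the per-episode regret is bounded by $V^*_1(s^k_\emptyset) - V^{\pi^k}_1(s^k_\emptyset) \le \bar V^k_1(s^k_\emptyset) - \underline V^k_1(s^k_\emptyset)$. The core step is to unroll this gap using the branching value difference lemma applied to the pair $(\bar V^k, \underline V^k)$ under the true model: this expresses the gap as an expectation, over the true trajectory tree, of a sum over all triggered state-base action pairs of the local bonuses $b^{k,q}(s,a)r(s,a) + b^{k,qpV}(s,a)$ plus a lower-order transition-coupling term. Summing over $k\in[K]$ and converting the trajectory-tree expectation into empirical visitation counts (via a martingale/Azuma argument, using that $V \le H$ from Remark~2), the dominant contribution becomes $\sum_{k,h}\sum_{(s,a)\text{ triggered}} b^{k,qpV}(s,a)$, whose leading term scales like $\sum \sqrt{\var_{s'}[\bar V^k_{h+1}(s')]\,L / n^k(s,a)}$.

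The main obstacle — and the place where all three lemmas must be combined — is bounding this variance-weighted sum \emph{without} incurring an exponential factor in the exponentially-large tree. I would apply Cauchy--Schwarz to split it into $\sqrt{(\sum_{k,h,(s,a)} \var_{s'}[\bar V^k_{h+1}(s')])\cdot(\sum_{k,h,(s,a)} L/n^k(s,a))}$. The second factor is the familiar $O(SNK\,L)$-type sum over visitation counts. The first factor is exactly where the branching law of total variance (Lemma~\ref{lemma:LTV_main_text}) replaces the per-step variance sum by the overall trajectory variance, which Eq.~\eqref{eq:LTV_ub} together with the second-moment bound Eq.~\eqref{eq:triggered_nodes_sec_moment} caps at $O(H^2)$ per episode, hence $O(H^2 K)$ in total — crucially polynomial despite the $m^{H-1}$ nodes. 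Multiplying the two factors and taking the square root yields the $H\sqrt{SNK}$ scaling; the remaining lower-order terms (the $36HL/n^k$ term and the reward bonus $b^{k,q}$) contribute only logarithmic or lower-order factors, absorbed into the stated $\log(\tfrac{SNH(m^H\vee K)}{\delta})$. Finally, for $K \ge m^H$ the $m^H$ inside the logarithm is dominated by $K$, giving the cleaner bound $O(H\sqrt{SNK}\,\log(\tfrac{SNHK}{\delta}))$. I expect the delicate accounting to be controlling the dependence between $\bar V^k$ and the sampled transitions (the usual obstruction to a clean Bernstein analysis), which is handled by the standard optimistic-variance surrogate $\var_{s'}[\bar V^k_{h+1}]$ in the bonus plus the $\ex_{s'}[(\bar V^k - \underline V^k)^2]$ correction term in $b^{k,qpV}$.
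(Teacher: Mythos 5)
Your proposal is correct and follows essentially the same route as the paper's proof: optimism/pessimism by backward induction under the composite Bernstein bonuses, a regret decomposition via the branching value difference lemma, and the crucial step of bounding the variance-weighted bonus sum by Cauchy--Schwarz together with the pigeonhole visitation bound ($O(SN\log(KH))$, not $O(SNK)$ as you loosely wrote, though your final $H\sqrt{SNK}$ arithmetic presumes the correct bound), the branching law of total variance, and the $O(H^2)$ second-moment bound on triggered states. The only cosmetic differences are that you pivot on $\bar{V}^k_1-\underline{V}^k_1$ (which additionally requires proving $\underline{V}^k_1\le V^{\pi^k}_1$), whereas the paper bounds $\bar{V}^k_1-V^{\pi^k}_1$ directly via the optimistic-versus-true model difference and reserves the optimism--pessimism gap for the lower-order correction terms, and that the variance of the optimistic $\bar{V}^k$ must first be converted to the variance of $V^*$ through the concentration event $\cE_{\textup{var}}$ before the law of total variance can be invoked---a step you gesture at but do not spell out.
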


\textbf{Remark 3.}
Theorem~\ref{thm:regret_ub} shows that, despite the exponentially-large trajectory of branching RL, $\algtteuler$ enjoys a regret only polynomial in problem parameters. For large enough $K$ such that $K\geq m^H$, Theorem~\ref{thm:regret_ub} matches the lower bound (presented in Section~\ref{sec:regret_lb}) up to logarithmic factors. In addition, when branching RL reduces to standard RL (i.e., $m=1$), our result also matches the state-of-the-arts~\cite{azar2017minimax,zanette2019tighter}.
%, which validates that $\algtteuler$ is near-optimal.

\textbf{Branching Regret Analysis.}
In contrast to standard RL~\cite{azar2017minimax,zanette2019tighter}, we derive a tree-structured regret analysis based on special structural properties of branching RL in Section~\ref{sec:property_study}.

Using the branching value difference lemma (Lemma~\ref{lemma:value_diff_main_text}), we can decompose the regret into the estimation error from each regular state-base action pair as follows: 
\begin{align*}
&\regret(K) \leq \sum_{k=1}^{K}  \sum_{\sigma=\emptyset}^{m^{\oplus(H-1)}} \sum_{\ell=1}^{m} \sum_{(s,a),s\neq s_{\perp}} w_{k,\sigma,\ell}(s,a) \cdot
\nonumber\\
& \Big[ \underbrace{ \sbr{ \tilde{q}^k(s,a) - q(s,a) } r(s,a) }_{\term{1:\ Triggered\ reward}} 
\nonumber\\ 
&+ \underbrace{\sbr{ \tilde{q}^k(s,a) \tilde{p}^k(\cdot|s,a) - \hat{q}^k(s,a) \hat{p}^k(\cdot|s,a) }^\top \bar{V}^{k}_{|\sigma \oplus \ell|+1}}_{\term{2:\ Triggered\ transition\ optimism}} 
\nonumber\\ 
&+ \underbrace{\sbr{ \hat{q}^k(s,a) \hat{p}^k(\cdot|s,a) - q(s,a) p(\cdot|s,a) }^\top V^{*}_{|\sigma \oplus \ell|+1}}_{\term{3:\ Triggered\ transition\ estimation}} 
\nonumber\\
& \!\!+\!\! \underbrace{ \sbr{ \hat{q}^k\!(s,a) \hat{p}^k\!(\cdot|s,a) \!\!-\!\! q(s,a) p(\!\cdot|s,a\!) }^{\!\!\!\top} \!\!\! \sbr{\! \bar{V}^{k}_{|\sigma \oplus \ell|+1} \!\!-\!\! V^{*}_{|\sigma \oplus \ell|+1} \!} \!\!}_{\term{4:\ Lower\ order\ term}}  \Big] \!.
\end{align*}
Here $w_{k,\sigma,\ell}(s,a)$ denotes the probability that $(s_{\sigma},a_{\sigma \oplus \ell})=(s,a)$ in episode $k$, and $\tilde{q}^k$ and $\tilde{p}^k$ represent optimistic trigger and transition probabilities, respectively.
In this decomposition, we address the estimation error for triggered reward ($\term{1}$) and triggered transition ($\mathtt{Terms\ 2,3}$) separately, and consider trigger and transition probabilities as a whole distribution (in $\mathtt{Terms\ 2,3}$).
The dominant terms are $\mathtt{Terms\ 2,3}$, which stand for the estimation error for triggered transition and depend on the sum of conditional variances. Using branching law of total variance (Lemma~\ref{lemma:LTV_main_text}) and the second moment bound of triggered states (Lemma~\ref{lemma:triggered_nodes_main_text}), we can bound $\mathtt{Terms\ 2,3}$ by only $O(H^2)$ despite  exponential state-base action pairs.

We note that it is necessary to separately address triggered reward and triggered transition, and consider trigger and transition as a whole distribution in bonus design (Line~\ref{line:rm_bonus_qpV_ub}) and regret decomposition. 
A naive adaption of standard RL algorithm~\cite{zanette2019tighter}, which adds bonuses on trigger and transition probabilities respectively, i.e., replacing Line~\ref{line:rm_component_function} with
$
f^k_h(s,a) \leftarrow  \sbr{\hat{q}^k(s,a)+b^{k,q}(s,a)} \cdot (r(s,a) +  \hat{p}^k(\cdot|s,a)^\top \bar{V}^k_{h+1} + b^{k,pV}(s,a)) ,
$ 
will suffer an extra factor $\sqrt{H}$ in the regret bound. Please see  \OnlyInFull{Appendix~\ref{apx:discussion_bonus}}\OnlyInShort{the supplementary material} for more discussion.

\subsection{Regret Lower Bound} \label{sec:regret_lb}

In this subsection, we provide a lower bound for branching RL-RM which is polynomial in problem parameters and demonstrates the optimality of  $\algtteuler$.

\begin{theorem}[Regret Lower Bound] \label{thm:regret_lb}
	There exists an instance of branching RL-RM, where any algorithm must have
	$
	\Omega ( H\sqrt{SNK} )
	$ regret.
\end{theorem}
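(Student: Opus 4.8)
The plan is to exhibit a single hard family of branching MDPs and drive a standard change-of-measure (information-theoretic) argument. The key observation is that the target rate $\Omega(H\sqrt{SNK})$ carries \emph{no} dependence on $m$, which already suggests that the branching neither increases nor decreases the statistical difficulty. Indeed, as noted after Assumption~\ref{assumption:tri_prob}, branching RL with $m=1$ is exactly standard episodic RL under the augmented transition $p^{\aug}$ (with $s_{\perp}$ an absorbing zero-reward state), so it suffices to port the classical episodic minimax construction \cite{jaksch2010near,azar2017minimax,domingues2021episodic} with the number of base actions $N$ playing the role of the action count. I would therefore first reduce to (a benign-branching instance equivalent to) standard episodic RL, and then instantiate the lower-bound gadget there; this also explains why the bound matches Theorem~\ref{thm:regret_ub} up to logarithmic factors.

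For the gadget itself I would embed $\Theta(S)$ essentially independent bandit-type sub-problems, one per regular state, each over the $N$ base actions, indexed by a hidden ``good'' base action unknown to the learner. Crucially, to recover the $H$ factor rather than a bare $\sqrt{SNK}$, the good action must tilt the \emph{transition} (not merely the immediate reward): taking it raises by a small gap $\Delta$ the probability of entering a reward-rich region that then pays $\Theta(1)$ per step for the remaining horizon, while all other actions are symmetric. A single wrong decision thus propagates, costing $\Theta(H\Delta)$ in value; this horizon-long persistence is the sole source of the extra $H$.

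The core step is a sampling lower bound: comparing two instances that differ only in the identity of the good action at one state, I would bound the $\kl$ divergence between the induced distributions over trajectories by $O(\Delta^2)$ times the expected number of visits to that state, via the chain rule for $\kl$ and the bounded-gap Bernoulli signals. An Assouad- (or per-state Le Cam/Fano) argument over the $\Theta(S)$ sub-problems and $N$ candidates then shows that, unless a state is visited $\Omega(N/\Delta^2)$ times, its good action is misidentified with constant probability. Since each state is reached $\Theta(K/S)$ times across $K$ episodes, setting $\Delta \asymp \sqrt{SN/K}$ makes exploration and exploitation balance: the learner errs on a constant fraction of episodes, each wrong episode costs $\Theta(H\Delta)$, and summing gives $\Theta(KH\Delta) = \Omega(H\sqrt{SNK})$.

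The main obstacle is ensuring that the \emph{branching} does not secretly make the problem easier than $H\sqrt{SNK}$ for $m>1$. Two features need care: the trajectory is an exponentially large random tree, so ``number of visits'' must be computed in expectation, using that $q=\frac{1}{m}$ keeps the expected per-layer node count at $1$ and hence the value at scale $H$ (cf.\ Lemma~\ref{lemma:triggered_nodes_main_text}); and each visited node returns \emph{simultaneous} feedback on all $m$ base actions of the chosen super action, a semi-bandit speed-up that could shave the bound by up to $\sqrt{m}$. The clean way around both is to keep the instance reducible to the $m=1$ standard-RL construction, so that each informative visit contributes $\Theta(1)$ rather than $\Theta(m)$ useful samples and the value scale is unaffected; verifying this faithful reduction is the delicate part, whereas optimizing $\Delta$ and assembling the per-state bounds is routine.
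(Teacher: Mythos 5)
Your proposal is correct as a proof of the literal statement, and its core gadget is the same as the paper's: uniform entry into $\Theta(S)$ ``bandit'' states, a hidden good action whose advantage is a small shift $\Delta$ in the probability of reaching a reward-rich region that pays $\Theta(1)$ per step for the rest of the horizon (so one mistake costs $\Theta(H\Delta)$), and a change-of-measure argument (the paper uses Lemma~A.1 of \citet{auer2002nonstochastic} where you invoke Assouad/Fano, but these are interchangeable here) with the same balance $\Delta \asymp \sqrt{SN/K}$. The genuine difference is how $m$ is handled. You collapse to $m=1$ and port the standard episodic construction; this suffices because an $m=1$ tuple is a legitimate branching-RL instance, and it is actually cleaner than you fear --- there is no ``delicate reduction'' to verify, since the paper itself observes that $m=1$ branching RL \emph{is} standard RL under $p^{\aug}$. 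The paper instead builds hard instances for every $m \geq 2$ by taking $\cA$ to be a \emph{partition} of the $N$ base actions into $d = N/m$ disjoint super actions: the semi-bandit feedback you worry about does appear (the per-visit KL is $m\eta^2$, not $\eta^2$), but it is exactly cancelled because there are only $N/m$ arms and the per-step cost of a mistake is $m\eta$, so the $m$'s cancel and the rate is again $H\sqrt{SNK}$. What each route buys: yours is shorter and establishes minimax tightness of Theorem~\ref{thm:regret_ub}; the paper's additionally shows that instances with genuine branching (exponentially large trajectory trees) are just as hard, which is the point of its Remark~4, and which your construction does not address. One technical caveat if you write yours up: you cannot reuse the paper's parameter choice at $m=1$, since it sets the good trigger probability to $\alpha+\eta = 1/m$, and its KL bound needs $(\alpha+\eta)(1-(\alpha+\eta))$ bounded below, which degenerates when $m=1$; you must instead put the Bernoulli gap around a constant (e.g., $\tfrac12 \pm \Delta$) and make the reward-rich state sticky with probability one, exactly as in the classical episodic constructions you cite --- with that choice your arithmetic $KH\Delta = \Omega(H\sqrt{SNK})$ goes through.
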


\textbf{Remark 4.}\! Theorem~\ref{thm:regret_lb} validates that the regret of $\algtteuler$ (Theorem~\ref{thm:regret_ub}) is near-optimal for large enough $K$, and reveals that, a polynomial regret is achievable and tight even with exponentially-large trajectories in branching RL.

\textbf{Branching Regret Lower Bound Analysis.}
Unlike prior standard episodic RL works~\cite{azar2017minimax,jin2018q} which directly adapt the diameter-based lower bound~\cite{jaksch2010near} to the episodic setting, we derive a new lower bound analysis for branching RL-RM.
We construct a hard instance, where an agent uniformly enters one of $\Theta(S)$ ``bandit states'', i.e., states with an optimal action and sub-optimal actions, and hereafter always transitions to a ``homogeneous state'', i.e., a state with homogeneous actions. Then, if the agent makes a mistake in a bandit state, she will suffer $\Omega(H)$ regret in this episode. By bounding the KL-divergence between this hard instance and uniform instance, we can derive a desired lower bound.

\section{Branching Reinforcement Learning with Reward-free Exploration}

\begin{algorithm}[t]
	\caption{$\algbonusleader$} \label{alg:tt_bonusleader}
	\begin{algorithmic}[1]
		\STATE {\bfseries Input:} $s_{\emptyset}$, $\varepsilon$, $\delta$, $\beta(t,\kappa):=\log(SN/\kappa)+S\log(8e(t+1))$ for any $t \in \N$ and $\kappa \in (0,1)$. Initialize $ B^k_h(s_{\perp}) \!=\! 0, \forall h \in [H],k $ and $B^k_{H+1}(s) \!=\! 0,  \forall s \!\in\! \cS, k $.
		\FOR{$k=1,2,\dots$}
		\FOR{$h=H,H-1,\dots,1$}
		\FOR{$s \in \cS \setminus \{s_{\perp}\}$}
		\FOR{$a \in A^{\univ}$}
		\STATE $\hat{q}^k(s,a) \leftarrow \frac{J^k_{\sumtext}(s,a)}{n^k(s,a)}$; \label{line:rfe_hat_q}
		\STATE $\hat{p}^k(s'|s,a) \leftarrow \frac{P^k_{\sumtext}(s'|s,a)}{J^k_{\sumtext}(s,a)}, \ \forall s' \in \cS $; \label{line:rfe_hat_p}
		\STATE $g^{k}_h(s,a) \leftarrow 12 H^2 \frac{\beta(n^k(s,a),\delta)}{n^k(s,a)} + \sbr{1+\frac{1}{H}} \! \hat{q}^k(s,a) \hat{p}^k(\cdot|s,a)^\top \! B^{k}_{h+1}(s)$; \label{line:rfe_component_function}
		\ENDFOR
		\STATE $\pi^k_h(s) \leftarrow \argmax_{A \in \cA} \sum_{a \in A} g^k_h(s,a) $; \label{line:rfe_pi}
		\STATE $B^k_h(s) \leftarrow \min \{\max_{A \in \cA} \sum_{a \in A} g^k_h(s,a),\  H\}$; \label{line:rfe_B_kh}
		\ENDFOR
		\ENDFOR
		\STATE {\bfseries if} $4e \sqrt{ B^{k}_1(s_{\emptyset}) } + B^{k}_1(s_{\emptyset}) \leq \frac{\varepsilon}{2}$, 
		{\bfseries return} $(\hat{q}^k, \hat{p}^k)$;
		\label{line:rfe_stop_rule}
		\STATE {\bfseries else} Take policy $\pi^k$ and observe the trajectory;
		\label{line:rfe_execute_policy}
%		\IF{ $4e \sqrt{ B^{k}_1(s_{\emptyset}) } + B^{k}_1(s_{\emptyset}) \leq \frac{\varepsilon}{2}$ } \label{line:rfe_stop_rule}
%		\STATE {\bfseries return} $(\hat{q}^k, \hat{p}^k)$ \label{line:rfe_return_model}
%		\ELSE 
%		\STATE Take policy $\pi^k$ and observe the trajectory \label{line:rfe_execute_policy}
%		\ENDIF \label{line:rfe_end_stop_rule}
		\ENDFOR
	\end{algorithmic}
\end{algorithm}

In this section, we investigate branching RL-RFE, and develop an efficient algorithm $\algbonusleader$ and nearly matching upper and lower bounds of sample complexity.

\subsection{Algorithm $\algbonusleader$}

In each episode, $\algbonusleader$ estimates the trigger and transition probabilities, and computes the estimation error $B^k_h(s)$, which stands for the cumulative variance of trigger and transition from step $h$ to $H$.
Once the total estimation error $B^k_1(s_{\emptyset})$ is shrunk below the required accuracy, $\algbonusleader$ returns the estimated model $(\hat{q}^k,\hat{p}^k)$. Given any reward function $r$, the optimal policy $\hat{\pi}^*$ under  $(\hat{q}^k,\hat{p}^k)$  with respect to $r$ is $\varepsilon$-optimal, i.e., $V^{\hat{\pi}^*}_1(s_{\emptyset};r)\geq V^{*}_1(s_{\emptyset};r)-\varepsilon$, with probability at least $1-\delta$.

We describe $\algbonusleader$ (Algorithm~\ref{alg:tt_bonusleader}) as follows:
In each episode $k$, for each step $h$, $\algbonusleader$ first estimates the trigger and transition probabilities $\hat{q}^k, \hat{p}^k$ (Lines~\ref{line:rfe_hat_q},\ref{line:rfe_hat_p}), and calculates the component estimation error $g^{k}_h(s,a)$ for each state-base action pair (Line~\ref{line:rfe_component_function}).
Then, similar to $\algtteuler$, we utilize a maximization oracle to efficiently find the action with the maximum estimation error (the most necessary action for exploration) to be $\pi^k_h(s)$, and assign such maximum error to $B^k_h(s)$ (Lines~\ref{line:rfe_pi},\ref{line:rfe_B_kh}).
If the square root of total estimation error  $\sqrt{B^k_1(s_{\emptyset})}$, which represents the standard deviation of trigger and transition for whole trajectory, is smaller than $\varepsilon/2$, $\algbonusleader$ stops and outputs the estimated model $(\hat{q}^k,\hat{p}^k)$ (Line~\ref{line:rfe_stop_rule}); Otherwise, it continues to explore according to the computed policy $\pi^k$ (Line~\ref{line:rfe_execute_policy}).

The computation complexity of $\algbonusleader$ is also $O(S^2N)$ instead of $O(S^2|\cA|)$, since it only computes the component estimation error for state-base action pairs rather than enumerating super actions, and utilizes a maximization oracle to calculate $\pi^k_h(s)$ and $B^k_h(s)$.

\subsection{Sample Complexity Upper Bound for $\algbonusleader$}

Now we present the sample complexity for $\algbonusleader$.
%
%Given any reward function $r$ and an estimated model $(\hat{q}^k,\hat{p}^k)$, let $\hat{\pi}^*(\hat{q}^k,\hat{p}^k;r)$ denote the optimal policy with respect to $r$ under model $(\hat{q}^k,\hat{p}^k)$. Then, we have the following result.
%
We say an algorithm for branching RL-RFE is $(\delta,\varepsilon)$-correct, if it returns an estimated model $(\hat{q},\hat{p})$ such that given any reward function $r$, the optimal policy under $(\hat{q},\hat{p})$ with respect to $r$ is $\varepsilon$-optimal with probability at least $1-\delta$.

\begin{theorem}[Sample Complexity Upper Bound] \label{thm:sample_complexity_ub}
	For any $\varepsilon>0$ and $\delta \in (0,1)$, algorithm $\algbonusleader$ is $(\delta,\varepsilon)$-correct. Moreover, with probability $1-\delta$, the number of episodes used in $\algbonusleader$ is bounded by
	$$
	O \sbr{ \frac{H^2 SN}{\varepsilon^2} \sbr{ \log \sbr{\frac{SN}{\delta}} + S \log\sbr{e\cdot m^{H}} } \cdot C^2} ,
	$$
	where $C=\log\sbr{ \sbr{\log \sbr{\frac{SN}{\delta}} + S \log\sbr{e\cdot m^{H}}} \cdot \frac{H SN }{\varepsilon} }$.
\end{theorem}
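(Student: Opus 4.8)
The plan is to follow the reward-free exploration template (as in the standard-RL analysis of \citet{menard2021fast}) but rebuilt on the branching tools of Section~\ref{sec:property_study}. First I would define a favorable event $\cG$ on which the empirical trigger and transition estimates $\hat q^k,\hat p^k$ concentrate around $q,p$. Since $\hat p^k(\cdot|s,a)$ is a categorical distribution over the $S$ states, the correct width is the empirical-Bernstein/Weissman deviation whose $S$-dimensional covering term is exactly the $S\log(8e(t+1))$ appearing in $\beta(t,\kappa)=\log(SN/\kappa)+S\log(8e(t+1))$, together with a union bound over the $SN$ regular state-base action pairs and over episodes. Because a single branching episode can trigger up to $\Theta(m^H)$ tree edges, the count $n^k(s,a)$ can be as large as $\Theta(Km^H)$, but this enters only \emph{logarithmically}: $S\log(e\cdot m^H)=O(SH\log m)$ is polynomial in $H$, which is precisely what prevents exponential blow-up. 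A careful accounting then gives $\Pr[\cG]\ge 1-\delta$.

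The correctness claim is the reward-free heart of the argument, and the key is that $B^k_h(s)$ is a \emph{reward-independent} upper bound on the value-estimation error. Using the branching value difference lemma (Lemma~\ref{lemma:value_diff_main_text}) with $\cM'=(\hat q^k,\hat p^k)$ and $\cM''=(q,p)$, I would expand $V_1^\pi(s_\emptyset;r)-\hat V_1^\pi(s_\emptyset;r)$ as an expected tree-sum of trigger/transition deviations weighted by value functions, all uniformly bounded by $H$ thanks to Remark~2 of Lemma~\ref{lemma:triggered_nodes_main_text}. Bounding each deviation by its Bernstein width and aggregating the variances via the branching law of total variance (Lemma~\ref{lemma:LTV_main_text}) together with the $O(H^2)$ second-moment bound on triggered states, one obtains on $\cG$, for \emph{every} reward $r$ and \emph{every} policy $\pi$,
\begin{align*}
\abr{V_1^\pi(s_\emptyset;r)-\hat V_1^\pi(s_\emptyset;r)} \le 4e\sqrt{B^k_1(s_\emptyset)}+B^k_1(s_\emptyset),
\end{align*}
where the $\sqrt{B^k_1}$ term is produced by a self-bounding/Cauchy--Schwarz step on the leading contributions $\sum\sqrt{\var\cdot\beta/n}$, the $12H^2\beta/n$ bonus in $g^k_h$ supplies the per-edge variance proxy ($\var(V)\le H^2$), and the $(1+\tfrac1H)$ factor inflates by at most $(1+\tfrac1H)^H\le e$ across the $H$ layers. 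Feeding this into the sandwich $V_1^{\pi^*}(s_\emptyset;r)-V_1^{\hat\pi^*}(s_\emptyset;r)\le \mathrm{err}(\pi^*)+0+\mathrm{err}(\hat\pi^*)$ (the middle term nonpositive because $\hat\pi^*$ is optimal under $\hat p^k$), the stopping rule $4e\sqrt{B^k_1(s_\emptyset)}+B^k_1(s_\emptyset)\le\varepsilon/2$ yields $\varepsilon$-optimality, i.e. $(\delta,\varepsilon)$-correctness.

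For the sample-complexity count I would bound the stopping time $K$ by controlling $\sum_{k<K}B^k_1(s_\emptyset)$. Unrolling the $B$-recursion with the $(1+\tfrac1H)^h\le e$ inflation, $B^k_1(s_\emptyset)$ is at most $O(H^2)$ times the expected tree-sum of per-edge bonuses $\beta(n^k(s,a))/n^k(s,a)$, where the $O(H^2)$ again comes from the triggered-states moment bounds (Lemma~\ref{lemma:triggered_nodes_main_text}) that keep the exponential tree sum polynomial. A potential argument then bounds $\sum_k \sum_{(s,a)} w_k(s,a)/n^k(s,a)$: relating the expected visitation $w_k$ to the actual counts on $\cG$ and telescoping gives $O(SN\log(\text{total visits}))$, so $\sum_{k<K}B^k_1(s_\emptyset)=O(H^2 SN\,\beta\,C)$ with $\beta=\log(SN/\delta)+S\log(e\cdot m^H)$. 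Since failing the stopping rule for all $k<K$ forces $B^k_1(s_\emptyset)=\Omega(\varepsilon^2)$ (the square-root term dominates for small $\varepsilon$), we get $K\varepsilon^2=O(H^2 SN\,\beta\,C)$; solving this implicit inequality (the $\log(\text{total visits})$ itself depends on $K$) produces the second logarithmic factor and hence the stated $O\!\big(\tfrac{H^2SN}{\varepsilon^2}\,\beta\,C^2\big)$ bound.

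The main obstacle is the second step—showing that $B^k_1(s_\emptyset)$ is simultaneously a valid, reward-\emph{independent} error bound of the self-improving form $4e\sqrt{B}+B$. This requires correctly marrying the branching value difference decomposition to a Bernstein analysis in which the variance of the (unknown-reward) value function is replaced by the uniform $H^2$ proxy, and verifying that the $(1+\tfrac1H)$ inflation exactly absorbs the lower-order transition-estimation cross terms so that the recursion closes with the clean constant $4e$. The secondary difficulty is making the potential/telescoping argument rigorous despite the exponentially many tree edges, where the $O(H^2)$ second-moment bound of Lemma~\ref{lemma:triggered_nodes_main_text} is the crucial ingredient that prevents blow-up.
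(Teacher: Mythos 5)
Your proposal follows essentially the same route as the paper's proof: concentration with the same $\beta(t,\kappa)$ (the paper phrases it as a KL-divergence bound on the augmented trigger--transition distribution $p^{\aug}$, from which the Bernstein-type widths you describe are derived via Lemmas 10--12 of M\'enard et al.), a reward-independent self-bounding error bound $\abr{\hat V^{k,\pi}_1(s;r)-V^{\pi}_1(s;r)}\le 4e\sqrt{B^k_1(s)}+B^k_1(s)$ obtained by unfolding the $(1+\tfrac1H)$-inflated recursion over the trajectory tree and aggregating variances with the branching law of total variance plus the triggered-states moment bounds, the sandwich argument for correctness, and the pigeonhole-plus-implicit-inequality step (Lemma 13 of M\'enard et al.) for the episode count. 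The one technical point you gloss over is that the paper closes the error recursion entirely under the \emph{empirical} model (weights $\hat w^{k,\pi}$, variance $\var_{\hat q,\hat p}$), which requires clipping $\hat q^k(s,a)\le \tfrac1m$ so that the branching law of total variance applies to $(\hat q^k,\hat p^k)$ as well, whereas your value-difference formulation weights by the true model; this is exactly the "main obstacle" you flag, and the paper's resolution is the empirical-model unfolding rather than a true-model one.
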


\textbf{Remark 5.}
Theorem~\ref{thm:regret_ub} exhibits that even with exponentially-large trajectories in branching RL,  $\algbonusleader$ only needs polynomial episodes to ensure an $\varepsilon$-optimal policy for any reward function. 
This sample complexity is optimal for small enough $\delta$ within logarithmic factors (compared to the lower bound presented in Section~\ref{sec:sample_complexity_lb}).
In addition, when degenerating to standard RL (i.e., $m=1$), our result also matches the state-of-the-arts~\cite{menard2021fast,zhang2021nearly}.

\textbf{Branching Sample Complexity Analysis.}
Unlike standard RL~\cite{menard2021fast,zhang2021nearly} which only bounds the estimation error in a single $H$-step path, in branching RL we need to unfold and analyze the estimation error for all state-base action pairs in a trajectory tree. In our analysis, we utilize special structural properties of branching MDP, e.g., branching law of total variance (Lemmas~\ref{lemma:LTV_main_text}) and the second moment bound of triggered states (Lemmas~\ref{lemma:triggered_nodes_main_text}), to skillfully bound the total estimation error throughout the  trajectory tree. Despite exponentially-large trajectories, we obtain sample complexity only polynomial in problem parameters.

\subsection{Sample Complexity Lower Bound} \label{sec:sample_complexity_lb}

\begin{theorem}[Sample Complexity Lower Bound] \label{thm:sample_complexity_lb}
	There exists an instance of branching RL-RFE, where any $(\delta,\varepsilon)$-correct algorithm requires
	$
	\Omega (\frac{H^2 SN}{\varepsilon^2}  \log \delta^{-1}) 
	$
	trajectories.
\end{theorem}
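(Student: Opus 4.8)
The plan is to exhibit a family of branching RL-RFE instances that are statistically indistinguishable during the reward-free exploration phase, and then apply an information-theoretic (change-of-measure) argument to show that any $(\delta,\varepsilon)$-correct algorithm must collect $\Omega(\frac{H^2 SN}{\varepsilon^2}\log\delta^{-1})$ trajectories. Since the paper already notes that the $m=1$ case of branching RL coincides with standard episodic RL under the augmented transition $p^{\aug}$, it suffices to build the hard family inside this regime (or a minimal-branching variant whose post-test subtree is path-like, so that trigger caps do not interfere with reward accumulation). Throughout, Lemma~\ref{lemma:triggered_nodes_main_text} guarantees that all value functions remain bounded by $H$, which is exactly what pins down the $H$-scaling of the accuracy requirement below.

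First I would construct the instance. From the fixed initial state $s_{\emptyset}$, the agent is routed to a layer containing $\Theta(S)$ distinct regular ``test'' states, each equipped with $\Theta(N)$ base actions, with the remaining $\Theta(H)$ steps reserved to amplify value gaps. To each regular state-base action pair $(s,a)$ I attach a hidden bit $\theta_{s,a}\in\{-1,+1\}$: the pair transitions to a ``good'' absorbing successor $s_{\oplus}$ with probability $\tfrac12+\theta_{s,a}\Delta$ and to a ``bad'' absorbing successor $s_{\ominus}$ otherwise, where $\Delta=\Theta(\varepsilon/H)$. Trigger probabilities are fixed at the cap $q\equiv 1/m$ so Assumption~\ref{assumption:tri_prob} holds, while the tail subtree uses deterministic triggering so that $s_{\oplus}$ reliably accumulates reward. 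The reference instance sets all $\theta_{s,a}=0$, and the family $\{\cM_{\theta}\}$ ranges over all sign assignments; rewards are withheld during exploration.

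Next I would connect estimation error to the RFE guarantee and count samples. The key RFE leverage is that the single estimated model $(\hat q,\hat p)$ must be robust to every reward: after exploration the adversary reveals a reward paying $1$ per step at the $s_{\oplus}$ reachable through the worst-estimated pair, so if the sign of any $\theta_{s,a}$ is misidentified the induced optimal policy under $(\hat q,\hat p)$ loses $\Theta(\Delta\cdot H)=\Theta(\varepsilon)$ in value. Hence $\varepsilon$-optimality forces recovery of the signs of all $\Theta(SN)$ pairs with total error probability $O(\delta)$. By a per-pair Bretagnolle--Huber / two-point argument (the trajectory log-likelihood ratio factorizes over triggered pairs), distinguishing $\theta_{s,a}=+1$ from $-1$ with confidence $1-\delta$ requires $\E[n(s,a)]=\Omega(\tfrac{1}{\Delta^2}\log\delta^{-1})=\Omega(\tfrac{H^2}{\varepsilon^2}\log\delta^{-1})$ triggers of that pair; summing over the $\Theta(SN)$ pairs gives $\sum_{s,a}\E[n(s,a)]=\Omega(\tfrac{H^2 SN}{\varepsilon^2}\log\delta^{-1})$. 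Finally, since the expected number of triggered states at any single step equals $1$ and each contributes $O(1)$ triggered edges under the cap $q=1/m$, each episode supplies only $O(1)$ test-pair samples in expectation, so the total number of test-pair triggers is $O(K)$ and therefore $K=\Omega(\tfrac{H^2 SN}{\varepsilon^2}\log\delta^{-1})$.

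The main obstacle is making the ``for all rewards'' quantifier of RFE interact correctly with the per-pair testing bound: I must show that a single committed estimate cannot be robust unless \emph{every} coordinate is resolved (so the hardness does not collapse to one state), while simultaneously retaining the $\log\delta^{-1}$ factor rather than the weaker $\varepsilon^{-2}$-only conclusion of Fano/Le Cam. This requires pairing the high-confidence ($\delta$-dependent) two-point bound with a careful accounting of the shared exploration budget across the $SN$ pairs. A secondary technical point is verifying, via the step-wise triggering structure analyzed in Lemma~\ref{lemma:triggered_nodes_main_text}, that exploration distributes $O(1)$ informative samples per episode to the test layer, so that episode count and sample count coincide up to constants.
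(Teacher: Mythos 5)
Your proposal has a genuine gap at its central step: the claim that the RFE guarantee ``forces recovery of the signs of all $\Theta(SN)$ pairs.'' In your construction, every base action at a test state $s$ transitions to the \emph{same} two successor states $s_{\oplus}(s)$ and $s_{\ominus}(s)$, and rewards are functions of state--action pairs, so no reward function can distinguish \emph{which} pair the agent used to reach $s_{\oplus}(s)$ (your phrase ``the $s_\oplus$ reachable through the worst-estimated pair'' tacitly assumes per-pair successors, which would require $\Theta(SN)$ states and break the state budget). Consequently, for any reward the planner's choice at $s$ is governed solely by which action it believes maximizes (or, when $s_{\ominus}$ carries the reward, minimizes) $p(s_{\oplus}(s)\,|\,s,\cdot)$, so $\varepsilon$-optimality for all rewards only forces the output model to exhibit \emph{one} reliable $+1$ action and \emph{one} reliable $-1$ action per test state. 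With i.i.d.\ signs roughly half the actions at each state qualify, so on your family there is a procedure achieving correctness with $O\bigl(\tfrac{S}{\Delta^{2}}\log\tfrac{S}{\delta}\bigr)$ samples and no factor of $N$ at all; the per-pair Bretagnolle--Huber bounds therefore cannot legitimately be summed over all $SN$ pairs. Repairing this requires planting a needle --- exactly one optimal (super) action per test state --- and then a two-point comparison per pair no longer yields anything (flipping a non-needle sign does not change which output policies are acceptable); one needs a best-arm-identification change-of-measure bound. This is exactly the paper's route: uniform routing into $\Theta(S)$ bandit states, one optimal super action among $d=N/m$ per state, the Mannor--Tsitsiklis bound giving $\mathbb{E}[n_i]=\Omega\bigl(\tfrac{d}{m\eta^{2}}\log\tfrac{1}{\delta_i}\bigr)$ visits of state $x_i$, and a Dann--Brunskill-style Markov-inequality argument plus optimization over the per-state error probabilities $\{\delta_i\}$ (which also handles the point you gloss over, namely that $\varepsilon$-optimality only forces \emph{most} states to be correct, not all of them with probability $1-\delta$).

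A second, quantitative problem concerns your hedge that a general-$m$ ``minimal-branching variant'' would also work: it does not, because you plant the statistical signal in the \emph{transition} probabilities. Under Assumption~\ref{assumption:tri_prob} a pulled pair yields a transition observation only when triggered (probability $1/m$), so the per-pull information about transitions is $\Theta(\Delta^{2})$ while the value gap achievable per state is at most $\Theta(\Delta H)$; chasing these factors through the best-arm-identification bound leaves you with $\Omega\bigl(\tfrac{SNH^{2}}{m\varepsilon^{2}}\log\delta^{-1}\bigr)$, a factor $m$ short. The paper's instance avoids this by putting the gap in the \emph{trigger} probabilities: each pull of a super action reveals all $m$ trigger outcomes, so the per-pull KL is $\Theta(m\eta^{2})$ while the value gap is $\Theta(m\eta H)$, and the factors of $m$ cancel to give the full $\Omega\bigl(\tfrac{SNH^{2}}{\varepsilon^{2}}\log\delta^{-1}\bigr)$. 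Your $m=1$ specialization does technically satisfy the theorem's ``there exists an instance'' quantifier, but it then proves a statement about standard RL rather than exhibiting hardness of any genuinely branching instance, whereas the paper's construction works for every $m$.
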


\textbf{Remark 6.} 
Theorem~\ref{thm:sample_complexity_lb} demonstrates that $\algbonusleader$ (Theorem~\ref{thm:sample_complexity_ub}) achieves a near-optimal sample complexity for small enough $\delta$, and also, a polynomial sample complexity is achievable and sharp for branching RL-RFE.

%In the lower bound analysis for branching RL-RFE, we construct such a hard instance: There are $\Theta(S)$ bandit states, i.e., states with an optimal action and suboptimal actions, and once the agent chooses an suboptimal action, she will suffer $\Omega(H)$ regret in an episode. To identify the optimal action under required accuracy $\varepsilon$, one needs at least $\Omega(H^2/\varepsilon^2 \log(1/\delta))$ episodes. By considering all possible optimal actions and all bandit states, we can obtain the lower bound.

\section{Experiments}

In this section, we conduct experiments for branching RL.
We set $K=5000$, $\delta=0.005$, $H=6$, $m=2$, $N\in \{10,15\}$, $\cS=\{s_{\perp},s_1,\dots,s_5\}$.
%, where $s_1$ is the initial state. 
$\cA$ is the collection of all $m$-cardinality subsets of $A^{\univ}=\{ a_1,\dots,a_{N} \}$, and thus $|\cA|= \binom{N}{m} \in \{45,105\}$.  
The reward function $r(s,a)=1$ for any $(s,a) \in \cS \times \cA$.
The trigger probability $q(s,a)=\frac{1}{m}$ for any $(s,a) \in \cS \times \{a_{N-1},a_{N}\}$, and $q(s,a)=\frac{1}{2m}$ for any $(s,a) \in \cS \times A^{\univ} \setminus \{a_{N-1},a_{N}\}$.
We set $s_1$ as the initial state for each episode.
Under all actions $a \in A^{\univ}$, the transition probability $q(s'|s_1,a)=0.5$ for any $s' \in \{s_2,s_3\}$, and $q(s'|s,a)=0.5$ for any $(s,s') \in \{s_2,s_3\} \times \{s_4,s_5\}$ or $(s,s') \in \{s_4,s_5\} \times \{s_2,s_3\}$.
We perform $50$ independent runs, and report the average regrets and running times (in legends) across runs.

Since we study a new problem and there is no existing algorithm for branching RL, we compare our algorithm $\mathtt{BranchVI}$ with two adaptations from standard RL, i.e., $\mathtt{Euler\mbox{-}Adaptation}$~\cite{zanette2019tighter} and $\varepsilon\mbox{-}\mathtt{Greedy}$ ($\varepsilon=0.01$). The former uses individual exploration bonuses for trigger and transition, the latter uses $\varepsilon$-greedy in exploration, and both of them explicitly maintain Q-functions. 
As shown in Figure~\ref{fig:experiments}, $\mathtt{BranchVI}$ enjoys a lower regret and a faster running time than the baselines, which demonstrates the effectiveness of our exploration strategy and the computation efficiency of our algorithm.

\begin{figure}[t]  
	\centering  
	\includegraphics[width=0.49\columnwidth]{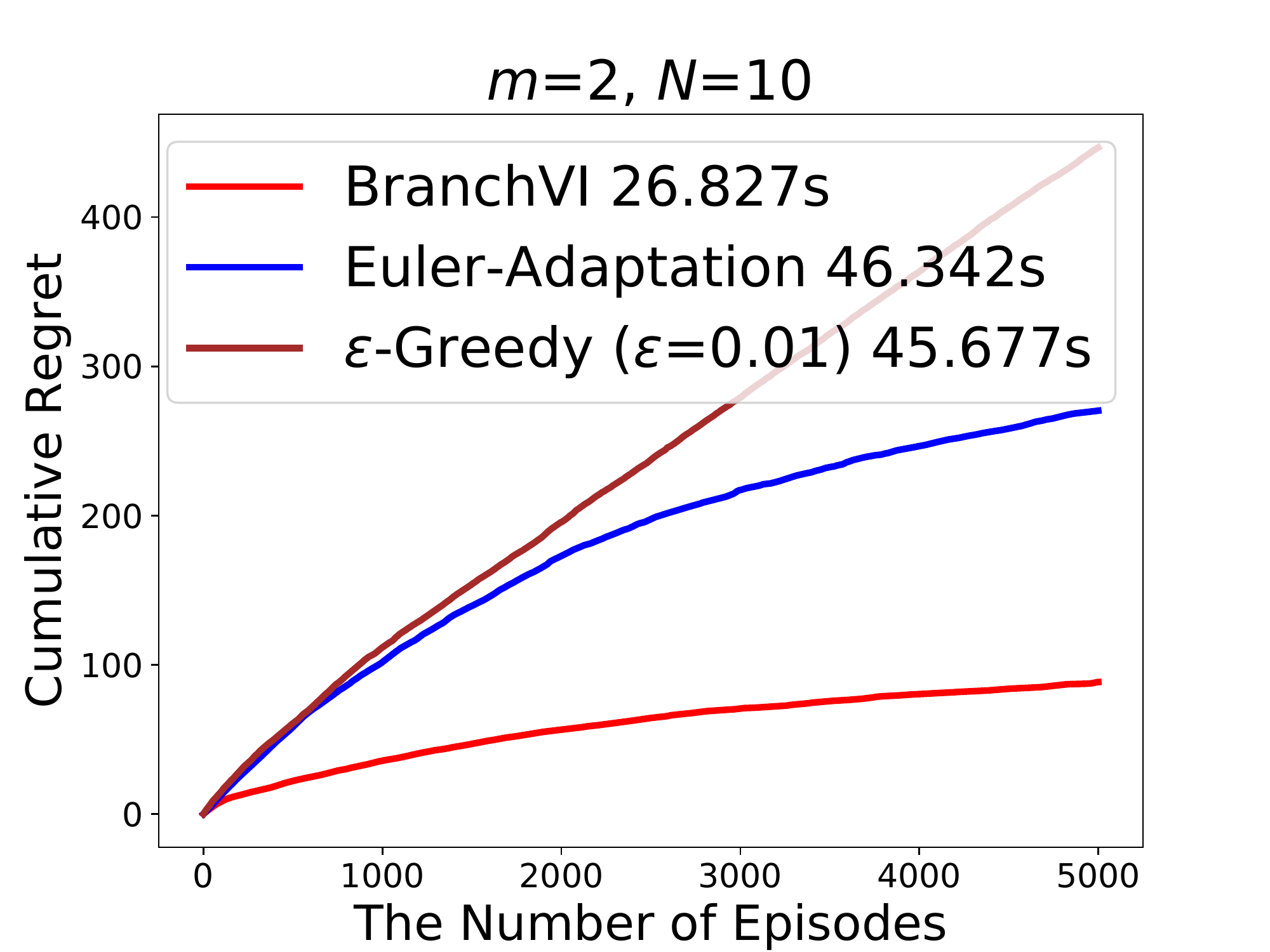}    
	\includegraphics[width=0.49\columnwidth]{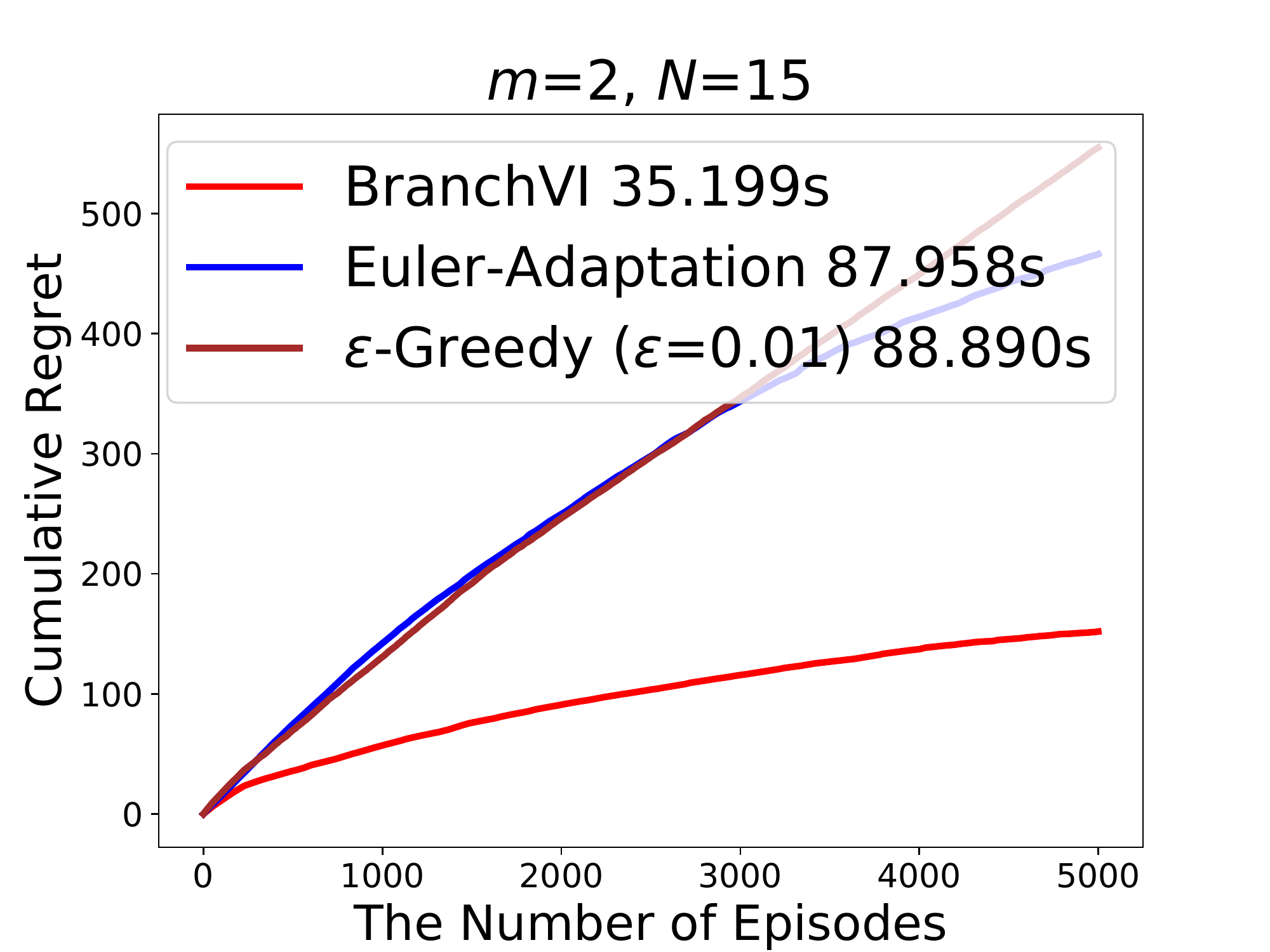} 
	\caption{Experiments for branching RL.}
	\label{fig:experiments}
\end{figure}

\section{Conclusion and Future Work}
In this paper, we formulate a novel branching RL model, and consider both regret minimization and reward-free exploration metrics. Different from standard RL where each episode is a single $H$-step path, branching RL is a tree-structured forward model which allows multiple base actions in a state and multiple successor states.
For branching RL, we build novel fundamental analytical tools and carefully bound the overall variance. We design efficient algorithms and provide near-optimal upper and lower bounds.

There are many interesting directions to explore. One direction is to improve the dependency on $H$ in our regret upper bound (Theorem~\ref{thm:regret_ub}) for small $K$ and close the gap on factors $H,S$ between sample complexity upper and lower bounds (Theorems~\ref{thm:sample_complexity_ub},\ref{thm:sample_complexity_lb}). 
Another direction is to extend branching RL from the tabular setting to function approximation, e.g., representing the value function in a linear form with respect to the feature vectors of state-action pairs~\cite{jin2020provably,zhou2021provably}.

%\clearpage
\bibliographystyle{icml2022}
\bibliography{tree-structured_RL_ref}

\OnlyInFull{
\clearpage
\appendix
\onecolumn
\section*{Appendix}
%\tableofcontents

%\clearpage
%\appendix
%\onecolumn
%\renewcommand{\appendixpagename}{Appendix}
%\appendixpage
%
%\startcontents[section]
%\printcontents[section]{l}{1}{\setcounter{tocdepth}{2}}

\section{Discussion for Algorithm $\algtteuler$} \label{apx:discussion_bonus}
%In this section, we discuss why to individually address triggered reward and triggered transition in algorithm $\algtteuler$.

We remark that it is necessary to separately address triggered reward and triggered transition, and consider trigger and transition as a whole distribution in bonus design (Line~\ref{line:rm_bonus_qpV_ub} in Algorithm~\ref{alg:tt_euler}) and regret analysis (Eq.~\eqref{eq:regret_decomposition}). A counter example is discussed below to support this statement.

If one naively adapts standard RL algorithms~\cite{azar2017minimax,zanette2019tighter}, she/he may directly add bonuses on trigger and transition probabilities, respectively, without separating triggered reward and triggered transition. In this case, $f^k_h(s,a)$ (Line~\ref{line:rm_component_function}) becomes:
\begin{align*}
f^k_h(s,a) \leftarrow & \sbr{\hat{q}^k(s,a)+b^{k,q}(s,a)} \sbr{r(s,a) +  \hat{p}^k(\cdot|s,a)^\top \bar{V}^k_{h+1} + b^{k,pV}(s,a)}
\end{align*}
Then, in regret decomposition, we will obtain
\begin{align*}
& \regret(K) \overset{\textup{(a)}}{\approx} O(1) \sum_{\sigma=\emptyset}^{m^{\oplus(H-1)}} \sum_{\ell=1}^{m} \sum_{(s,a),s\neq s_{\perp}} w_{k\sigma\ell}(s,a) 
\Big[ b^{k,q}(s,a) \Big( r(s,a) + \underbrace{p(\cdot|s,a)^\top \bar{V}^k_{|\sigma \oplus \ell|+1}}_{\mathtt{Term\ \Lambda}} \Big) + b^{k,p\bar{V}}(s,a) q(s,a) \Big],
\end{align*}
where (a) omits second order terms. 
Since $\mathtt{Term\ \Lambda}$ already reaches $\Theta(h)$ order, further summing over all episodes $k$ and steps $h$, we will suffer an extra $\sqrt{H}$ factor in the final result.

One can see from this counter example that, our bonus design and analysis, which separately handle triggered reward and triggered transition and consider trigger and transition as a whole distribution, are sharp and enable a near-optimal regret. 

\section{Proofs for Properties of Branching MDP}

In this section, we present the proofs for structural properties of branching MDP, including branching value difference lemma (Lemma~\ref{lemma:value_diff_main_text}), branching law of total variance (Lemma~\ref{lemma:LTV_main_text}) and the upper bounds of the number of triggered states  (Lemma~\ref{lemma:triggered_nodes_main_text}).

\subsection{Proof of Lemma~\ref{lemma:value_diff_main_text}}

\begin{proof}[Proof of Lemma~\ref{lemma:value_diff_main_text}]
	This proof adapts the analysis of Lemma E.15 in \cite{dann2017unifying} to branching RL.
	According to branching Bellman equations,
	\begin{align*}
	& V'^{\pi}_h(s_{\sigma})-V''^{\pi}_h(s_{\sigma})
	\\
	\overset{\textup{(a)}}{=} & \sum_{\ell=1}^{m} \Big( q'(s_{\sigma},a_{\sigma \oplus \ell}) r(s_{\sigma},a_{\sigma \oplus \ell}) - q''(s_{\sigma},a_{\sigma \oplus \ell}) r(s_{\sigma},a_{\sigma \oplus \ell}) \\& +  q'(s_{\sigma},a_{\sigma \oplus \ell}) p'(s_{\sigma},a_{\sigma \oplus \ell})^\top V'^{\pi}_{h+1} - q''(s_{\sigma},a_{\sigma \oplus \ell}) p''(s_{\sigma},a_{\sigma \oplus \ell})^\top V''^{\pi}_{h+1} \Big)
	\\
	= & \sum_{\ell=1}^{m} \Big( q'(s_{\sigma},a_{\sigma \oplus \ell}) r(s_{\sigma},a_{\sigma \oplus \ell}) - q''(s_{\sigma},a_{\sigma \oplus \ell}) r(s_{\sigma},a_{\sigma \oplus \ell}) \\& +  \sbr{q'(s_{\sigma},a_{\sigma \oplus \ell}) p'(s_{\sigma},a_{\sigma \oplus \ell}) - q''(s_{\sigma},a_{\sigma \oplus \ell}) p''(s_{\sigma},a_{\sigma \oplus \ell}) }^\top V'^{\pi}_{h+1} \\ & + q''(s_{\sigma},a_{\sigma \oplus \ell}) p''(s_{\sigma},a_{\sigma \oplus \ell})^\top \sbr{V'^{\pi}_{h+1} - V''^{\pi}_{h+1}} \Big)
	\\
	= & \sum_{\ell=1}^{m} \Big(q'(s_{\sigma},a_{\sigma \oplus \ell}) r(s_{\sigma},a_{\sigma \oplus \ell}) - q''(s_{\sigma},a_{\sigma \oplus \ell}) r(s_{\sigma},a_{\sigma \oplus \ell}) \\& + \sbr{q'(s_{\sigma},a_{\sigma \oplus \ell}) p'(s_{\sigma},a_{\sigma \oplus \ell}) - q''(s_{\sigma},a_{\sigma \oplus \ell}) p''(s_{\sigma},a_{\sigma \oplus \ell}) }^\top V'^{\pi}_{h+1} \Big) \\ & + \sum_{\ell=1}^{m} \ex_{q'',p'',\pi} \mbr{ V'^{\pi}_{h+1}(s_{\sigma \oplus \ell}) - V''^{\pi}_{h+1}(s_{\sigma \oplus \ell})  | s_h=s} 
	\\
	= & \sum_{\sigma'=\emptyset}^{m} \sum_{\ell=1}^{m} \ex_{q'',p'',\pi} \Big[ q'(s_{\sigma \oplus \sigma'},a_{\sigma \oplus \sigma' \oplus \ell}) r(s_{t\ell},a_{\sigma \oplus \sigma' \oplus \ell}) - q''(s_{t\ell},a_{\sigma \oplus \sigma' \oplus \ell}) r(s_{t\ell},a_{\sigma \oplus \sigma' \oplus \ell}) \\& +  \sbr{q'(s_{t\ell},a_{\sigma \oplus \sigma' \oplus \ell}) p'(s_{t\ell},a_{\sigma \oplus \sigma' \oplus \ell}) - q''(s_{t\ell},a_{\sigma \oplus \sigma' \oplus \ell}) p''(s_{t\ell},a_{\sigma \oplus \sigma' \oplus \ell}) }^\top V'^{\pi}_{h+1} \Big] \\ & + \sum_{\sigma'=1^{\oplus 2}}^{m^{\oplus 2}} \ex_{q'',p'',\pi} \mbr{ V'^{\pi}_{h+2,\ell}(s_{\sigma'}) - V''^{\pi}_{h+2,\ell}(s_{\sigma'})  | s_h} 
	\\
	= & \sum_{\sigma'=\emptyset}^{m^{\oplus(H-h)}} \sum_{\ell=1}^{m} \ex_{q'',p'',\pi} \Big[ q'(s_{\sigma \oplus \sigma'},a_{\sigma \oplus \sigma' \oplus \ell}) r(s_{\sigma \oplus \sigma'},a_{\sigma \oplus \sigma' \oplus \ell}) - q''(s_{\sigma \oplus \sigma'},a_{\sigma \oplus \sigma' \oplus \ell}) r(s_{\sigma \oplus \sigma'},a_{\sigma \oplus \sigma' \oplus \ell}) \\& +  \sbr{q'(s_{\sigma \oplus \sigma'},a_{\sigma \oplus \sigma' \oplus \ell}) p'(s_{\sigma \oplus \sigma'},a_{\sigma \oplus \sigma' \oplus \ell}) - q''(s_{\sigma \oplus \sigma'},a_{\sigma \oplus \sigma' \oplus \ell}) p''(s_{\sigma \oplus \sigma'},a_{\sigma \oplus \sigma' \oplus \ell}) }^\top V'^{\pi}_{|\sigma \oplus \sigma' \oplus \ell|+1} \Big] 
	\end{align*}
	% In equality (a), we consider a tree-structured trajectory starting from some index of step $h$ with state $s_h$, and we use the state-action pair notations, i.e., $s_{h\ell}=s_h$ for $1 \leq \ell \leq m$ and $a_{h\ell}$ is the $\ell$-th base action of $A_h=\pi_h(s_h)$. 
\end{proof}

\subsection{Proof of Lemma~\ref{lemma:LTV_main_text}}
\begin{proof}[Proof of Lemma~\ref{lemma:LTV_main_text}]
	First, we prove the equality. This proof adapts the analysis of standard law of total variance in \cite{jin2018q,zanette2019tighter} to branching RL.
	\begin{align*}
	& \ex_{q,p,\pi} \mbr{ \sum_{\sigma=\emptyset}^{m^{\oplus(H-1)}} \sum_{\ell=1}^{m} \var_{s_{\sigma \oplus \ell} \sim q,p}\sbr{V^{\pi}_{|\sigma \oplus \ell|+1}(s_{\sigma \oplus \ell}) | s_{\sigma},a_{\sigma \oplus \ell} } } 
	\\
	= & \ex_{q,p,\pi} \mbr{ \sum_{\sigma=\emptyset}^{m^{\oplus(H-1)}} \sum_{\ell=1}^{m} \sbr{ V^{\pi}_{|\sigma \oplus \ell|+1}(s_{\sigma \oplus \ell}) - q(s_{\sigma},a_{\sigma \oplus \ell}) p(s_{\sigma},a_{\sigma \oplus \ell})^\top V^{\pi}_{|\sigma \oplus \ell|+1} }^2 }
	\\
	= & \ex_{q,p,\pi} \Bigg[ \sum_{\sigma=\emptyset}^{m^{\oplus(H-1)}} \sum_{\ell=1}^{m} \Bigg( q(s_{\sigma},a_{\sigma \oplus \ell}) r(s_{\sigma},a_{\sigma \oplus \ell}) + V^{\pi}_{|\sigma \oplus \ell|+1}(s_{\sigma \oplus \ell}) \\& -  \sbr{q(s_{\sigma},a_{\sigma \oplus \ell}) r(s_{\sigma},a_{\sigma \oplus \ell}) + q(s_{\sigma},a_{\sigma \oplus \ell}) p(s_{\sigma},a_{\sigma \oplus \ell})^\top V^{\pi}_{|\sigma \oplus \ell|+1} } \Bigg)^2 \Bigg]
	\\
	\overset{\textup{(a)}}{=} & \ex_{q,p,\pi} \Bigg[ \Bigg( \sum_{\sigma=\emptyset}^{m^{\oplus(H-1)}} \sum_{\ell=1}^{m} \Big( q(s_{\sigma},a_{\sigma \oplus \ell}) r(s_{\sigma},a_{\sigma \oplus \ell}) + V^{\pi}_{|\sigma \oplus \ell|+1}(s_{\sigma \oplus \ell}) \\&-  \sbr{q(s_{\sigma},a_{\sigma \oplus \ell}) r(s_{\sigma},a_{\sigma \oplus \ell}) + q(s_{\sigma},a_{\sigma \oplus \ell}) p(s_{\sigma},a_{\sigma \oplus \ell})^\top V^{\pi}_{|\sigma \oplus \ell|+1} } \Big) \Bigg)^2 \Bigg]
	\\
	\overset{\textup{(b)}}{=} & \ex_{q,p,\pi} \mbr{ \Bigg( \sum_{\sigma=\emptyset}^{m^{\oplus(H-1)}} \sum_{\ell=1}^{m}  q(s_{\sigma},a_{\sigma \oplus \ell}) r(s_{\sigma},a_{\sigma \oplus \ell}) + \sum_{\sigma=\emptyset}^{m^{\oplus(H-1)}} \sum_{\ell=1}^{m} V^{\pi}_{|\sigma \oplus \ell|+1}(s_{\sigma \oplus \ell}) -  \sum_{\sigma=\emptyset}^{m^{\oplus(H-1)}}V^{\pi}_{|\sigma|+1}(s_{\sigma}) \Bigg)^2  }
	\\
	= & \ex_{q,p,\pi} \mbr{ \sbr{ \sum_{\sigma=\emptyset}^{m^{\oplus(H-1)}} \sum_{\ell=1}^{m}  q(s_{\sigma},a_{\sigma \oplus \ell}) r(s_{\sigma},a_{\sigma \oplus \ell}) -  V^{\pi}_{1}(s_{\emptyset}) }^2 } 
	\end{align*}
	Here (a) comes from the Markov property and that conditioning on the filtration of step $h$, the triggers and transitions of state-base action pairs at step $h+1$ are independent. (b) is due to that $V^{\pi}_{|\sigma|+1}(s_{\sigma})=\sum_{\ell=1}^{m} \sbr{q(s_{\sigma},a_{\sigma \oplus \ell}) r(s_{\sigma},a_{\sigma \oplus \ell}) + q(s_{\sigma},a_{\sigma \oplus \ell}) p(s_{\sigma},a_{\sigma \oplus \ell})^\top V^{\pi}_{|\sigma \oplus \ell|+1}}$ and we can merge the $m$ terms of state-base action value into $V^{\pi}_{|\sigma|+1}(s_{\sigma})$.
	
	Now, we prove the inequality.
	\begin{align*}
%	\ex_{q,p,\pi} \mbr{ \sum_{\sigma=\emptyset}^{m^{\oplus(H-1)}} \sum_{\ell=1}^{m} \var_{s_{h+1,\ell} \sim q,p}\sbr{V^{\pi}_{h+1}(s_{h+1,\ell}) | s_{h\ell}, a_{h\ell}} }  
%	= & 
	& \ex_{q,p,\pi} \mbr{\sbr{ \sum_{\sigma=\emptyset}^{m^{\oplus(H-1)}} \sum_{\ell=1}^{m}  q(s_{\sigma},a_{\sigma \oplus \ell}) r(s_{\sigma},a_{\sigma \oplus \ell}) -  V^{\pi}_{1}(s_{\emptyset})  }^2} 
	\\
	\leq & \ex_{q,p,\pi} \mbr{\sbr{ \sum_{\sigma=\emptyset}^{m^{\oplus(H-1)}} \sum_{\ell=1}^{m}  q(s_{\sigma},a_{\sigma \oplus \ell}) }^2}
	\\
	= & \ex\mbr{ \sbr{ \sum_{\sigma=\emptyset}^{m^{\oplus(H-1)}} \sum_{\ell=1}^{m} \Big( q(s_{\sigma},a_{\sigma \oplus \ell}) \indicator{s_{\sigma} \neq s_{\perp}} + q(s_{\sigma},a_{\sigma \oplus \ell}) \indicator{s_{\sigma} = s_{\perp}} \Big) }^2}
	\\
	\overset{\textup{(a)}}{\leq} & \ex\mbr{\sbr{ \sum_{\sigma=\emptyset}^{m^{\oplus(H-1)}} \sum_{\ell=1}^{m} \frac{1}{m} \indicator{s_{\sigma} \neq s_{\perp}} }^2}
	\\
	= & \ex\mbr{\sbr{ \sum_{\sigma=\emptyset}^{m^{\oplus(H-1)}}\indicator{s_{\sigma} \neq s_{\perp}} }^2}
	\end{align*}
	where (a) is due to Assumption~\ref{assumption:tri_prob} and $q(s_{\perp}, a)=0$ for any $a \in A^{\univ}$.
	
\end{proof}

\subsection{Proof of Lemma~\ref{lemma:triggered_nodes_main_text}}
\begin{proof}[Proof of Lemma~\ref{lemma:triggered_nodes_main_text}]
	Under Assumption~\ref{assumption:tri_prob}, to bound the total number of triggered (regular) states for any branching MDP and policy $\pi$, it suffices to bound it under a relaxed model $\cM^*$ with $q(s,a)=q^*:=\frac{1}{m}$ for all $(s,a) \in \cS \setminus \{s_{\perp}\} \times A^{\univ}$. 
	Let $\omega_h$ denote the number of triggered states at each step $h$ under $\cM^*$, and $\omega:=\sum_{h=1}^{H} \omega_h$. 
	Below we prove that $\ex[\omega] \leq H$ and $\ex[\omega^2] \leq 3H^2$.

	For $h=1$, $\omega_h=1$ deterministically. 
	For $h \geq 2$, $\omega_h | \omega_{h-1} \sim \binomial(m \omega_{h-1} ,q^*)$. According to the properties of Binomial distribution and $q^*:=\frac{1}{m}$, for $h \geq 2$,
	\begin{align*}
	&\ex \mbr{\omega_h} \hspace*{0.5em} =  mq^* \ex \mbr{\omega_{h-1}} = 1 ,
	\\
	&\ex \mbr{(\omega_h)^2} \!=\!  m q^* (1-q^*) \ex \mbr{\omega_{h-1}}  + m^2 (q^*)^2 \ex \mbr{(\omega_{h-1})^2} 
	\\
	& \hspace*{3.85em} = (1-q^*) + \ex \mbr{(\omega_{h-1})^2} \leq h .
	\end{align*}
	\vspace*{-2em}
	
	Hence, we have that $\ex[\omega]=\sum_{h=1}^{H} \ex [\omega_h]=H$, and 
	\vspace*{-0.5em}
	\begin{align}
	\ex[\omega^2] = & \sum_{h=1}^{H} \ex [(\omega_h)^2] + 2 \sum_{1<i,j<H} \ex [\omega_i \omega_j] 
	\nonumber\\
	\leq & \frac{H(H+1)}{2} + 2 \sum_{1<i,j<H} \ex [\omega_i \omega_j] . \label{eq:apx_sec_moment_omega}
	\end{align}
	\vspace*{-1em}
	
	Now, the challenge falls on how to bound $\ex [\omega_i \omega_j]$ for any $1<i,j<H$.
	Let $W_{\sigma}$ be a Bernoulli random variable denoting whether state $s_{\sigma}$ is triggered for any index string $\sigma$. Then, we can write $\ex [\omega_i \omega_j]$ as 
	\begin{align}
	%	\ex [\omega_i \omega_j] \! = \! & 
	& \ex \Big[  \big( \sum_{\sigma=1^{\oplus(i-1)}}^{m^{\oplus(i-1)}} W_{\sigma} \big) \cdot  \big(\sum_{\sigma'=1^{\oplus(j-1)}}^{m^{\oplus(j-1)}} W_{\sigma'} \big) \Big]
	\nonumber\\
	\overset{\textup{(a)}}{=} & m^{i-1} \ex \Big[   W_{1^{\oplus(i-1)}}  \big(\sum_{\sigma'=1^{\oplus(j-1)}}^{m^{\oplus(j-1)}} W_{\sigma'} \big) \Big]
	\nonumber\\
	= & m^{i-1} \ex \Big[ W_{1^{\oplus(i-1)}} \big( \hspace*{-1.5em} \sum_{\begin{subarray}{c}
		\sigma'=1^{\oplus(j-1)}
		\\ \sigma' \textup{starts with } 1^{\oplus(i-1)}
		\end{subarray} }^{m^{\oplus(j-1)}} \hspace*{-2em} W_{\sigma'} + \hspace*{-1em} \sum_{\begin{subarray}{c}
		\sigma'=1^{\oplus(j-1)}
		\\ \sigma' \textup{does not start with } 1^{\oplus(i-1)}
		\end{subarray} }^{m^{\oplus(j-1)}} \hspace*{-3.5em} W_{\sigma'} \big) \Big] 
	\nonumber\\
	\overset{\textup{(b)}}{\leq} & m^{i-1}  \sbr{m^{j-i} (q^{*})^{i-1+j-i} + m^{j-1} (q^{*})^{i-1+j-1} }
	\nonumber\\
	= & 2 . \label{eq:apx_omega_ij}
	\end{align}
	\vspace*{-3em}
	
	Here (a) comes from the symmetry of trajectory tree. (b) is due to that at step $j$, the children states of $s_{1^{\oplus(i-1)}}$ have dependency on it and the other states are independent of it, and $\ex [W_{\sigma}W_{\sigma'}] =\Pr[W_{\sigma}=1, W_{\sigma'}=1]$ for any $\sigma,\sigma'$.
	By plugging Eq.~\eqref{eq:apx_omega_ij} into Eq.~\eqref{eq:apx_sec_moment_omega}, we have 
	$
	\ex[\omega^2]
	\leq \frac{H(H+1)}{2} + 4 \frac{H(H-1)}{2} \leq 3H^2
	$. 
	Thus, we obtain Lemma~\ref{lemma:triggered_nodes_main_text}.
\end{proof}

\section{Proofs for Branching RL with Regret Minimization}

In this section, we prove the regret upper bound for algorithm $\algtteuler$ (Theorem~\ref{thm:regret_ub}) and the regret lower bounds for Branching RL-RM in cases with Assumption~\ref{assumption:tri_prob} (Theorem~\ref{thm:regret_lb}) and without Assumption~\ref{assumption:tri_prob} (Theorem~\ref{thm:relax_assumption}).

We first introduce some notations.
Let Bernoulli random variable $X_{k\sigma\ell}(s,a)$ denote whether $(s,a)$ was visited at indices $\sigma$ and $\sigma \oplus \ell$, respectively, in episode $k$, and $w_{k\sigma\ell}(s,a):=\Pr[X_{k\sigma\ell}(s,a)]$.
Let $X_{k}(s,a):=\sum_{\sigma=\emptyset}^{m^{\oplus(H-1)}} \sum_{\ell=1}^{m} X_{k\sigma\ell}(s,a)$ denote the number of times that $(s,a)$ was visited in episode $k$, and $w_k(s,a):=\ex[X_{k}(s,a)] =\sum_{\sigma=\emptyset}^{m^{\oplus(H-1)}} \sum_{\ell=1}^{m} w_{k\sigma\ell}(s,a)$.

Let $n^k_{\sigma\ell}(s,a):=\sum_{k'<k} X_{k'\sigma\ell}(s,a)$ denote the cumulative number of times that $(s,a)$ was visited at indices $\sigma$ and $\sigma \oplus \ell$, respectively, up to episode $k$. Let $n^k(s,a):=\sum_{k'<k} \sum_{\sigma=\emptyset}^{m^{\oplus(H-1)}} \sum_{\ell=1}^{m} X_{k\sigma\ell}(s,a)$ denote the \emph{cumulative} number of times that $(s,a)$ was visited up to episode $k$.

In the following proofs, we make the convention that when $m=1$, $\frac{m^{H+1}-m}{m-1}:=H$. Then, we have $X_k(s,a)\leq m+m^2+\dots+m^H=\frac{m^{H+1}-m}{m-1}$ for any $m \geq 1$.

\subsection{Proof of Regret Upper Bound}

\subsubsection{Concentration}

%In order to prove the regret upper bound (Theorem~\ref{thm:regret_ub}), 
In the following, we present several important concentration lemmas and define concentration events.

\begin{lemma}[Concentration of Trigger]\label{lemma:con_trigger}
	\begin{align*}
	\Pr\mbr{\abr{ \hat{q}^k(s,a)-q(s,a) } \leq 4 \sqrt{  \frac{\log\sbr{\frac{SNH(m^{H} \vee K)}{\delta'} } }{n^k(s,a)} } , \  \forall (s,a)\in \cS \setminus\{s_{\perp}\} \times A^{\univ}, \forall k \in [K]} \geq 1-2\delta'
	\end{align*}
\end{lemma}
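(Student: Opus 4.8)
The plan is to isolate a single state-base action pair, recognize its trigger outcomes as a bounded martingale difference sequence, apply a Hoeffding-type bound that is uniform over the number of visits, and finish with a union bound over pairs and over attainable counts.

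First I would fix $(s,a) \in \cS\setminus\{s_{\perp}\}\times A^{\univ}$ and enumerate all visits to $(s,a)$ over the whole run---across episodes $k$ and across all tree nodes within each episode, ordered consistently so as to generate a filtration---as a single sequence with trigger indicators $\xi_1,\xi_2,\dots \in \{0,1\}$. Letting $\cF_i$ be the $\sigma$-algebra generated by the history through the $i$-th such visit, the branching-MDP dynamics give $\ex[\xi_i \mid \cF_{i-1}] = q(s,a)$ (the same conditional independence used in the proof of Lemma~\ref{lemma:LTV_main_text}), so $M_n := \sum_{i=1}^n (\xi_i - q(s,a))$ is a martingale with increments in $[-1,1]$, and $\hat q^k(s,a) = \frac{1}{n}\sum_{i=1}^n \xi_i$ for $n = n^k(s,a)$.

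Next, for each fixed count $n$ the Azuma--Hoeffding inequality yields $\Pr[\,|M_n| \ge t\,] \le 2\exp(-t^2/(2n))$, i.e. $|\hat q^k(s,a)-q(s,a)| \le \sqrt{2\log(2/\delta'')/n}$ with probability at least $1-\delta''$. I would then take a union bound over the at most $SN$ regular pairs and over all attainable counts $n \in \{1,\dots,N_{\max}\}$. Since each episode traverses a tree with at most $\frac{m^{H+1}-m}{m-1}=O(m^{H})$ edges, we have $N_{\max}\le K\cdot\frac{m^{H+1}-m}{m-1}$ and hence $\log N_{\max} = O(\log(m^{H}\vee K))$. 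Choosing $\delta''=2\delta'/(SN\,N_{\max})$ makes the total failure probability at most $2\delta'$ and turns $\log(2/\delta'')$ into $\log(SN\,N_{\max}/\delta')$; bounding $\log N_{\max}$ as above and absorbing the leftover constants into the prefactor $4$ (with the extra factor $H$ inside the logarithm providing additional slack) gives the stated threshold $4\sqrt{L/n^k(s,a)}$ with $L = \log(SNH(m^{H}\vee K)/\delta')$. Because every $n^k(s,a)$ that ever occurs lies in $\{1,\dots,N_{\max}\}$, the event holding for all $n$ implies the claimed ``for all $k$'' statement.

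The main obstacle is the adaptivity of the visit count: $n^k(s,a)$ is itself a random variable driven by the agent's adaptive policy and by the realized triggers and transitions at earlier steps, and a single episode may hit $(s,a)$ at many different tree nodes. One therefore cannot treat $\hat q^k(s,a)$ as an empirical mean of a fixed i.i.d. sample of known size; it is precisely the martingale formulation together with the union over all possible counts $n$ (equivalently, a time-uniform maximal inequality) that legitimizes dividing $M_{n^k(s,a)}$ by the random normalizer $n^k(s,a)$. The remaining work---verifying the conditional-mean identity from the branching dynamics and tracking the constants through the union bound---is routine.
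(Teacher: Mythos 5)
Your proposal is correct and follows essentially the same route as the paper's proof: a Hoeffding-type bound at each fixed visit count, a union bound over the $SN$ regular pairs and over all attainable counts (bounded by $K\cdot\frac{m^{H+1}-m}{m-1}$), and then a case analysis absorbing $\log\bigl(K\cdot\frac{m^{H+1}-m}{m-1}\bigr)$ into $\log\bigl(H(m^{H}\vee K)\bigr)$ up to constants folded into the prefactor $4$. Your explicit martingale (Azuma--Hoeffding) formulation is a slightly more careful justification for applying the fixed-$n$ bound to adaptively collected visits than the paper's terse invocation of Hoeffding, but it is the same argument in substance.
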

\begin{proof}[Proof of Lemma~\ref{lemma:con_trigger}]
	Since $n^k(s,a) \leq \frac{m^{H+1}-m}{m-1} K$,
	using the Hoeffding inequality with a union bound over $(s,a)$ and $n^k(s,a)$, we have
	\begin{align*}
	\Pr\mbr{\abr{ \hat{q}^k(s,a)-q(s,a) } \leq 2 \sqrt{  \frac{\log\sbr{\frac{SN}{\delta'}\cdot \frac{m^{H+1}-m}{m-1} K} }{n^k(s,a)} } , \  \forall (s,a)\in \cS \setminus\{s_{\perp}\} \times A^{\univ}, \forall k \in [K]} \geq 1-2\delta'
	\end{align*}
	
	If $\frac{m^{H+1}-m}{m-1} \leq K$, then we have 
	$$
	\log\sbr{\frac{SN}{\delta'}\cdot \frac{m^{H+1}-m}{m-1} K} \leq 2\log\sbr{\frac{SNHK}{\delta'} }
	$$
	
	If $\frac{m^{H+1}-m}{m-1} \geq K$, then using $\frac{m^{H+1}-m}{m-1}\leq Hm^{2H}$, we have 
	\begin{align*}
	\log\sbr{\frac{SN}{\delta'}\cdot \frac{m^{H+1}-m}{m-1} K} \leq & 2\log\sbr{\frac{SNHm^{2H}}{\delta'} }
	\\
	\leq & 4\log\sbr{\frac{SNHm^{H}}{\delta'} }
	\end{align*}
	
	Combining the above two cases, we have 
	\begin{align*}
	\log\sbr{\frac{SN}{\delta'}\cdot \frac{m^{H+1}-m}{m-1} K} 
	\leq & 4\log\sbr{\frac{SNH(m^{H} \vee K)}{\delta'} }
	\end{align*}
	
	Therefore, we have
	\begin{align*}
	\Pr\mbr{\abr{ \hat{q}^k(s,a)-q(s,a) } \leq 4 \sqrt{  \frac{\log\sbr{\frac{SNH(m^{H} \vee K)}{\delta'} } }{n^k(s,a)} } , \  \forall (s,a)\in \cS \setminus\{s_{\perp}\} \times A^{\univ}, \forall k \in [K]} \geq 1-2\delta'
	\end{align*}
\end{proof}

\begin{lemma}[Concentration of Triggered Transition]\label{lemma:con_tri_transition}
	\begin{align*}
	 \Pr\Bigg[ &\abr{ \sbr{ \hat{q}^k(s,a) \hat{p}^k(\cdot|s,a) - q(s,a) p(\cdot|s,a) }^\top V^{*}_{h+1} } \leq  4\sqrt{\frac{\var_{s' \sim q,p}\sbr{ V^{*}_{h+1}(s')} \log\sbr{\frac{SNH(m^{H} \vee K)}{\delta'} } }{n^k(s,a)}} \\&+ \frac{ 4H \log\sbr{\frac{SNH(m^{H} \vee K)}{\delta'} } }{n^k(s,a)} , \  \forall (s,a)\in \cS \setminus\{s_{\perp}\} \times A^{\univ}, \forall k \in [K] \Bigg] \geq 1-2\delta'
	\end{align*}
\end{lemma}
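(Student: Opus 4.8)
The plan is to recognize this as a Bernstein-type concentration for the empirical mean of a bounded, conditionally independent sequence, where the \emph{fixed} reference function $V^{*}_{h+1}$ makes the true variance the correct scaling. First I would rewrite the left-hand side in terms of per-visit samples. Since $\hat{q}^k(s,a)\hat{p}^k(s'|s,a)=P^k_{\sumtext}(s'|s,a)/n^k(s,a)$ and $V^{*}_{h+1}(s_{\perp})=0$, the quantity $\hat{q}^k(s,a)\hat{p}^k(\cdot|s,a)^\top V^{*}_{h+1}$ is exactly the empirical average $\frac{1}{n^k(s,a)}\sum_{t} Y_t$, where $Y_t:=V^{*}_{h+1}(s'_t)$ is the value of the next state observed at the $t$-th visit to $(s,a)$ (untriggered visits contribute $V^{*}_{h+1}(s_{\perp})=0$). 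Likewise $q(s,a)p(\cdot|s,a)^\top V^{*}_{h+1}$ is the common conditional mean $\mu$ of these samples. Folding the trigger event and the transition into a single ``augmented'' draw $p^{\aug}(\cdot|s,a)$, with $p^{\aug}(s_{\perp}|s,a)=1-q(s,a)$ and $p^{\aug}(s'|s,a)=q(s,a)p(s'|s,a)$, is the key reformulation that recasts this triggered-transition term as a standard transition-estimation term.

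Next I would verify the conditionally independent structure. By the Markov property of the branching MDP, conditioned on the filtration just before any visit to $(s,a)$ — whether that visit occurs at a different node of the same trajectory tree or in a later episode — the trigger and subsequent transition are drawn independently from $p^{\aug}(\cdot|s,a)$. Hence $\{Y_t\}$ is a martingale-difference-type sequence with each $Y_t\in[0,H]$ (using the universal bound $V^{*}_{h+1}\le H$ from Lemma~\ref{lemma:triggered_nodes_main_text}), conditional mean $\mu$, and conditional variance $\var_{s'\sim q,p}\sbr{V^{*}_{h+1}(s')}$. Crucially, $V^{*}_{h+1}$ is deterministic — it is the optimal value of the \emph{true} model and does not depend on the collected samples — so the relevant variance is precisely the fixed quantity in the statement, and no empirical-variance/self-bounding step is required (in contrast to the $\bar V^k$ terms in Line~\ref{line:rm_bonus_qpV_ub}).

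Then I would apply a martingale Bernstein inequality for a fixed number $n$ of such samples to bound $\abr{\frac{1}{n}\sum_{t=1}^{n}Y_t-\mu}$ by $\sqrt{2\var_{s'\sim q,p}\sbr{V^{*}_{h+1}(s')}\,L'/n}+\frac{2H}{3}L'/n$ for the raw log term $L'=\log\sbr{\frac{SN}{\delta'}\cdot\frac{m^{H+1}-m}{m-1}K}$, and take a union bound over all $(s,a)\in\cS\setminus\{s_{\perp}\}\times A^{\univ}$ and over all possible visit counts $n\in\{1,\dots,\frac{m^{H+1}-m}{m-1}K\}$, exactly mirroring the proof of Lemma~\ref{lemma:con_trigger}. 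This upgrades the fixed-$n$ statement to one holding simultaneously for the random $n^k(s,a)$. Finally I would collapse the log factor using the same case analysis as in Lemma~\ref{lemma:con_trigger}, namely $L'\le 4\log\sbr{\frac{SNH(m^{H}\vee K)}{\delta'}}$, which makes $\sqrt{2\cdot 4}\le 4$ and $\frac{2}{3}\cdot 4\le 4$, yielding the stated coefficients $4$ and $4H$ with total failure probability $2\delta'$.

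The main obstacle I anticipate is the careful handling of the adaptive, random visit schedule: because $\pi^k$ is chosen from the history and $(s,a)$ may be visited several times within a single tree, I must argue rigorously that the per-visit samples form a conditionally independent (martingale) sequence to which the Bernstein/Freedman bound applies, and then union-bound over the random number of visits without degrading the variance scaling. The $p^{\aug}$ reformulation and the data-independence of $V^{*}_{h+1}$ are what make this clean; once they are in place, the remaining steps are routine.
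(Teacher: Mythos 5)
Your proposal is correct and follows essentially the same route as the paper: the paper's proof is a one-line instruction to repeat the procedure of Lemma~\ref{lemma:con_trigger} (union bound over $(s,a)$ and the possible visit counts up to $\frac{m^{H+1}-m}{m-1}K$, then the case analysis collapsing the log factor to $4\log\sbr{\frac{SNH(m^{H}\vee K)}{\delta'}}$) with Bernstein's inequality in place of Hoeffding's, which is exactly what you do. Your write-up additionally makes explicit the details the paper leaves implicit — the reformulation via $p^{\aug}$ into per-visit samples with $V^{*}_{h+1}(s_{\perp})=0$, the martingale structure under the adaptive visit schedule, and the fact that $V^{*}_{h+1}$ is data-independent so the true variance appears directly — all of which are consistent with the paper's intent.
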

\begin{proof}[Proof of Lemma~\ref{lemma:con_tri_transition}]
	Using the similar analytical procedure as  Lemma~\ref{lemma:con_trigger} and the Bernstein's inequality, we can obtain this lemma. 
\end{proof}

\begin{lemma}[Concentration of Variance]\label{lemma:con_variance}
	\begin{align}
	\Pr \Bigg[&
	\abr{ \sqrt{\var_{s' \sim \hat{q},\hat{p}}\sbr{\bar{V}^{k}_{h+1}(s')} } - \sqrt{ \var_{s' \sim q,p}\sbr{ V^{*}_{h+1}(s')} } } \leq \sqrt{\ex_{s' \sim \hat{q},\hat{p}} \mbr{ \sbr{\bar{V}^{k}_{h+1}(s')-V^{*}_{h+1}(s')}^2} } \nonumber\\& \qquad + 8H \sqrt{ \frac{ \log\sbr{\frac{SNH(m^{H} \vee K)}{\delta'} } }{n^k(s,a)} }  , \  \forall (s,a)\in \cS \setminus\{s_{\perp}\} \times A^{\univ}, \forall k \in [K] \Bigg] \geq 1-2\delta'
%	\nonumber\\
%	\Pr \Bigg[&
%	\abr{ \sqrt{\var_{s' \sim \hat{q},\hat{p}}\sbr{ \underline{V}^{k}_{h+1}(s')} } - \sqrt{ \var_{s' \sim q,p}\sbr{ V^{*}_{h+1}(s')} } } 
%	\leq  \sqrt{\ex_{s' \sim \hat{q},\hat{p}} \mbr{ V^{*}_{h+1}(s') - \underline{V}^{k}_{h+1}(s')  }^2 }  \nonumber\\& \qquad + 4H \sqrt{ \frac{ \log\sbr{\frac{SNH(m^{H} \vee K)}{\delta'} } }{n^k(s,a)} }  , \  \forall (s,a)\in \cS \setminus\{s_{\perp}\} \times A^{\univ}, \forall k \in [K] \Bigg] \geq 1-2\delta' \label{eq:con_var_V_underline}
	\end{align} 
\end{lemma}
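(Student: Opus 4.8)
The plan is to reduce the statement to two familiar pieces via the triangle inequality, isolating a deterministic value-function discrepancy from a genuinely probabilistic model-estimation discrepancy. It is convenient to view the variances through the \emph{augmented successor distribution}: for a regular pair $(s,a)$, let $\mu(\cdot|s,a)$ put mass $1-q(s,a)$ on $s_{\perp}$ and mass $q(s,a)p(s'|s,a)$ on each $s'\neq s_{\perp}$, and let $\hat{\mu}^k(\cdot|s,a)$ be its empirical analogue built from $\hat{q}^k,\hat{p}^k$. Since the successor $s_{\sigma\oplus\ell}$ is exactly distributed as $\mu$, we have $\var_{s'\sim q,p}[V(s')]=\var_{\hat{\mu}^k\text{ or }\mu}[V]$, and each visit to $(s,a)$ contributes one i.i.d.\ draw from $\mu$, so $\hat{\mu}^k$ is the empirical distribution of $n^k(s,a)$ such draws. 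With $L:=\log(\tfrac{SNH(m^{H}\vee K)}{\delta'})$, I would first write
\[
\left|\sqrt{\var_{\hat{\mu}^k}(\bar V^k_{h+1})}-\sqrt{\var_{\mu}(V^*_{h+1})}\right|
\le \left|\sqrt{\var_{\hat{\mu}^k}(\bar V^k_{h+1})}-\sqrt{\var_{\hat{\mu}^k}(V^*_{h+1})}\right|
+\left|\sqrt{\var_{\hat{\mu}^k}(V^*_{h+1})}-\sqrt{\var_{\mu}(V^*_{h+1})}\right|.
\]

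For the first term, both variances are taken under the \emph{same} measure $\hat{\mu}^k$, so I would invoke the reverse triangle inequality for the $L^2(\hat{\mu}^k)$ seminorm $\|X-\ex X\|$ (standard deviation is translation invariant and sub-additive). This gives $|\sqrt{\var_{\hat{\mu}^k}(\bar V^k_{h+1})}-\sqrt{\var_{\hat{\mu}^k}(V^*_{h+1})}|\le\sqrt{\var_{\hat{\mu}^k}(\bar V^k_{h+1}-V^*_{h+1})}\le\sqrt{\ex_{s'\sim\hat{q},\hat{p}}[(\bar V^k_{h+1}(s')-V^*_{h+1}(s'))^2]}$, which is exactly the first term on the right-hand side of the lemma. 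Crucially, this step is deterministic: it is precisely the device that confines the \emph{data-dependent} optimistic value $\bar V^k_{h+1}$ to a term requiring no concentration, leaving only the fixed, data-independent function $V^*_{h+1}$ (bounded by $H$ via Eq.~\eqref{eq:triggered_nodes_ex}) inside the probabilistic term.

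For the second term, the function $V^*_{h+1}$ is fixed and only the measure changes from empirical to true, so this is a concentration statement for the empirical standard deviation of a bounded function. I would apply a Maurer--Pontil / bounded-differences bound: treating $\sqrt{\var_{\hat{\mu}^k}(V^*_{h+1})}$ as a function of the $n^k(s,a)$ i.i.d.\ successor samples, changing one sample perturbs it by $O(H/\sqrt{n^k(s,a)})$, so McDiarmid's inequality yields $|\sqrt{\var_{\hat{\mu}^k}(V^*_{h+1})}-\sqrt{\var_{\mu}(V^*_{h+1})}|\le c\,H\sqrt{L/n^k(s,a)}$ with constant absorbed into the stated $8H$. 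As in Lemma~\ref{lemma:con_trigger}, I would take a union bound over all $(s,a)\in\cS\setminus\{s_{\perp}\}\times A^{\univ}$ and over all possible values of the random count $n^k(s,a)\le\frac{m^{H+1}-m}{m-1}K$, using the same algebraic simplification $\log(\tfrac{SN}{\delta'}\cdot\frac{m^{H+1}-m}{m-1}K)\le 4L$ to reach the log factor $L$, delivering the claim with probability at least $1-2\delta'$.

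The main obstacle is obtaining the correct $\sqrt{L/n^k(s,a)}$ rate in the second term rather than a weaker $(L/n^k(s,a))^{1/4}$. The naive route—bounding $|\var_{\hat{\mu}^k}(V^*)-\var_{\mu}(V^*)|=O(H^2\sqrt{L/n})$ and then using $|\sqrt{x}-\sqrt{y}|\le\sqrt{|x-y|}$—loses a square root and gives only a fourth-root rate. The point is that one must concentrate the standard deviation \emph{directly} (exploiting that $\sqrt{\var}$ is $O(1/\sqrt{n})$-Lipschitz in each sample), which is exactly what the bounded-differences/Maurer--Pontil argument supplies; recognizing that this sharper tool is needed, and verifying the bounded-differences constant for the augmented successor distribution, is the crux of the proof. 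The remaining bookkeeping (union bound over the exponentially large visit counts) is routine and identical to Lemma~\ref{lemma:con_trigger}.
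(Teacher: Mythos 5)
Your overall strategy coincides exactly with the paper's: split by the triangle inequality into (i) a same-measure term $\abr{ \sqrt{\var_{s' \sim \hat{q},\hat{p}}\sbr{\bar{V}^{k}_{h+1}(s')}} - \sqrt{\var_{s' \sim \hat{q},\hat{p}}\sbr{V^{*}_{h+1}(s')}} }$, bounded deterministically by $\sqrt{\ex_{s' \sim \hat{q},\hat{p}}\mbr{ \sbr{\bar{V}^{k}_{h+1}(s') - V^{*}_{h+1}(s')}^2 }}$ via the reverse triangle inequality for standard deviations (the paper cites Eq.~(52) of Proposition~2 in \cite{zanette2019tighter} for precisely this), and (ii) a fixed-function term $\abr{ \sqrt{\var_{s' \sim \hat{q},\hat{p}}\sbr{V^{*}_{h+1}(s')}} - \sqrt{\var_{s' \sim q,p}\sbr{V^{*}_{h+1}(s')}} }$, which requires concentration of the empirical standard deviation at rate $H\sqrt{L/n^k(s,a)}$, combined with the same union bound over $(s,a)$ and the random counts as in Lemma~\ref{lemma:con_trigger}. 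Your identification of the augmented successor distribution, and your warning against the naive $\abr{\sqrt{x}-\sqrt{y}} \le \sqrt{\abr{x-y}}$ route, are both on target.

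The gap is in the mechanism you give for (ii). With $n := n^k(s,a)$ samples in $[0,H]$, the per-sample bounded-difference constant of the empirical standard deviation is indeed $c_i = H/\sqrt{n}$, but then $\sum_{i=1}^{n} c_i^2 = H^2$, so McDiarmid's inequality gives only $\Pr\mbr{ \abr{\hat{\sigma} - \ex\mbr{\hat{\sigma}}} \ge t } \le 2\exp\sbr{-2t^2/H^2}$, i.e.\ a deviation of order $H\sqrt{L}$ with \emph{no decay in $n$} — useless here. Bounded differences plus McDiarmid cannot deliver the $H\sqrt{L/n}$ rate you claim (and McDiarmid also concentrates $\hat{\sigma}$ around $\ex\mbr{\hat{\sigma}}$ rather than around the true standard deviation, so you would additionally need a bias bound). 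The correct $\sqrt{L/n}$ rate comes from a genuinely different argument: either invoke the Maurer--Pontil theorem purely as a black box — noting its proof uses the entropy method for self-bounding functions applied to the sample variance, not bounded differences — or argue as in the result the paper cites (Proposition~2, Eq.~(53) of \cite{zanette2019tighter}): writing $\bar{v} := \ex_{s' \sim q,p}\mbr{V^{*}_{h+1}(s')}$, apply Bernstein's inequality to the bounded function $\sbr{V^{*}_{h+1} - \bar{v}}^2$, use $\var_{s' \sim q,p}\sbr{ \sbr{V^{*}_{h+1}(s') - \bar{v}}^2 } \le H^2 \var_{s' \sim q,p}\sbr{V^{*}_{h+1}(s')}$, and complete the square to obtain $\abr{ \sqrt{\var_{s' \sim \hat{q},\hat{p}}\sbr{V^{*}_{h+1}(s')}} - \sqrt{\var_{s' \sim q,p}\sbr{V^{*}_{h+1}(s')}} } = O\sbr{H\sqrt{L/n}}$. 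With either repair, the rest of your argument goes through and matches the paper's proof.
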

\begin{proof}[Proof of Lemma~\ref{lemma:con_variance}]
	Using the similar analytical procedure as  Lemma~\ref{lemma:con_trigger} and Proposition 2 (in particular, Eq.~(53)) in \cite{zanette2019tighter}, we can obtain 
	\begin{align}
		\Pr \Bigg[&
		\abr{ \sqrt{\var_{s' \sim \hat{q},\hat{p}}\sbr{V^{*}_{h+1}(s')} } - \sqrt{ \var_{s' \sim q,p}\sbr{ V^{*}_{h+1}(s')} } }  \leq 8 H \sqrt{ \frac{ \log\sbr{\frac{SNH(m^{H} \vee K)}{\delta'} } }{n^k(s,a)} } ,
		\nonumber\\& \  \forall (s,a)\in \cS \setminus\{s_{\perp}\} \times A^{\univ}, \forall k \in [K] \Bigg] \geq 1-2\delta' \label{eq:dis_hat_var_fix_V}
	\end{align}
	
	With probability $1-2\delta'$, for any $(s,a)\in \cS \setminus\{s_{\perp}\} \times A^{\univ}$ and $k \in [K]$, we have
	\begin{align*}
	& \abr{ \sqrt{\var_{s' \sim \hat{q},\hat{p}}\sbr{\bar{V}^{k}_{h+1}(s')} } - \sqrt{ \var_{s' \sim q,p}\sbr{ V^{*}_{h+1}(s')} } } 
	\\
	\leq & \abr{ \sqrt{\var_{s' \sim \hat{q},\hat{p}}\sbr{\bar{V}^{k}_{h+1}(s')} } - \sqrt{\var_{s' \sim \hat{q},\hat{p}}\sbr{V^{*}_{h+1}(s')} } } \\& + \abr{ \sqrt{\var_{s' \sim \hat{q},\hat{p}}\sbr{V^{*}_{h+1}(s')} } - \sqrt{ \var_{s' \sim q,p}\sbr{ V^{*}_{h+1}(s')} } } 
	\\
	\overset{(a)}{\leq} & \sqrt{\var_{s' \sim \hat{q},\hat{p}}\sbr{ \bar{V}^{k}_{h+1}(s') - V^{*}_{h+1}(s')} } + 8H \sqrt{ \frac{ \log\sbr{\frac{SNH(m^{H} \vee K)}{\delta'} } }{n^k(s,a)} }
	\\
	\leq & \sqrt{\ex_{s' \sim \hat{q},\hat{p}} \mbr{ \sbr{\bar{V}^{k}_{h+1}(s')-V^{*}_{h+1}(s')}^2} }  + 8H \sqrt{ \frac{ \log\sbr{\frac{SNH(m^{H} \vee K)}{\delta'} } }{n^k(s,a)} }
	\end{align*}
	where (a) uses Proposition 2 (in particular, Eqs.~(52)) in \cite{zanette2019tighter} and Eq.~\eqref{eq:dis_hat_var_fix_V}.
	
%	We can also prove Eq.~\eqref{eq:con_var_V_underline} in the similar manner.
\end{proof}

%\subsubsection{Summary of Concentration Events}

To summarize the concentration lemmas used, we define the following concentration events:

\begin{align*}
%	\cE_{\textup{vis,Bst}} := & \Bigg[ \abr{ n^k(s,a)- \sum_{k'<k} w_{k'}(s,a) }  \leq 2H\sqrt{k \log \sbr{\frac{SNK}{\delta'}}}  + \frac{m^{H+1}-m}{m-1} \log \sbr{\frac{SNK}{\delta'}} ,  \\& \quad \forall (s,a)\in \cS \setminus\{s_{\perp}\} \times A^{\univ}, \forall k \in [K] \Bigg]
%	\\
%	\cE_{\textup{vis,Ber}} := & \mbr{ n^k(s,a) \geq \frac{1}{2} \sum_{k'<k} w_{k'}(s,a) - \frac{2H(m^{H+1}-m)}{m-1} \log \sbr{\frac{SNm}{\delta'} } , \ \forall (s,a)\in \cS \setminus\{s_{\perp}\} \times A^{\univ}, \forall k \in [K] }
%	\\
	\cE_{\textup{tri}} := & \mbr{\abr{ \hat{q}^k(s,a)-q(s,a) } \leq 4 \sqrt{  \frac{\log\sbr{\frac{SNH(m^{H} \vee K)}{\delta'} }}{n^k(s,a)} } , \  \forall (s,a)\in \cS \setminus\{s_{\perp}\} \times A^{\univ}, \forall k \in [K] }
	\\
	\cE_{\textup{trans}} := & \Bigg[ \abr{ \sbr{ \hat{q}^k(s,a) \hat{p}^k(\cdot|s,a) - q(s,a) p(\cdot|s,a) }^\top V^{*}_{h+1} } \leq  4\sqrt{\frac{\var_{s' \sim q,p}\sbr{ V^{*}_{h+1}(s')} \log\sbr{\frac{SNH(m^{H} \vee K)}{\delta'} } }{n^k(s,a)}} \\&+ 4\frac{ H \log\sbr{\frac{SNH(m^{H} \vee K)}{\delta'} } }{n^k(s,a)} , \  \forall (s,a)\in \cS \setminus\{s_{\perp}\} \times A^{\univ}, \forall k \in [K] \Bigg]
	\\
	\cE_{\textup{var}} := & \Bigg[
	\abr{ \sqrt{\var_{s' \sim \hat{q},\hat{p}}\sbr{\bar{V}^{k}_{h+1}(s')} } - \sqrt{ \var_{s' \sim q,p}\sbr{ V^{*}_{h+1}(s')} } } \leq \sqrt{\ex_{s' \sim \hat{q},\hat{p}} \mbr{ \bar{V}^{k}_{h+1}(s')-V^{*}_{h+1}(s')}^2 } \nonumber\\& \qquad + 8H \sqrt{ \frac{ \log\sbr{\frac{SNH(m^{H} \vee K)}{\delta'} } }{n^k(s,a)} }  , \  \forall (s,a)\in \cS \setminus\{s_{\perp}\} \times A^{\univ}, \forall k \in [K] \Bigg] 
	\\
%	\cE_{\textup{var,pes}} := & \Bigg[
%	\abr{ \sqrt{\var_{s' \sim \hat{q},\hat{p}}\sbr{ \underline{V}^{k}_{h+1}(s')} } - \sqrt{ \var_{s' \sim q,p}\sbr{ V^{*}_{h+1}(s')} } } 
%	\leq  \sqrt{\ex_{s' \sim \hat{q},\hat{p}} \mbr{ V^{*}_{h+1}(s') - \underline{V}^{k}_{h+1}(s')  }^2 }  \nonumber\\& \qquad + 4H \sqrt{ \frac{ \log\sbr{\frac{SNH(m^{H} \vee K)}{\delta'} } }{n^k(s,a)} }  , \  \forall (s,a)\in \cS \setminus\{s_{\perp}\} \times A^{\univ}, \forall k \in [K] \Bigg] 
%	\\
	\cE:=&
	% \cE_{\textup{vis,Bst}} \cap \cE_{\textup{vis,Ber}} \cap 
	\cE_{\textup{tri}} \cap \cE_{\textup{trans}} \cap \cE_{\textup{var}} 
\end{align*}  

\begin{lemma}\label{lemma:rm_combine_concentration}
	Letting $\delta':=\frac{\delta}{6}$, the concentration event $\cE$ satisfies that
	$$
	\Pr \mbr{\cE} \geq 1-6\delta'=1-\delta
	$$
\end{lemma}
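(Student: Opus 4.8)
The plan is to prove Lemma~\ref{lemma:rm_combine_concentration} by a single union bound over the three component concentration events whose tail probabilities have already been controlled in Lemmas~\ref{lemma:con_trigger}, \ref{lemma:con_tri_transition}, and~\ref{lemma:con_variance}. The key observation is that $\cE$ is defined as the intersection $\cE_{\textup{tri}} \cap \cE_{\textup{trans}} \cap \cE_{\textup{var}}$, and each of these three events is exactly the event whose probability is lower-bounded by $1-2\delta'$ in the corresponding lemma. Thus all of the genuine analytic work — Hoeffding's inequality with a union over $(s,a)$ and over the counts $n^k(s,a)$ for $\cE_{\textup{tri}}$, Bernstein's inequality for $\cE_{\textup{trans}}$, and the variance-difference argument adapted from Proposition~2 of \cite{zanette2019tighter} for $\cE_{\textup{var}}$ — is already done upstream, and what remains is purely combinatorial bookkeeping of failure probabilities.

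First I would invoke the three lemmas to record that $\Pr[\cE_{\textup{tri}}^c] \leq 2\delta'$, $\Pr[\cE_{\textup{trans}}^c] \leq 2\delta'$, and $\Pr[\cE_{\textup{var}}^c] \leq 2\delta'$, where the superscript $c$ denotes complement. Then, since the complement of an intersection is the union of complements, I would write
\begin{align*}
\Pr[\cE^c] = \Pr\mbr{ \cE_{\textup{tri}}^c \cup \cE_{\textup{trans}}^c \cup \cE_{\textup{var}}^c } \leq \Pr[\cE_{\textup{tri}}^c] + \Pr[\cE_{\textup{trans}}^c] + \Pr[\cE_{\textup{var}}^c] \leq 6\delta',
\end{align*}
using subadditivity of probability (the union bound). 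Taking complements gives $\Pr[\cE] \geq 1 - 6\delta'$ with no independence assumption needed among the three events.

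Finally, I would substitute the stated choice $\delta' := \frac{\delta}{6}$ to conclude $\Pr[\cE] \geq 1 - 6 \cdot \frac{\delta}{6} = 1 - \delta$, which is precisely the claim. I do not anticipate any real obstacle here: the only point requiring minor care is confirming the constant, namely that three events each failing with probability at most $2\delta'$ yield a total failure budget of $6\delta'$ rather than $3\delta'$, so that the rescaling $\delta' = \delta/6$ (and not $\delta/3$) is the correct one to recover the target confidence $1-\delta$. This clean allocation of the failure budget across the trigger, transition, and variance estimates is exactly why the factor of $6$ appears in the definition of $L$ and in the input parameters of $\algtteuler$.
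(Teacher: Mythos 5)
Your proposal is correct and matches the paper's proof exactly: the paper's argument is precisely to combine Lemmas~\ref{lemma:con_trigger}--\ref{lemma:con_variance} (each holding with probability at least $1-2\delta'$) via a union bound over the three complements, yielding $\Pr[\cE] \geq 1-6\delta' = 1-\delta$ with $\delta' = \delta/6$. Your write-up simply makes the bookkeeping explicit, which the paper leaves implicit.
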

\begin{proof}[Proof of Lemma~\ref{lemma:rm_combine_concentration}]
	We can obtain this lemma by combining Lemmas~\ref{lemma:con_trigger}-\ref{lemma:con_variance}.
\end{proof}

%Moreover, conditioning on event $\cE$, we have
%\begin{align*}
%	& \sqrt{\frac{\var_{s' \sim \hat{q},\hat{p}}\sbr{ \bar{V}^{k}_{h+1}(s')} \log\sbr{\frac{SNH(m^{H} \vee K)}{\delta'} } }{n^k(s,a)}} 
%	\\
%	= & \sqrt{\frac{ \log\sbr{\frac{SNH(m^{H} \vee K)}{\delta'} } }{n^k(s,a)}} \cdot \sqrt{\var_{s' \sim \hat{q},\hat{p}}\sbr{ \bar{V}^{k}_{h+1}(s')}}
%	\\
%	\leq & \sqrt{\frac{ \log\sbr{\frac{SNH(m^{H} \vee K)}{\delta'} } }{n^k(s,a)}} \Bigg( \sqrt{\var_{s' \sim q,p} \sbr{ V^{*}_{h+1}(s')} } \\&+ \sqrt{\ex_{s' \sim \hat{q},\hat{p}} \mbr{ \bar{V}^{k}_{h+1}(s')-V^{*}_{h+1}(s')}^2 } + 8H \sqrt{ \frac{ \log\sbr{\frac{SNH(m^{H} \vee K)}{\delta'} } }{n^k(s,a)} } \Bigg)
%	\\
%	= & \sqrt{\frac{ \var_{s' \sim q,p}\sbr{V^{*}_{h+1}(s')} \log\sbr{\frac{SNH(m^{H} \vee K)}{\delta'} } }{n^k(s,a)}} \\&+ \sqrt{\frac{ \ex_{s' \sim \hat{q},\hat{p}} \mbr{ \bar{V}^{k}_{h+1}(s')-V^{*}_{h+1}(s')}^2  \log\sbr{\frac{SNH(m^{H} \vee K)}{\delta'} } }{n^k(s,a)}} + 8H \frac{ \log\sbr{\frac{SNH(m^{H} \vee K)}{\delta'} } }{n^k(s,a)}  
%\end{align*}

\subsubsection{Visitation} \label{apx:lemmas_visitation}

Below we present an important bound on visitation, which will be used in the proof of Theorem~\ref{thm:regret_ub}.

\begin{lemma}[Regret Bound of Visitation]\label{lemma:bound_visitation}
	Suppose that the concentration event $\cE$ holds. Then, it holds that
	$$
	\sum_{k=1}^{K} \sum_{\sigma=\emptyset}^{m^{\oplus(H-1)}} \sum_{\ell=1}^{m} \sum_{(s,a), s \neq s_{\perp}} \frac{w_{k\sigma\ell}(s,a)}{n^k(s,a)} \leq SN  \log \sbr{ KH } 
	$$
\end{lemma}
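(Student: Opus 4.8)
The plan is to first collapse the inner summations over $\sigma$ and $\ell$, and then reduce the whole estimate to a per-pair counting bound. Since the denominator $n^k(s,a)$ does not depend on $\sigma$ or $\ell$, and by definition $\sum_{\sigma=\emptyset}^{m^{\oplus(H-1)}}\sum_{\ell=1}^{m} w_{k\sigma\ell}(s,a) = \ex\big[\sum_{\sigma}\sum_{\ell} X_{k\sigma\ell}(s,a)\big] = w_k(s,a)$, the left-hand side equals $\sum_{k=1}^{K}\sum_{(s,a),\,s\neq s_{\perp}} \frac{w_k(s,a)}{n^k(s,a)}$. As there are at most $SN$ regular state-base action pairs, it then suffices to bound $\sum_{k=1}^{K} \frac{w_k(s,a)}{n^k(s,a)}$ by (essentially) $\log(KH)$ for each fixed pair $(s,a)$, the constant being absorbed into the $(S-1)N \le SN$ slack.

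Fix such a pair and write $\bar n^{k}(s,a) := \sum_{k'<k} w_{k'}(s,a)$ for the expected cumulative visitation. The heart of the argument is to show that, on the good event, the realized count $n^k(s,a)$ tracks $\bar n^k(s,a)$, i.e. $n^k(s,a) \ge \tfrac12 \bar n^k(s,a)$ once $\bar n^k(s,a)$ exceeds a logarithmic threshold. I would obtain this from a Freedman/Bernstein inequality applied to the martingale $\sum_{k'<k}\sbr{X_{k'}(s,a)-w_{k'}(s,a)}$. The delicate point is that a single episode can visit $(s,a)$ at many tree nodes, so the increment $X_{k}(s,a)$ is \emph{not} bounded by $1$ but only by the (exponentially large) number of regular nodes. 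Here the second-moment control of triggered states (Lemma~\ref{lemma:triggered_nodes_main_text}, Eq.~\eqref{eq:triggered_nodes_sec_moment}) is exactly what keeps the conditional variance of the increments at $O(H^2)$, so that the Bernstein bound does not degrade with $m^{H}$. Episodes with $\bar n^k(s,a)$ below the threshold contribute only lower-order terms and are treated separately, under the convention $n^k(s,a)\ge 1$.

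Given this comparison, I would finish with the standard integral/telescoping estimate: writing $w_{k}(s,a)=\bar n^{k+1}(s,a)-\bar n^{k}(s,a)$,
\[
\sum_{k=1}^{K}\frac{w_k(s,a)}{n^k(s,a)} \;\lesssim\; \sum_{k=1}^{K}\frac{\bar n^{k+1}(s,a)-\bar n^{k}(s,a)}{\bar n^{k}(s,a)} \;\lesssim\; \log\!\sbr{\bar n^{K+1}(s,a)}.
\]
Finally, since $w_k(s,a)=\ex[X_k(s,a)]$ is at most the expected number of regular nodes, which is $\le H$ by Eq.~\eqref{eq:triggered_nodes_ex}, we get $\bar n^{K+1}(s,a)=\sum_{k=1}^{K} w_k(s,a)\le HK$, so each per-pair sum is of order $\log(KH)$; summing over the at most $SN$ regular pairs yields the claimed $SN\log(KH)$.

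The main obstacle is precisely the realized-versus-expected count comparison in paragraph two: because the per-episode visitation $X_k(s,a)$ can be exponentially large, the naive bounded-difference counting argument from standard RL is unavailable, and the clean $\log(KH)$ (rather than $\log(m^{H}K)$) survives only because (i) the variance of the increments is tamed by the $O(H^2)$ second-moment bound on triggered states, and (ii) the quantity entering the logarithm is the \emph{expected} cumulative count $\sum_k w_k(s,a)\le HK$ rather than the realized one. Verifying that the additive logarithmic slack and the small-count episodes remain lower-order is the remaining bookkeeping.
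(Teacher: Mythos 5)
Your first step (collapsing the sums over $\sigma,\ell$ into $w_k(s,a)$) and your final step (a telescoping bound whose logarithm contains the \emph{expected} cumulative count, at most $KH$ by Eq.~\eqref{eq:triggered_nodes_ex}) both agree with the paper. The genuine gap is in the middle: the realized-versus-expected comparison $n^k(s,a)\geq\tfrac12\bar n^k(s,a)$ via Freedman/Bernstein ``once $\bar n^k$ exceeds a logarithmic threshold,'' which is exactly the part you defer as bookkeeping. Writing $L$ for the logarithmic factor, the only tool available is the second-moment bound of Lemma~\ref{lemma:triggered_nodes_main_text}, which caps the conditional variance of the increments $X_k(s,a)$ at $3H^2$; a one-sided Freedman inequality (the increments of $\bar n^k - n^k$ are bounded above by $w_k\le H$, so the lower tail is indeed accessible) then gives only the \emph{additive} deviation $\bar n^k-n^k\lesssim H\sqrt{kL}+HL$. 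Hence the comparison holds only once $\bar n^k\gtrsim H\sqrt{kL}$, a polynomial rather than logarithmic threshold. A multiplicative (Chernoff-type) lower tail, which is what produces a polylogarithmic threshold in standard RL where each pair is visited at most once per step, would require control of the moment generating function of $X_k(s,a)$ --- equivalently a Bernstein-type condition $\ex\mbr{X_k(s,a)^2}\lesssim \mathrm{poly}(H)\cdot w_k(s,a)$ --- and nothing of this sort is established in the paper: $X_k(s,a)$ is controlled only in first and second moment and can be exponentially large. As a consequence, the below-threshold episodes are \emph{not} lower order: for a pair with per-episode visitation probability $w$, their contribution is bounded only by $\min\lbr{H^2L/w,\; H\sqrt{KL}}$, which is polynomial in $K$ and destroys the claimed $SN\log(KH)$.

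The paper avoids concentration entirely, and this is the idea you are missing. After the collapse, it keeps the \emph{realized} counts and telescopes pathwise inside the expectation,
$$
\sum_{k=1}^{K} \frac{X_k(s,a)}{\sum_{k'<k}X_{k'}(s,a)} \;\leq\; \log \sbr{ \sum_{k=1}^{K} X_{k}(s,a) } ,
$$
and then applies Jensen's inequality to move the expectation inside the logarithm:
$\ex\mbr{\log\sbr{\sum_{k} X_k(s,a)}}\leq \log\sbr{\ex\mbr{\sum_{k} X_k(s,a)}}\leq \log(KH)$,
the last inequality being exactly the first-moment bound you invoke (each base action can be chosen at most once per state, so $\ex[X_k(s,a)]\le H$). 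So your correct insight --- that the quantity entering the logarithm must be the expected cumulative count $\le KH$ rather than the realized one --- is delivered by Jensen's inequality, not by a realized-versus-expected concentration step; making your concentration route rigorous would require new structural lemmas (MGF or size-biased moment control of per-episode visit counts) that go beyond what the paper provides.
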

\begin{proof}[Proof of Lemma~\ref{lemma:bound_visitation}]
	\begin{align*}
	\sum_{k=1}^{K} \sum_{\sigma=\emptyset}^{m^{\oplus(H-1)}} \sum_{\ell=1}^{m} \sum_{(s,a), s \neq s_{\perp}} \frac{w_{k\sigma\ell}(s,a)}{n^k(s,a)}= & \sum_{k=1}^{K} \sum_{\sigma=\emptyset}^{m^{\oplus(H-1)}} \sum_{\ell=1}^{m} \ex_{(s_{\sigma},a_{\sigma \oplus \ell})\sim \pi^k} \mbr{\frac{1}{n^k(s_{\sigma},a_{\sigma \oplus \ell})} \cdot \indicator{s_{\sigma} \neq s_{\perp}} }
	\\
	= & \sum_{k=1}^{K} \ex_{X_k\sim \pi^k} \mbr{ \sum_{(s,a), s \neq s_{\perp}} 	X_k(s,a) \frac{1}{n^k(s,a)}}
	\\
	= & \ex_{X_k\sim \pi^k} \mbr{ \sum_{(s,a), s \neq s_{\perp}}  \sum_{k=1}^{K}	X_k(s,a) \frac{1}{n^k(s,a)}}	
	\\
	= & \ex_{X_k\sim \pi^k} \mbr{ \sum_{(s,a), s \neq s_{\perp}}  \sum_{k=1}^{K}	X_k(s,a) \frac{1}{ \sum_{k'<k} X_{k'}(s,a)}}
	\\
	\leq & \ex_{X_k\sim \pi^k} \mbr{ \sum_{(s,a), s \neq s_{\perp}}  \log \sbr{ \sum_{k=1}^{K} X_{k}(s,a) } }  
	\\
	\overset{\textup{(a)}}{\leq} & \sum_{(s,a), s \neq s_{\perp}}  \log \sbr{ \ex_{X_k\sim \pi^k} \mbr{\sum_{k=1}^{K} X_{k}(s,a) }}   
	\\
	\overset{\textup{(b)}}{\leq} & SN  \log \sbr{ KH }  ,
	\end{align*}	
	where (a) uses Jensen's inequality, and (b) is due to that for a fixed $(s,a)$ such that $s \neq s_{\perp}$, $ \ex[X_{k}(s,a)] \leq \ex[ \sum_{\sigma=\emptyset}^{m^{\oplus(H-1)}}  \indicator{ s_{\sigma} \neq s_{\perp}} ] \leq H$, since a single base action cannot be chosen twice in a state.

\end{proof}

%\begin{lemma}[Minimal Contribution to Regret]\label{lemma:minimal_contribution}
%	It holds that 
%	$$
%	\sum_{k=1}^{K} \sum_{\sigma=\emptyset}^{m^{\oplus(H-1)}} \sum_{\ell=1}^{m} \sum_{(s,a) \notin \cL_k,s\neq s_{\perp}} w_{k\sigma\ell}(s,a) \leq  8\frac{H(m^{H+1}-m)}{m-1} SN \log \sbr{\frac{SNm}{\delta'}} + 5SNH 
%	$$
%\end{lemma}
%\begin{proof}
%	\begin{align*}
%	\sum_{k=1}^{K} \sum_{\sigma=\emptyset}^{m^{\oplus(H-1)}} \sum_{\ell=1}^{m} \sum_{(s,a) \notin \cL_k,s\neq s_{\perp}} w_{k\sigma\ell}(s,a) = & \sum_{k=1}^{K}  \sum_{(s,a) \notin \cL_k,s\neq s_{\perp}} w_{k}(s,a)
%	\\
%	= & \sum_{(s,a),s\neq s_{\perp}} \sum_{k=1}^{K}  w_{k}(s,a) \cdot \indicator{(s,a) \notin \cL_k}
%	\end{align*}
%	For $(s,a)\in \cS \setminus\{s_{\perp}\} \times A^{\univ}$, let $K_0(s,a)$ denote the last episode among $k=1,2,\dots$ such that $(s,a) \notin \cL_k$, i.e., $\frac{1}{4} \sum_{k'<k} w_{k'}(s,a) < \frac{2H(m^{H+1}-m)}{m-1} \log \sbr{\frac{SNm}{\delta'}} + H$.
%	Then, we have 
%	\begin{align*}
%	\sum_{k=1}^{K} \sum_{\sigma=\emptyset}^{m^{\oplus(H-1)}} \sum_{\ell=1}^{m} \sum_{(s,a) \notin \cL_k,s\neq s_{\perp}} w_{k\sigma\ell}(s,a) = & \sum_{(s,a),s\neq s_{\perp}} \sum_{k=1}^{K_0(s,a)}  w_{k}(s,a) 
%	\\
%	\leq & \sum_{(s,a),s\neq s_{\perp}} \sbr{ 4 \cdot \frac{2H(m^{H+1}-m)}{m-1} \log \sbr{\frac{SNm}{\delta'}} + 4 H +H}
%	\\
%	= &  8\frac{H(m^{H+1}-m)}{m-1} SN \log \sbr{\frac{SNm}{\delta'}} + 5SNH 
%	\end{align*}
%\end{proof}

\subsubsection{Optimism and Pessimism}
Next, we prove the optimism and pessimism of the constructed value functions in algorithm $\algtteuler$, and bound the gap between optimistic and pessimistic value functions.
Recall that $L:=\log\sbr{\frac{SNH(m^{H} \vee K)}{\delta'} }$. 

\begin{lemma}[Optimism]\label{lemma:optimism}
	Suppose that the concentration event $\cE$ holds. Then,
	\begin{align*}
	\underline{V}^{k}_{h}(s) \leq V^{*}_{h}(s) \leq \bar{V}^{k}_{h}(s), \ \forall s \in \cS, h \in [H], k \in [K]
	\end{align*} 
\end{lemma}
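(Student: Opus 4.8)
The plan is to prove the two-sided bound $\underline{V}^{k}_{h}(s) \leq V^{*}_{h}(s) \leq \bar{V}^{k}_{h}(s)$ by backward induction on $h$, from $h=H+1$ down to $h=1$. The base case $h=H+1$ is immediate since $\bar{V}^{k}_{H+1}=\underline{V}^{k}_{H+1}=V^{*}_{H+1}=0$ by definition. For the inductive step, assume $\underline{V}^{k}_{h+1}(s') \leq V^{*}_{h+1}(s') \leq \bar{V}^{k}_{h+1}(s')$ for all $s'$. The key object is the component function $f^k_h(s,a)$ (Line~\ref{line:rm_component_function}), which plays the role of a per-base-action optimistic estimate. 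I would first establish the central claim that, on the event $\cE$, for every regular state-base action pair $(s,a)$,
\begin{equation*}
f^k_h(s,a) \geq q(s,a)\sbr{r(s,a)+p(\cdot|s,a)^\top V^{*}_{h+1}},
\end{equation*}
which is exactly the per-pair contribution to $Q^*_h(s,A)=\sum_{a\in A}q(s,a)(r(s,a)+p(\cdot|s,a)^\top V^*_{h+1})$ in the Bellman equation. Summing over $a\in A$ and taking the max over $A\in\cA$ then yields $\max_{A}\sum_{a\in A}f^k_h(s,a)\geq \max_A Q^*_h(s,A)=V^*_h(s)$, and since $V^*_h(s)\leq H$ by Remark~2 (Eq.~\eqref{eq:triggered_nodes_ex}), the $\min$ with $H$ in Line~\ref{line:rm_V_function_ub} does not destroy optimism, giving $\bar{V}^k_h(s)\geq V^*_h(s)$.

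To prove the central claim, I would decompose $f^k_h(s,a)-q(s,a)(r(s,a)+p(\cdot|s,a)^\top V^*_{h+1})$ into a triggered-reward part and a triggered-transition part, matching the structure of $\term{1}$ and $\term{2},\term{3}$ in the regret decomposition. The reward part $(\hat{q}^k(s,a)+b^{k,q}(s,a)-q(s,a))r(s,a)$ is nonnegative because $b^{k,q}$ dominates $|\hat{q}^k-q|$ on $\cE_{\textup{tri}}$. For the transition part, I would write $\hat{q}^k\hat{p}^k{}^\top\bar{V}^k_{h+1}+b^{k,qpV}-q\,p^\top V^*_{h+1}$ and split it as $\hat{q}^k\hat{p}^k{}^\top(\bar{V}^k_{h+1}-V^*_{h+1})$ (nonnegative by the induction hypothesis, since $\hat{q}^k\hat{p}^k$ has nonnegative entries) plus $(\hat{q}^k\hat{p}^k-q\,p)^\top V^*_{h+1}+b^{k,qpV}$. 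The latter is controlled on $\cE_{\textup{trans}}$ by the Bernstein-type deviation involving $\sqrt{\var_{s'\sim q,p}[V^*_{h+1}(s')]}$; the composite bonus $b^{k,qpV}$ in Line~\ref{line:rm_bonus_qpV_ub} is designed precisely so that, after invoking $\cE_{\textup{var}}$ to replace the empirical variance of $\bar{V}^k_{h+1}$ by the true variance of $V^*_{h+1}$ (paying the $\sqrt{\ex_{s'}[(\bar{V}^k_{h+1}-\underline{V}^k_{h+1})^2]}$ slack term, which itself upper-bounds $\sqrt{\ex_{s'}[(\bar{V}^k_{h+1}-V^*_{h+1})^2]}$ by induction), the bonus dominates the deviation. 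The pessimistic direction $\underline{V}^k_h(s)\leq V^*_h(s)$ is entirely symmetric: the same bonuses are subtracted in Line~\ref{line:rm_V_function_lb}, and the policy $\pi^k_h$ is the maximizer for the \emph{optimistic} value, so $\sum_{a\in\pi^k_h(s)}(\cdots)\leq Q^*_h(s,\pi^k_h(s))\leq V^*_h(s)$, with the $\max\{\cdot,0\}$ truncation harmless since $V^*_h\geq 0$.

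The main obstacle I anticipate is handling the composite bonus $b^{k,qpV}$ correctly in the transition term. Because trigger and transition are treated as a single distribution $\hat{q}^k\hat{p}^k$ (rather than separately), the concentration in $\cE_{\textup{trans}}$ is stated for the product, and aligning the variance proxy $\var_{s'\sim q,p}[V^*_{h+1}(s')]$ appearing there with the empirical $\var_{s'}[\bar{V}^k_{h+1}(s')]$ appearing in $b^{k,qpV}$ requires carefully chaining $\cE_{\textup{var}}$ and bounding the cross terms via the induction hypothesis $\underline{V}^k_{h+1}\leq V^*_{h+1}\leq \bar{V}^k_{h+1}$, so that $|\bar{V}^k_{h+1}-V^*_{h+1}|\leq \bar{V}^k_{h+1}-\underline{V}^k_{h+1}$ pointwise. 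Getting the constants in $b^{k,qpV}$ (the factors $4$ and $36H$) to absorb all deviation and slack terms simultaneously is the delicate bookkeeping step; everything else is a routine propagation of the induction hypothesis through the Bellman recursion.
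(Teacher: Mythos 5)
Your proposal is correct and takes essentially the same route as the paper's proof: backward induction with the same trivial handling of the $\min\{\cdot,H\}$ and $\max\{\cdot,0\}$ truncations, the same domination argument chaining $\cE_{\textup{tri}}$, then $\cE_{\textup{var}}$ combined with the induction-based bound $|\bar{V}^k_{h+1}-V^*_{h+1}|\leq \bar{V}^k_{h+1}-\underline{V}^k_{h+1}$, then $\cE_{\textup{trans}}$, and the same conclusion for the pessimistic direction via $Q^*_h(s,\pi^k_h(s))\leq V^*_h(s)$. The only cosmetic difference is that you phrase the key inequality per base action, $f^k_h(s,a)\geq q(s,a)(r(s,a)+p(\cdot|s,a)^\top V^*_{h+1})$, whereas the paper works with the super-action sums $\bar{Q}^{\pi^k}_h(s,A)\geq Q^*_h(s,A)$; these are equivalent since the $Q$-functions are exactly sums of the components.
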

\begin{proof}[Proof of Lemma~\ref{lemma:optimism}]
	We prove this lemma by induction. Since $\underline{V}^{k}_{H+1}(s) = V^{*}_{H+1}(s) = \bar{V}^{k}_{H+1}(s)=0, \forall s \in \cS$, it suffices to prove that if $\underline{V}^{k}_{h+1}(s) \leq V^{*}_{h}(s) \leq \bar{V}^{k}_{h+1}(s), \forall s \in \cS$, then $\underline{V}^{k}_{h}(s) \leq V^{*}_{h}(s) \leq \bar{V}^{k}_{h}(s), \forall s \in \cS$.
	
	First, we prove the optimistic direction, i.e., $\bar{V}^{k}_{h}(s) \geq V^{*}_{h}(s) , \ \forall s \in \cS$. 
	In the following, we prove $\bar{Q}^{\pi^k}_h(s,A) \geq Q^{*}_h(s,A)$ for any $s \in \cS$ and $A \in \cA$.
	If $\bar{Q}^{\pi^k}_h(s,A)=H$, then $\bar{Q}^{\pi^k}_h(s,A) = H \geq Q^{*}_h(s,A)$ trivially holds. Otherwise,
	
	\begin{align*}
		\bar{Q}^{\pi^k}_h(s,A) \geq & \sum_{a \in A} \sbr{ \sbr{\hat{q}^k(s,a)+b_k^q(s,a)} r(s,a) + \hat{q}^k(s,a) \hat{p}^k(\cdot|s,a)^\top \bar{V}^{k}_{h+1} + b_k^{qpV}(s,a) }
		\\
		= & \sum_{a \in A} \Bigg( \sbr{\hat{q}^k(s,a)+4\sqrt{\frac{L}{n^k(s,a)}}} r(s,a) + \hat{q}^k(s,a) \hat{p}^k(\cdot|s,a)^\top \bar{V}^{k}_{h+1} + 4\sqrt{\frac{\var_{s' \sim \hat{q},\hat{p}}\sbr{\bar{V}^{k}_{h+1}(s')} L }{n^k(s,a)}} \\& + 4\sqrt{\frac{\ex_{s' \sim \hat{q},\hat{p}} \mbr{\sbr{ \bar{V}^{k}_{h+1}(s')-\underline{V}^{k}_{h+1}(s')}^2} L }{n^k(s,a)}} + \frac{36 H L}{n^k(s,a)} \Bigg)
		\\
		\geq & \sum_{a \in A} \Bigg( q(s,a) r(s,a)  + \hat{q}^k(s,a) \hat{p}^k(\cdot|s,a)^\top \bar{V}^{k}_{h+1} + 4\sqrt{\frac{L }{n^k(s,a)}} \Big( \sqrt{\var_{s' \sim \hat{q},\hat{p}}\sbr{\bar{V}^{k}_{h+1}(s')} } \\& + \sqrt{\ex_{s' \sim \hat{q},\hat{p}} \mbr{\sbr{ \bar{V}^{k}_{h+1}(s')-\underline{V}^{k}_{h+1}(s')}^2}  } + 8H \sqrt{\frac{ L}{n^k(s,a)}} \Big) + \frac{ 4H L}{n^k(s,a)} \Bigg)
		\\
		\overset{\textup{(a)}}{\geq} & \sum_{a \in A} \sbr{ q(s,a) r(s,a) + \hat{q}^k(s,a) \hat{p}^k(\cdot|s,a)^\top V^{*}_{h+1} + 4 \sqrt{\frac{ \var_{s' \sim q,p} \sbr{ V^{*}_{h+1}(s')} L }{n^k(s,a)}} + \frac{4H L}{n^k(s,a)} }
		\\
		\overset{\textup{(b)}}{\geq} & \sum_{a \in A} \sbr{ q(s,a) r(s,a) + q(s,a) p(\cdot|s,a)^\top V^{*}_{h+1}  }
		\\
		= & Q^{*}_h(s,A),
	\end{align*}
	where (a) is due to the induction hypothesis and Lemma~\ref{lemma:con_variance}, and (b) comes from Lemma~\ref{lemma:con_tri_transition}.
	
	Then, we have
	\begin{align*}
		\bar{V}^{k}_{h}(s)=\bar{Q}^{k}_h(s,\pi^k(s)) \geq \bar{Q}^{k}_h(s,\pi^*(s)) \geq Q^{*}_h(s,\pi^*(s)) = V^{*}_h(s)
	\end{align*}
	
	Now, we prove the pessimistic direction, i.e., $\underline{V}^{k}_{h}(s) \leq V^{*}_{h}(s) , \ \forall s \in \cS$. If $\underline{V}^{k}_{h}(s)=0$, then $\underline{V}^{k}_{h}(s)=0 \leq V^{*}_{h}(s)$ trivially holds. Otherwise,
	\begin{align*}
	\underline{V}^{k}_{h}(s) = & \sum_{a \in A} \sbr{ \sbr{\hat{q}^k(s,a)-b_k^q(s,a)} r(s,a) + \hat{q}^k(s,a) \hat{p}^k(\cdot|s,a)^\top \underline{V}^{k}_{h+1} - b_k^{qpV}(s,a) }
	\\
	= & \sum_{a \in \pi^k(s)} \Bigg( \sbr{\hat{q}^k(s,a)-4\sqrt{\frac{L}{n^k(s,a)}}} r(s,a) + \hat{q}^k(s,a) \hat{p}^k(\cdot|s,a)^\top \underline{V}^{k}_{h+1} - 4\sqrt{\frac{\var_{s' \sim \hat{q},\hat{p}}\sbr{ \bar{V}^{k}_{h+1}(s')} L }{n^k(s,a)}} \\& - 4\sqrt{\frac{\ex_{s' \sim \hat{q},\hat{p}} \mbr{\sbr{ \bar{V}^{k}_{h+1}(s')-\underline{V}^{k}_{h+1}(s')}^2} L }{n^k(s,a)}} - \frac{36 H L}{n^k(s,a)} \Bigg)
	\\
	\leq & \sum_{a \in \pi^k(s)} \Bigg( q(s,a) r(s,a)  + \hat{q}^k(s,a) \hat{p}^k(\cdot|s,a)^\top \underline{V}^{k}_{h+1} - 4\sqrt{\frac{L }{n^k(s,a)}} \Big( \sqrt{\var_{s' \sim \hat{q},\hat{p}}\sbr{ \bar{V}^{k}_{h+1}(s')} } \\& + \sqrt{\ex_{s' \sim \hat{q},\hat{p}} \mbr{\sbr{ \bar{V}^{k}_{h+1}(s')-\underline{V}^{k}_{h+1}(s')}^2}  } + 8H \sqrt{\frac{ L}{n^k(s,a)}} \Big) - \frac{ 4H L}{n^k(s,a)} \Bigg)
	\\
	\leq & \sum_{a \in \pi^k(s)} \sbr{ q(s,a) r(s,a) + \hat{q}^k(s,a) \hat{p}^k(\cdot|s,a)^\top V^{*}_{h+1} - 4\sqrt{\frac{ \var_{s' \sim q,p} \sbr{ V^{*}_{h+1}(s')} L }{n^k(s,a)}} - \frac{4H L}{n^k(s,a)} }
	\\
	\leq & \sum_{a \in \pi^k(s)} \sbr{ q(s,a) r(s,a) + q(s,a) p(\cdot|s,a)^\top V^{*}_{h+1}  }
	\\
	= & Q^{*}_h(s,\pi^k(s))
	\\
	\leq & Q^{*}_h(s,\pi^*(s))
	\\
	= & V^{*}_h(s)
	\end{align*}
	
\end{proof}

\begin{lemma}[Gap between Optimism and Pessimism]\label{lemma:V_gap}
	Suppose that the concentration event $\cE$ holds. Then, it holds that
	\begin{align*}
		\bar{V}^{\pi^k}_h(s) \leq  \sum_{\sigma'=\emptyset}^{m^{\oplus(H-h)}} \sum_{\ell=1}^{m}  \ex \mbr{170HL\sqrt{\frac{S}{n^k(s_{\sigma \oplus \sigma'},a_{\sigma \oplus \sigma' \oplus \ell})}} \cdot \indicator{ s_{\sigma \oplus \sigma'} \neq s_{\perp} }  \Big| s_{\sigma}=s, \pi^k} .
	\end{align*}
	In particular,
	\begin{align*}
		\bar{V}^{\pi^k}_1(s) - \underline{V}^{\pi^k}_1(s) \leq \sum_{\sigma=\emptyset}^{m^{\oplus(H-1)}} \sum_{\ell=1}^{m} \ex \mbr{170HL\sqrt{\frac{S}{n^k(s_{\sigma},a_{\sigma \oplus \ell})}} \cdot \indicator{ s_{\sigma} \neq s_{\perp} }  \Big| s_{\emptyset}=s, \pi^k}
	\end{align*}
\end{lemma}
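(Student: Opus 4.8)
The plan is to prove the stated bound for the \emph{gap} $G^k_h(s) := \bar{V}^k_h(s)-\underline{V}^k_h(s)$, which is what the left-hand side abbreviates (the ``in particular'' line, with its explicit $\bar{V}-\underline{V}$ at $h=1$, $s=s_\emptyset$, confirms this reading). On the event $\cE$, Lemma~\ref{lemma:optimism} gives $\underline{V}^k_{h}\le V^*_h\le\bar{V}^k_h$, so $0\le G^k_h(s)\le H$ for all $s,h$, and by construction $G^k_h(s_\perp)=0$. The heart of the argument is a one-step recursion. Since $\bar{V}^k_h(s)$ and $\underline{V}^k_h(s)$ are assembled from the \emph{same} greedy super-action $\pi^k_h(s)$ and differ only in the signs of the two bonuses and in which value is propagated, and since the clipping $\min\{\cdot,H\}$ on $\bar{V}$ and $\max\{\cdot,0\}$ on $\underline{V}$ can only shrink the gap, I would first establish
\begin{align*}
G^k_h(s)\le\sum_{a\in\pi^k_h(s)}\Big[\,2b^{k,q}(s,a)\,r(s,a)+\hat{q}^k(s,a)\hat{p}^k(\cdot|s,a)^\top G^k_{h+1}+2b^{k,qpV}(s,a)\,\Big].
\end{align*}

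Next I would convert the right-hand side into a per-pair bonus plus a \emph{true-model} transition term. To keep the recursion linear in $G$, the composite bonus is bounded by crude variances: $\var_{s'}[\bar{V}^k_{h+1}(s')]\le H^2$ and, crucially, $\ex_{s'}[(\bar{V}^k_{h+1}(s')-\underline{V}^k_{h+1}(s'))^2]\le H^2$ (valid since $G^k_{h+1}\in[0,H]$), which yields $2b^{k,qpV}(s,a)\le 16H\sqrt{L/n^k(s,a)}+72HL/n^k(s,a)$ and $2b^{k,q}(s,a)r(s,a)\le 8\sqrt{L/n^k(s,a)}$. For the transition term I write $\hat{q}^k\hat{p}^k(\cdot|s,a)^\top G^k_{h+1}=q(s,a)p(\cdot|s,a)^\top G^k_{h+1}+(\hat{q}^k\hat{p}^k-qp)(\cdot|s,a)^\top G^k_{h+1}$ and bound the deviation by $\|\hat{q}^k\hat{p}^k(\cdot|s,a)-q(s,a)p(\cdot|s,a)\|_1\,\|G^k_{h+1}\|_\infty$. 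An $\ell_1$ concentration bound for the empirical augmented transition (which, because $G^k_{h+1}(s_\perp)=0$, need only control the regular coordinates) supplies $\|\hat{q}^k\hat{p}^k-qp\|_1=O(\sqrt{SL/n^k(s,a)})$, so this deviation is $O(H\sqrt{SL/n^k(s,a)})$ --- this is where the factor $\sqrt{S}$ enters. Collecting every additive piece into $\zeta^k(s,a)$ and using $1/n\le\sqrt{S/n}$ and $\sqrt{L}\le L$ (for $n\ge1$, $S,L\ge1$), each term becomes a constant multiple of $HL\sqrt{S/n^k(s,a)}$; summing the constants ($8+16+72+O(1)\le 170$) gives $\zeta^k(s,a)\le 170HL\sqrt{S/n^k(s,a)}$ and the reduced recursion
\begin{align*}
G^k_h(s)\le\sum_{a\in\pi^k_h(s)}\Big[\,q(s,a)p(\cdot|s,a)^\top G^k_{h+1}+\zeta^k(s,a)\,\Big].
\end{align*}

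Finally I would unfold this linear recursion over the trajectory tree under the true dynamics. Writing $a=a_{\sigma\oplus\ell}$ for the base actions of $\pi^k_h(s_\sigma)$, the term $q(s_\sigma,a_{\sigma\oplus\ell})p(\cdot|s_\sigma,a_{\sigma\oplus\ell})^\top G^k_{h+1}$ equals $\ex_{s_{\sigma\oplus\ell}\sim q,p}[G^k_{h+1}(s_{\sigma\oplus\ell})]$ with the convention $G^k_{h+1}(s_\perp)=0$; iterating from the root $s_\sigma=s$ therefore telescopes exactly as the branching value function in Eq.~\eqref{eq:V_function2} does, emitting one $\zeta^k$ contribution for every descendant node weighted by its reach probability. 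Because every branch contributes nothing once it hits $s_\perp$ (there $q(s_\perp,\cdot)=0$ and $G^k(s_\perp)=0$), the surviving nodes are recorded by $\indicator{s_{\sigma\oplus\sigma'}\ne s_\perp}$, giving
\begin{align*}
G^k_h(s)\le\sum_{\sigma'=\emptyset}^{m^{\oplus(H-h)}}\sum_{\ell=1}^{m}\ex\mbr{170HL\sqrt{\frac{S}{n^k(s_{\sigma\oplus\sigma'},a_{\sigma\oplus\sigma'\oplus\ell})}}\cdot\indicator{s_{\sigma\oplus\sigma'}\ne s_\perp}\,\Big|\,s_\sigma=s,\pi^k},
\end{align*}
and setting $h=1$, $s=s_\emptyset$ recovers the ``in particular'' statement.

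The main obstacle is the transition term $\hat{q}^k\hat{p}^k(\cdot|s,a)^\top G^k_{h+1}$: since $G^k_{h+1}$ is data-dependent, no Bernstein-type bound applies to it directly, so an $\ell_1$ (or covering-based) deviation paying $\sqrt{S}$ is essentially forced, and one must check that the required $\ell_1$ concentration holds on $\cE$ or a mild enlargement of it. A secondary subtlety is preserving linearity: bounding $\ex_{s'}[(\bar{V}^k_{h+1}-\underline{V}^k_{h+1})^2]$ by $H^2$ rather than by $H\,\ex_{s'}[G^k_{h+1}]$ avoids a self-referential term; the latter choice would instead force a multiplicative $(1+\Theta(1/H))$ factor per layer, harmless after the $H$-fold unfolding (an $e^{\Theta(1)}$ constant absorbed into $170$) but complicating the telescoping. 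The clipping and the $G^k(s_\perp)=0$ bookkeeping are routine once one notes that clipping only shrinks the gap.
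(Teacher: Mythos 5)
Your proposal is correct and follows essentially the same route as the paper: the same one-step recursion on the gap coming from the construction of $\bar{V}^k_h$ and $\underline{V}^k_h$ (with the observation that clipping only shrinks the gap), the same $\ell_1$-times-$\ell_\infty$ splitting of the empirical triggered transition acting on the data-dependent gap (the source of the $\sqrt{S}$ factor), and the same unfolding of the resulting linear recursion over the trajectory tree under the true dynamics with the $\indicator{s_{\sigma}\neq s_{\perp}}$ bookkeeping. The only difference is cosmetic: you bound the variance terms inside $b^{k,qpV}$ crudely by $H^2$ directly, whereas the paper first routes $\var_{s'\sim\hat{q},\hat{p}}\sbr{\bar{V}^{k}_{h+1}(s')}$ through the concentration event $\cE_{\textup{var}}$ before applying the same crude bound in its final collection step; your shortcut is valid (all values lie in $[0,H]$), yields a smaller constant, and your flagged caveat about the $\ell_1$ concentration of $\hat{q}^k\hat{p}^k-qp$ applies equally to the paper's own proof, which asserts that bound without listing it in $\cE$.
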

\begin{proof}[Proof of Lemma~\ref{lemma:V_gap}]
	According to the construction of optimistic and pessimistic value functions, we have
	\begin{equation}
	\left\{
	\begin{aligned}
	\bar{V}^{\pi^k}_h(s) \leq & \sum_{a \in A} \sbr{ \sbr{\hat{q}^k(s,a)+b_k^q(s,a)} r(s,a) + \hat{q}^k(s,a) \hat{p}^k(\cdot|s,a)^\top \bar{V}^{k}_{h+1} + b_k^{qpV}(s,a) }
	\\
	\underline{V}^{\pi^k}_h(s) \geq & \sum_{a \in A} \sbr{ \sbr{\hat{q}^k(s,a)-b_k^q(s,a)} r(s,a) + \hat{q}^k(s,a) \hat{p}^k(\cdot|s,a)^\top \underline{V}^{k}_{h+1} - b_k^{qpV}(s,a) }
	\end{aligned}
	\right.
	\end{equation}
	
	Then,
	\begin{align*}
		\bar{V}^{\pi^k}_h(s) - \underline{V}^{\pi^k}_h(s) \leq & \sum_{a \in A} \sbr{ 2 b_k^q(s,a) r(s,a) + \hat{q}^k(s,a) \hat{p}^k(\cdot|s,a)^\top \sbr{\bar{V}^{k}_{h+1} - \underline{V}^{k}_{h+1}} + 2b_k^{qpV}(s,a)  }
		\\
		\leq & \sum_{a \in A} \sbr{ 2 b_k^q(s,a) r(s,a) + 2b_k^{qpV}(s,a)  + \hat{q}^k(s,a) \hat{p}^k(\cdot|s,a)^\top \sbr{\bar{V}^{k}_{h+1} - \underline{V}^{k}_{h+1}} }
		\\
%		\leq & \sum_{a \in A} \sbr{ 2 b_k^q(s,a) r(s,a) + 2b_k^{qpV}(s,a)  + \hat{q}^k(s,a) \hat{p}^k(\cdot|s,a)^\top \sbr{\bar{V}^{k}_{h+1} - \underline{V}^{k}_{h+1}} }
%		\\
		\leq & \sum_{a \in A} \Bigg( 8 \sqrt{\frac{L}{n^k(s,a)}} r(s,a) + 8 \sqrt{\frac{\var_{s' \sim \hat{q},\hat{p}}\sbr{\bar{V}^{k}_{h+1}(s')} L }{n^k(s,a)}}  \\ & \!\!+ 8\sqrt{\frac{\ex_{s' \sim \hat{q},\hat{p}} \mbr{\sbr{ \bar{V}^{k}_{h+1}(s')-\underline{V}^{k}_{h+1}(s')}^2} L }{n^k(s,a)}} \!+\! \frac{72 H L}{n^k(s,a)}  \!+\! \hat{q}^k(s,a) \hat{p}^k(\cdot|s,a)^\top \sbr{\bar{V}^{k}_{h+1} - \underline{V}^{k}_{h+1}} \Bigg)
		\\
		\leq & \sum_{a \in A} \Bigg( 8 \sqrt{\frac{L}{n^k(s,a)}} r(s,a) + 8 \sqrt{\frac{\var_{s' \sim q,p}\sbr{ V^{*}_{h+1}(s')} L }{n^k(s,a)}}  \\ & + 16 \sqrt{\frac{\ex_{s' \sim \hat{q},\hat{p}} \mbr{\sbr{ \bar{V}^{k}_{h+1}(s')-\underline{V}^{k}_{h+1}(s')}^2} L }{n^k(s,a)}} + \frac{136 H L}{n^k(s,a)}  + q(s,a) p(\cdot|s,a)^\top \sbr{\bar{V}^{k}_{h+1} - \underline{V}^{k}_{h+1}} \\& + \sbr{\hat{q}^k(s,a) \hat{p}^k(\cdot|s,a) - q(s,a) p(\cdot|s,a) }^\top \sbr{\bar{V}^{k}_{h+1} - \underline{V}^{k}_{h+1}} \Bigg)
		\\
		\leq & \sum_{a \in A} \Bigg( 8 \sqrt{\frac{L}{n^k(s,a)}} r(s,a) + 8 \sqrt{\frac{\var_{s' \sim q,p}\sbr{ V^{*}_{h+1}(s')} L }{n^k(s,a)}}  \\ & + 16 \sqrt{\frac{\ex_{s' \sim \hat{q},\hat{p}} \mbr{\sbr{ \bar{V}^{k}_{h+1}(s')-\underline{V}^{k}_{h+1}(s')}^2} L }{n^k(s,a)}} + \frac{136 H L}{n^k(s,a)}  + q(s,a) p(\cdot|s,a)^\top \sbr{\bar{V}^{k}_{h+1} - \underline{V}^{k}_{h+1}} \\& + \| \hat{q}^k(s,a) \hat{p}^k(\cdot|s,a) - q(s,a) p(\cdot|s,a) \|_1 \| \bar{V}^{k}_{h+1} - \underline{V}^{k}_{h+1} \|_{\infty} \Bigg)
		\\
		\leq & \sum_{a \in A} \Bigg( 8 \sqrt{\frac{L}{n^k(s,a)}} r(s,a) + 8 \sqrt{\frac{\var_{s' \sim q,p}\sbr{ V^{*}_{h+1}(s')} L }{n^k(s,a)}}  \\ & + 16 \sqrt{\frac{\ex_{s' \sim \hat{q},\hat{p}} \mbr{\sbr{ \bar{V}^{k}_{h+1}(s')-\underline{V}^{k}_{h+1}(s')}^2} L }{n^k(s,a)}} + \frac{136 H L}{n^k(s,a)}  + q(s,a) p(\cdot|s,a)^\top \sbr{\bar{V}^{k}_{h+1} - \underline{V}^{k}_{h+1}} \\& + H\sqrt{\frac{2SL}{n^k(s,a)}} \Bigg)
		\\
		\leq & \sum_{a \in A} \Bigg( 170HL\sqrt{\frac{S}{n^k(s,a)}} + q(s,a) p(\cdot|s,a)^\top \sbr{\bar{V}^{k}_{h+1} - \underline{V}^{k}_{h+1}} \Bigg)
		\\
		\leq & \sum_{\sigma'=\emptyset}^{m^{\oplus(H-h)}} \sum_{\ell=1}^{m}  \ex \mbr{170HL\sqrt{\frac{S}{n^k(s_{\sigma \oplus \sigma'},a_{\sigma \oplus \sigma' \oplus \ell})}} \cdot \indicator{ s_{\sigma \oplus \sigma'} \neq s_{\perp} } \Big| s_{\sigma}=s, \pi^k} 
	\end{align*}
	Thus,
	\begin{align*}
		\bar{V}^{\pi^k}_1(s) - \underline{V}^{\pi^k}_1(s) \leq \sum_{\sigma=\emptyset}^{m^{\oplus(H-1)}} \sum_{\ell=1}^{m} \ex \mbr{170HL\sqrt{\frac{S}{n^k(s_{\sigma},a_{\sigma \oplus \ell})}} \cdot \indicator{ s_{\sigma} \neq s_{\perp} }  \Big| s_{\emptyset}=s, \pi^k}
	\end{align*}
\end{proof}

\begin{lemma}[Cumulative Gap between Optimism and Pessimism]\label{lemma:cumulative_V_gap}
	Suppose that the concentration event $\cE$ holds. Then, it holds that
	\begin{align*}
		\sum_{k=1}^{K} \sum_{\sigma=\emptyset}^{m^{\oplus(H-1)}} \sum_{\ell=1}^{m} \sum_{(s,a),s\neq s_{\perp}} w_{k\sigma\ell}(s,a) \sum_{s'} q(s,a) p(s'|s,a)  \sbr{\bar{V}^{k}_{h+1}(s')-\underline{V}^{k}_{h+1}(s')}^2  \leq 28900 m H^4L^3 S^3 N^2    
	\end{align*}
\end{lemma}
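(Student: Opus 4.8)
The plan is to collapse the cumulative squared-gap sum onto the visitation quantity controlled by Lemma~\ref{lemma:bound_visitation}, so that the final bound is independent of $K$ up to the logarithmic factor $L$. Writing $D_h(s):=\bar V^{k}_h(s)-\underline V^{k}_h(s)$ and reading the step index as the node depth (so the value at child $\sigma\oplus\ell$ lives at step $|\sigma\oplus\ell|+1$), I would first turn the visitation-weighted sum into a trajectory expectation. Since $\sum_{(s,a),s\neq s_{\perp}} w_{k\sigma\ell}(s,a)\,q(s,a)p(s'|s,a)$ is exactly the probability that the child node $\sigma\oplus\ell$ realizes the regular state $s'$, and $D_{\cdot}(s_{\perp})=0$, the left-hand side equals $\sum_{k}\sum_{\sigma'\neq\emptyset}\ex_{q,p,\pi^k}[\,D_{|\sigma'|+1}(s_{\sigma'})^2\,\indicator{s_{\sigma'}\neq s_{\perp}}\,]$, a sum over all non-root nodes of the expected squared gap.

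The crux is to bound $D_{|\sigma'|+1}(s_{\sigma'})^2$ while extracting a genuine factor $1/n^k(s,a)$, which is what makes the $K$-dependence collapse. I would apply Lemma~\ref{lemma:V_gap} at the realized state $s=s_{\sigma'}$ to get $D_{|\sigma'|+1}(s_{\sigma'}) \le 170HL\sqrt S\,\ex[\sum_{(s,a)} Z_{\sigma'}(s,a)/\sqrt{n^k(s,a)}\mid s_{\sigma'}]$, where $Z_{\sigma'}(s,a)$ is the number of triggered visits to the pair $(s,a)$ inside the subtree rooted at $\sigma'$; here I have merely regrouped the subtree edge-sum of Lemma~\ref{lemma:V_gap} by the \emph{distinct} state-base action identity (legitimate because $n^k$ depends only on identity, not tree position). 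Squaring and applying Cauchy--Schwarz over the at most $SN$ distinct pairs gives $D_{|\sigma'|+1}(s_{\sigma'})^2 \le (170HL)^2 S\cdot SN\sum_{(s,a)}(\ex[Z_{\sigma'}(s,a)\mid s_{\sigma'}])^2/n^k(s,a)$. Because $\pi^k$ is deterministic, each pair is visited at most once per regular node, so $Z_{\sigma'}(s,a)$ is at most the number of regular nodes in the subtree, whence $\ex[Z_{\sigma'}(s,a)\mid s_{\sigma'}]\le H$ by the first-moment bound of Lemma~\ref{lemma:triggered_nodes_main_text}; this lets me replace $(\ex Z)^2$ by $H\,\ex Z$.

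Finally, I would push the outer expectation and the sum over $\sigma'$ inside. Pulling the deterministic factor $1/n^k(s,a)$ out and using the tower property turns $\sum_{\sigma'\neq\emptyset}\ex[\indicator{s_{\sigma'}\neq s_{\perp}}Z_{\sigma'}(s,a)]$ into an ancestor count: each triggered $(s,a)$-edge at a node $\rho$ is charged once per non-root prefix of $\rho$, of which there are $|\rho|\le H$, so this sum is at most $H\,\ex[X_k(s,a)]=H\,w_k(s,a)$. Collecting constants yields $\sum_{k}\sum_{\sigma'}\ex[\cdots] \le 170^2 H^4 L^2 S^2 N\sum_k\sum_{(s,a)} w_k(s,a)/n^k(s,a)$, and Lemma~\ref{lemma:bound_visitation} bounds the remaining visitation sum by $SN\log(KH)\le SNL$, producing $170^2 H^4 L^3 S^3 N^2=28900\,H^4 L^3 S^3 N^2 \le 28900\, m\,H^4 L^3 S^3 N^2$ (the stated $m$ is mere slack since $m\ge 1$).

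I expect the main obstacle to be the second step. The naive route $D^2\le H\cdot D$ followed directly by Lemma~\ref{lemma:V_gap} leaves a $\sqrt{1/n^k}$ term whose cumulative sum over episodes is $\Theta(\sqrt K)$ by Cauchy--Schwarz, which cannot match the $K$-free target. The decisive idea is therefore to keep the square intact and regroup the subtree error by distinct state-base action pairs \emph{before} applying Cauchy--Schwarz, so that a genuine $1/n^k(s,a)$ (rather than $1/\sqrt{n^k(s,a)}$) is produced and Lemma~\ref{lemma:bound_visitation} applies. Verifying that this regrouping, the first-moment control of $Z_{\sigma'}$, and the ancestor-counting all interface correctly with the nested conditional expectations is where the care is needed.
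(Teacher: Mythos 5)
Your proof is correct and follows essentially the same route as the paper's: collapse the left-hand side into per-node expected squared gaps, bound each gap by Lemma~\ref{lemma:V_gap}, regroup the subtree error by distinct state-base action pairs so that Cauchy--Schwarz extracts a genuine $1/n^k(s,a)$, control the resulting mass via the first-moment bound of Lemma~\ref{lemma:triggered_nodes_main_text}, and finish with Lemma~\ref{lemma:bound_visitation}. The only divergence is in the Cauchy--Schwarz bookkeeping: you use counting measure over the at most $SN$ pairs together with ancestor counting for the sum over nodes (which is arguably more careful than the paper's replacement of conditional subtree sums by unconditional full-tree expectations), yielding exactly the stated $28900\,m H^4 L^3 S^3 N^2$, whereas the paper applies Cauchy--Schwarz with respect to the visitation counts $X_k(s,a)$ (whose expected total is at most $mH$) and in fact ends with the sharper bound $28900\, m H^4 L^3 S^2 N$.
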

\begin{proof}[Proof of Lemma~\ref{lemma:cumulative_V_gap}]
	\begin{align*}
		& \sum_{k=1}^{K} \sum_{\sigma=\emptyset}^{m^{\oplus(H-1)}} \sum_{\ell=1}^{m} \sum_{(s,a),s\neq s_{\perp}} w_{k\sigma\ell}(s,a) \sum_{s' \neq s_{\perp}} q(s,a) p(s'|s,a)  \sbr{\bar{V}^{k}_{h+1}(s')-\underline{V}^{k}_{h+1}(s')}^2 
		\\
		= & \sum_{k=1}^{K} \sum_{\sigma=\emptyset}^{m^{\oplus(H-1)}} \sum_{\ell=1}^{m} \sum_{(s,a),s\neq s_{\perp}} \sum_{s' \neq s_{\perp}} w_{k\sigma\ell}(s,a)  q(s,a) p(s'|s,a)  \sbr{\bar{V}^{k}_{h+1}(s')-\underline{V}^{k}_{h+1}(s')}^2
		\\
		= & \sum_{k=1}^{K} \sum_{\sigma=\emptyset}^{m^{\oplus(H-1)}} \sum_{\ell=1}^{m} \sum_{(s,a),s\neq s_{\perp}} \sum_{s' \neq s_{\perp}} w_{k\sigma\ell}(s',s,a)  \sbr{\bar{V}^{k}_{h+1}(s')-\underline{V}^{k}_{h+1}(s')}^2
		\\
		= & \sum_{k=1}^{K} \sum_{\sigma=\emptyset}^{m^{\oplus(H-1)}} \sum_{\ell=1}^{m} \sum_{s' \neq s_{\perp}} \tilde{w}_{k\sigma\ell}(s')  \sbr{\bar{V}^{k}_{h+1}(s')-\underline{V}^{k}_{h+1}(s')}^2
		\\
		= & \sum_{k=1}^{K} \sum_{\sigma=\emptyset}^{m^{\oplus(H-1)}} \sum_{\ell=1}^{m} \ex_{s_{\sigma \oplus \ell} \sim \pi^k} \mbr{\sbr{ \bar{V}^{k}_{h+1}(s_{\sigma \oplus \ell})-\underline{V}^{k}_{h+1}(s_{\sigma \oplus \ell}) }^2}
		\\
		\leq & \sum_{k=1}^{K} \sum_{\sigma=\emptyset}^{m^{\oplus(H-1)}} \ex_{s_{\sigma} \sim \pi^k} \mbr{\sbr{ \bar{V}^{k}_{h}(s_{\sigma})-\underline{V}^{k}_{h}(s_{\sigma}) }^2}
		\\
		= & \sum_{k=1}^{K} \sum_{\sigma=\emptyset}^{m^{\oplus(H-1)}} \ex_{s_{\sigma} \sim \pi^k} \mbr{\sbr{ \sbr{\bar{V}^{k}_{h}(s_{\sigma})-\underline{V}^{k}_{h}(s_{\sigma})} \cdot \indicator{s_{\sigma} \neq s_{\perp}} }^2}
		\\
		\overset{\textup{(a)}}{\leq} & \sum_{k=1}^{K} \!\! \sum_{\sigma=\emptyset}^{m^{\oplus(H-1)}} \!\! \sum_{\ell=1}^{m} \ex_{s_{\sigma} \sim \pi^k} \mbr{ \sbr{\indicator{s_{\sigma} \neq s_{\perp}} \cdot \!\!\!\!\!\! \sum_{\sigma'=\emptyset}^{m^{\oplus(H-h)}} \!\! \sum_{\ell=1}^{m}  \ex \mbr{170HL\sqrt{\frac{S}{n^k(s_{\sigma \oplus \sigma'},a_{\sigma \oplus \sigma' \oplus \ell})}} \cdot \indicator{s_{\sigma \oplus \sigma'} \neq s_{\perp}} \Big| s_{\sigma}, \pi^k}  }^2}
		\\
		\leq & 28900 H^2L^2 \sum_{k=1}^{K} \sum_{\sigma=\emptyset}^{m^{\oplus(H-1)}} \sum_{\ell=1}^{m} \ex_{s_{\sigma} \sim \pi^k} \mbr{\sbr{ \indicator{s_{\sigma} \neq s_{\perp}} \cdot \ex \mbr{ \sum_{(s,a),s\neq s_{\perp}} X_k(s,a) \sqrt{\frac{S}{n^k(s,a)}} }  }^2}
		\\
		= & 28900 H^2L^2 \sum_{k=1}^{K} \sum_{\sigma=\emptyset}^{m^{\oplus(H-1)}} \sum_{\ell=1}^{m} \ex_{s_{\sigma} \sim \pi^k} \mbr{ \sbr{\indicator{s_{\sigma} \neq s_{\perp}}}^2 \cdot \sbr{\ex \mbr{ \sum_{(s,a),s\neq s_{\perp}} X_k(s,a) \sqrt{\frac{S}{n^k(s,a)}} } }^2 }
		\\
		= & 28900 H^2L^2 \sum_{k=1}^{K} \sbr{\ex \mbr{ \sum_{(s,a),s\neq s_{\perp}} X_k(s,a) \sqrt{\frac{S}{n^k(s,a)}} }}^2 \sum_{\sigma=\emptyset}^{m^{\oplus(H-1)}} \sum_{\ell=1}^{m} \ex_{s_{\sigma} \sim \pi^k} \mbr{ \sbr{\indicator{s_{\sigma} \neq s_{\perp}} }^2}
		\\
		\leq & 28900 H^3L^2  \sum_{k=1}^{K} \sbr{\ex \mbr{ \sum_{(s,a),s\neq s_{\perp}} X_k(s,a)}}^2 \frac{S}{n^k(s,a)}
		\\
		\overset{\textup{(b)}}{\leq} & 28900 m H^4L^2  \sum_{k=1}^{K} \ex \mbr{ \sum_{(s,a),s\neq s_{\perp}} X_k(s,a)} \frac{S}{n^k(s,a)}
		\\
		= & 28900 m H^4L^2 S  \ex \mbr{ \sum_{(s,a),s\neq s_{\perp}} \sum_{k=1}^{K} X_k(s,a) \frac{1}{n^k(s,a)} } 
		\\
 		\overset{\textup{(c)}}{\leq} & 28900 m H^4L^3 S^2 N  
	\end{align*}
	where (a) uses Lemma~\ref{lemma:V_gap}, (b) is due to $\ex \mbr{ \sum_{(s,a),s\neq s_{\perp}} X_k(s,a)} \leq m \ex[\sum_{\sigma=\emptyset}^{m^{\oplus(H-1)}}  \indicator{ s_{\sigma} \neq s_{\perp}} ] \leq mH$ and (c) comes from Lemma~\ref{lemma:bound_visitation}.
	
%	\begin{align*}
%	= & \sum_{k=1}^{K} \sum_{h=1}^{H} \sum_{\ell=1}^{m^{h-1}} \ex_{s_{h,\ell} \sim \pi^k} \mbr{ \ex \mbr{ \sum_{t=h}^{H} \sum_{\ell'=1}^{m^{t-h+1}} 34HL\sqrt{\frac{S}{n^k(s_{t\ell'},a_{t\ell'})}} \Big| s_{h\ell}, \pi^k} }^2
%	\\
%	\overset{\textup{(a)}}{\leq} & \sum_{k=1}^{K} \sum_{h=1}^{H} \sum_{\ell=1}^{m^{h-1}} \ex_{s_{h,\ell} \sim \pi^k} \mbr{ \ex \mbr{ \sbr{\sum_{t=h}^{H} \sum_{\ell'=1}^{m^{t-h+1}} 34HL\sqrt{\frac{S}{n^k(s_{t\ell'},a_{t\ell'})}}}^2 \Big| s_{h\ell}, \pi^k} }
%	\\
%	= & \sum_{k=1}^{K} \sum_{h=1}^{H} \sum_{\ell=1}^{m^{h-1}} \ex \mbr{  \sbr{\sum_{t=h}^{H} \sum_{\ell'=1}^{m^{t-h+1}} 34HL\sqrt{\frac{S}{n^k(s_{t\ell'},a_{t\ell'})}}}^2  }
%	\\
%	\leq & \sum_{k=1}^{K} \sum_{h=1}^{H} \sum_{\ell=1}^{m^{h-1}} \ex \mbr{  \sbr{\sum_{t=1}^{H} \sum_{\ell'=1}^{m^{t}} 34HL\sqrt{\frac{S}{n^k(s_{t\ell'},a_{t\ell'})}}}^2  }
%	\end{align*}
	
\end{proof}

\subsubsection{Proof of Theorem~\ref{thm:regret_ub}}

Now we prove the regret upper bound (Theorem~\ref{thm:regret_ub}) for algorithm $\algtteuler$.

\begin{proof}[Proof of Theorem~\ref{thm:regret_ub}]
	Suppose that the concentration event $\cE$ holds. 
	
	For any $k \in [K]$ and $(s,a) \in \cS \setminus \{s_{\perp}\} \times A^{\univ}$, let $\tilde{q}^k(s,a):=\hat{q}^k(s,a)+b_k^q(s,a)$, $\tilde{q}^k(s,a)	\tilde{p}^k(\cdot|s,a)^\top \bar{V}^{k}_{h+1}:=\hat{q}^k(s,a)\hat{p}^k(\cdot|s,a)^\top \bar{V}^{k}_{h+1} + b_k^{qpV}(s,a)$.
	
	\paragraph{Step 1: Regret decomposition.}
	
	Using Lemma~\ref{lemma:value_diff_main_text}, we can decompose $\regret(K)$ as follows:
	\begin{align}
		\regret(K)= & \sum_{k=1}^{K} \sbr{ V^{*}_1(s)-V^{\pi^k}_1(s) }
		\nonumber\\
		= & \sum_{k=1}^{K} \sbr{ \bar{V}^{k}_1(s)-V^{\pi^k}_1(s) }
		\nonumber\\
		= & \sum_{k=1}^{K} \sum_{\sigma=\emptyset}^{m^{\oplus(H-1)}} \sum_{\ell=1}^{m} \ex \Big[ \sbr{ \tilde{q}^k(s_{\sigma}, a_{\sigma \oplus \ell}) - q(s_{\sigma}, a_{\sigma \oplus \ell}) } r(s_{\sigma}, a_{\sigma \oplus \ell}) \nonumber\\&+ \sbr{ \tilde{q}^k(s_{\sigma}, a_{\sigma \oplus \ell}) \tilde{p}^k(\cdot|s_{\sigma}, a_{\sigma \oplus \ell}) - q(s_{\sigma}, a_{\sigma \oplus \ell}) p(\cdot|s_{\sigma}, a_{\sigma \oplus \ell}) }^\top \bar{V}^{k}_{h+1}  \Big]
		\nonumber\\
		= & \sum_{k=1}^{K} \sum_{\sigma=\emptyset}^{m^{\oplus(H-1)}} \sum_{\ell=1}^{m} \sum_{(s,a),s\neq s_{\perp}} w_{k\sigma\ell}(s,a) \cdot \nonumber\\& \mbr{ \sbr{ \tilde{q}^k(s,a) - q(s,a) } r(s,a) + \sbr{ \tilde{q}^k(s,a) \tilde{p}^k(\cdot|s,a) - q(s,a) p(\cdot|s,a) }^\top \bar{V}^{k}_{h+1}  }
		\nonumber\\
		= & \sum_{k=1}^{K} \sum_{\sigma=\emptyset}^{m^{\oplus(H-1)}} \sum_{\ell=1}^{m} \sum_{(s,a),s\neq s_{\perp}} w_{k\sigma\ell}(s,a) \Big[ \underbrace{ \sbr{ \tilde{q}^k(s,a) - q(s,a) } r(s,a) }_{\term{ 1}} \nonumber\\ & + \underbrace{\sbr{ \tilde{q}^k(s,a) \tilde{p}^k(\cdot|s,a) - \hat{q}^k(s,a) \hat{p}^k(\cdot|s,a) }^\top \bar{V}^{k}_{h+1}}_{\term{ 2}}  +  \underbrace{\sbr{ \hat{q}^k(s,a) \hat{p}^k(\cdot|s,a) - q(s,a) p(\cdot|s,a) }^\top V^{*}_{h+1}}_{\term{ 3}} \nonumber\\ & + \underbrace{ \sbr{ \hat{q}^k(s,a) \hat{p}^k(\cdot|s,a) - q(s,a) p(\cdot|s,a) }^\top \sbr{\bar{V}^{k}_{h+1}-V^{*}_{h+1}} }_{\term{ 4}}  \Big] \label{eq:regret_decomposition}
	\end{align}
	
	\paragraph{Step 2: Bound the bonus term for triggered rewards -- $\term{ 1}$.}
	
	\begin{align}
		\term{ 1} = & \sum_{k=1}^{K} \sum_{\sigma=\emptyset}^{m^{\oplus(H-1)}} \sum_{\ell=1}^{m} \sum_{(s,a),s\neq s_{\perp}} w_{k\sigma\ell}(s,a) \sbr{ \sbr{ \tilde{q}^k(s,a) - q(s,a) } r(s,a) }
		\nonumber\\
		= & \sum_{k=1}^{K} \sum_{\sigma=\emptyset}^{m^{\oplus(H-1)}} \sum_{\ell=1}^{m} \sum_{(s,a),s\neq s_{\perp}} w_{k\sigma\ell}(s,a) \sbr{ \sbr{ \hat{q}^k(s,a) + 4 \sqrt{\frac{L}{n^k(s,a)}} - q(s,a) } r(s,a) }
		\nonumber\\
		= & 8 \sum_{k=1}^{K} \sum_{\sigma=\emptyset}^{m^{\oplus(H-1)}} \sum_{\ell=1}^{m} \sum_{(s,a),s\neq s_{\perp}} w_{k\sigma\ell}(s,a)   \sqrt{\frac{L}{n^k(s,a)}} r(s,a) 
		\nonumber\\
		\leq & 8  \sqrt{L} \sqrt{ \sum_{k=1}^{K} \sum_{\sigma=\emptyset}^{m^{\oplus(H-1)}} \sum_{\ell=1}^{m} \sum_{(s,a),s\neq s_{\perp}} \frac{ w_{k\sigma\ell}(s,a) }{n^k(s,a)} }  \sqrt{ \sum_{k=1}^{K} \sum_{\sigma=\emptyset}^{m^{\oplus(H-1)}} \sum_{\ell=1}^{m} \sum_{(s,a),s\neq s_{\perp}} w_{k\sigma\ell}(s,a)  r^2(s,a) }
		\nonumber\\
		\leq & 8  L  \sqrt{ SN \sum_{k=1}^{K} \sum_{\sigma=\emptyset}^{m^{\oplus(H-1)}} \sum_{\ell=1}^{m} \ex \mbr{\indicator{s^k_{\sigma\ell} \neq s_{\perp}} } }
		\nonumber\\
		\overset{(a)}{\leq} & 8 L \sqrt{m SNHK } , \label{eq:term1}
	\end{align}
	where (a) uses Lemma~\ref{lemma:triggered_nodes_main_text}.
	
	\paragraph{Step 3: Bound the bonus term for triggered future values -- $\term{ 2}$.}
	
	\begin{align}
	\term{ 2} = &  \sum_{k=1}^{K} \sum_{\sigma=\emptyset}^{m^{\oplus(H-1)}} \sum_{\ell=1}^{m} \sum_{(s,a),s\neq s_{\perp}} w_{k\sigma\ell}(s,a) \sbr{ \tilde{q}^k(s,a) \tilde{p}^k(\cdot|s,a) - \hat{q}^k(s,a) \hat{p}^k(\cdot|s,a) }^\top \bar{V}^{k}_{h+1}
	\nonumber\\
	= & \! \sum_{k=1}^{K} \!\! \sum_{\sigma=\emptyset}^{m^{\oplus(H-1)}}\!\! \sum_{\ell=1}^{m} \sum_{(s,a),s\neq s_{\perp}} w_{k\sigma\ell}(s,a) \Bigg(  4\sqrt{\frac{\var_{s' \sim \hat{q},\hat{p}}\sbr{\bar{V}^{k}_{h+1}(s')} L }{n^k(s,a)}} + 4\sqrt{\frac{\ex_{s' \sim \hat{q},\hat{p}} \mbr{\sbr{ \bar{V}^{k}_{h+1}(s')-\underline{V}^{k}_{h+1}(s')}^2} L }{n^k(s,a)}} \nonumber\\ & + \frac{36 H L}{n^k(s,a)} \Bigg)
	\nonumber\\
	\overset{\textup{(a)}}{\leq} &  \sum_{k=1}^{K} \!\! \sum_{\sigma=\emptyset}^{m^{\oplus(H-1)}} \!\! \sum_{\ell=1}^{m} \sum_{(s,a),s\neq s_{\perp}} w_{k\sigma\ell}(s,a) \Bigg(  4\sqrt{\frac{ \var_{s' \sim q,p}\sbr{ V^{*}_{h+1}(s')} L }{n^k(s,a)}} + 8 \sqrt{\frac{\ex_{s' \sim \hat{q},\hat{p}} \mbr{\sbr{ \bar{V}^{k}_{h+1}(s')-\underline{V}^{k}_{h+1}(s')}^2} L }{n^k(s,a)}} \nonumber\\ & + \frac{68 H L}{n^k(s,a)} \Bigg)
	\nonumber\\
	= & 4 \sum_{k=1}^{K} \!\! \sum_{\sigma=\emptyset}^{m^{\oplus(H-1)}}\!\!\! \sum_{\ell=1}^{m} \sum_{(s,a),s\neq s_{\perp}} w_{k\sigma\ell}(s,a)  \sqrt{\frac{ \var_{s' \sim q,p}\sbr{ V^{*}_{h+1}(s')} L }{n^k(s,a)}}  + 68 H L \sum_{k=1}^{K} \sum_{\sigma=\emptyset}^{m^{\oplus(H-1)}} \sum_{\ell=1}^{m} \sum_{(s,a),s\neq s_{\perp}} \frac{w_{k\sigma\ell}(s,a)}{n^k(s,a)}  \nonumber\\ & + 8 \sum_{k=1}^{K} \sum_{\sigma=\emptyset}^{m^{\oplus(H-1)}} \sum_{\ell=1}^{m} \sum_{(s,a),s\neq s_{\perp}} w_{k\sigma\ell}(s,a) \sqrt{\frac{\ex_{s' \sim \hat{q},\hat{p}} \mbr{\sbr{ \bar{V}^{k}_{h+1}(s')-\underline{V}^{k}_{h+1}(s')}^2} L }{n^k(s,a)}}
	\nonumber\\
	\leq & 4 \sqrt{L} \sqrt{ \sum_{k=1}^{K} \sum_{\sigma=\emptyset}^{m^{\oplus(H-1)}} \sum_{\ell=1}^{m} \sum_{(s,a),s\neq s_{\perp}} \frac{w_{k\sigma\ell}(s,a)}{n^k(s,a)} } \sqrt{ \sum_{k=1}^{K} \sum_{\sigma=\emptyset}^{m^{\oplus(H-1)}} \sum_{\ell=1}^{m} \sum_{(s,a),s\neq s_{\perp}} w_{k\sigma\ell}(s,a)  \var_{s' \sim q,p}\sbr{ V^{*}_{h+1}(s')}  } \nonumber\\& + 68 H L \sum_{k=1}^{K} \sum_{\sigma=\emptyset}^{m^{\oplus(H-1)}} \sum_{\ell=1}^{m} \sum_{(s,a),s\neq s_{\perp}} \frac{w_{k\sigma\ell}(s,a)}{n^k(s,a)}  \nonumber\\& + 8 \sum_{k=1}^{K} \sum_{\sigma=\emptyset}^{m^{\oplus(H-1)}} \sum_{\ell=1}^{m} \sum_{(s,a),s\neq s_{\perp}} w_{k\sigma\ell}(s,a) \sqrt{\frac{\ex_{s' \sim \hat{q},\hat{p}} \mbr{\sbr{ \bar{V}^{k}_{h+1}(s')-\underline{V}^{k}_{h+1}(s')}^2} L }{n^k(s,a)}}
	\nonumber\\
	\overset{(b)}{\leq} &  4 L \sqrt{SN \sum_{k=1}^{K} \ex_{\pi^k} \mbr{\sum_{h=1}^{H} \sum_{\ell=1}^{m^h} \var_{s' \sim q,p}\sbr{ V^{*}_{h+1}(s')} } } + 68 SNH L^2 \nonumber\\ & + 8 \sum_{k=1}^{K} \sum_{\sigma=\emptyset}^{m^{\oplus(H-1)}} \sum_{\ell=1}^{m} \sum_{(s,a),s\neq s_{\perp}} w_{k\sigma\ell}(s,a) \sqrt{\frac{\ex_{s' \sim \hat{q},\hat{p}} \mbr{\sbr{ \bar{V}^{k}_{h+1}(s')-\underline{V}^{k}_{h+1}(s')}^2} L }{n^k(s,a)}}
	\nonumber\\
	\overset{(c)}{\leq} &  4 L \sqrt{ 3SN KH^2} + 68 SNH L^2 + 8 \underbrace{\sum_{k=1}^{K} \!\!\sum_{\sigma=\emptyset}^{m^{\oplus(H-1)}} \!\! \sum_{\ell=1}^{m} \! \sum_{(s,a),s\neq s_{\perp}} \!\!\!\!\! w_{k\sigma\ell}(s,a) \sqrt{\frac{\ex_{s' \sim \hat{q},\hat{p}} \mbr{\sbr{ \bar{V}^{k}_{h+1}(s')-\underline{V}^{k}_{h+1}(s')}^2} L }{n^k(s,a)}}}_{\term{ 2.1}} , \label{eq:term2_incomplete}
	\end{align}
	where (a) uses Lemma~\ref{lemma:con_variance}, (b) comes from Lemma~\ref{lemma:bound_visitation} and (c) is due to Lemma~\ref{lemma:LTV_main_text}.
	
	Then, we bound $\term{2.1}$ as follows:
	\begin{align}
		&\term{ 2.1} \\= & \sum_{k=1}^{K} \sum_{\sigma=\emptyset}^{m^{\oplus(H-1)}} \sum_{\ell=1}^{m} \sum_{(s,a),s\neq s_{\perp}} w_{k\sigma\ell}(s,a) \sqrt{\frac{\ex_{s' \sim \hat{q},\hat{p}} \mbr{\sbr{ \bar{V}^{k}_{h+1}(s')-\underline{V}^{k}_{h+1}(s')}^2} L }{n^k(s,a)}}
		\nonumber\\
		\leq & \sqrt{L} \! \sqrt{ \sum_{k=1}^{K} \!\! \sum_{\sigma=\emptyset}^{m^{\oplus(H-1)}} \!\! \sum_{\ell=1}^{m} \sum_{(s,a),s\neq s_{\perp}} \frac{w_{k\sigma\ell}(s,a)}{n^k(s,a)} } \cdot 
		\nonumber\\& \sqrt{ \sum_{k=1}^{K} \sum_{\sigma=\emptyset}^{m^{\oplus(H-1)}} \sum_{\ell=1}^{m} \sum_{(s,a),s\neq s_{\perp}} w_{k\sigma\ell}(s,a) \hat{q}^k(s,a) \hat{p}^k(\cdot|s,a)^\top \sbr{\bar{V}^{k}_{h+1}(s')-\underline{V}^{k}_{h+1}(s')}^2  }
		\nonumber\\
		\leq &  \sqrt{L} \! \sqrt{ \sum_{k=1}^{K} \!\! \sum_{\sigma=\emptyset}^{m^{\oplus(H-1)}} \!\! \sum_{\ell=1}^{m} \sum_{(s,a),s\neq s_{\perp}} \frac{w_{k\sigma\ell}(s,a)}{n^k(s,a)} } \cdot
		\nonumber\\& \Bigg( \sqrt{ \sum_{k=1}^{K} \sum_{\sigma=\emptyset}^{m^{\oplus(H-1)}} \sum_{\ell=1}^{m} \sum_{(s,a),s\neq s_{\perp}} w_{k\sigma\ell}(s,a)  q(s,a) p(\cdot|s,a)^\top \!\! \sbr{\bar{V}^{k}_{h+1}(s')-\underline{V}^{k}_{h+1}(s')}^2 } \!+\! \nonumber\\& \sqrt{ \sum_{k=1}^{K} \sum_{\sigma=\emptyset}^{m^{\oplus(H-1)}} \sum_{\ell=1}^{m} \sum_{(s,a),s\neq s_{\perp}}  w_{k\sigma\ell}(s,a)  \sbr{ \hat{q}^k(s,a) \hat{p}^k(\cdot|s,a) - q(s,a) p(\cdot|s,a)}^\top \sbr{\bar{V}^{k}_{h+1}(s')-\underline{V}^{k}_{h+1}(s')}^2 } \Bigg)
		\nonumber\\
		\leq & \! \sqrt{L} \! \sqrt{ \sum_{k=1}^{K} \!\! \sum_{\sigma=\emptyset}^{m^{\oplus(H-1)}} \!\! \sum_{\ell=1}^{m} \sum_{(s,a),s\neq s_{\perp}} \frac{w_{k\sigma\ell}(s,a)}{n^k(s,a)} } \cdot 
		\nonumber\\&
		\Bigg( \sqrt{ \sum_{k=1}^{K} \!\!\sum_{\sigma=\emptyset}^{m^{\oplus(H-1)}}\!\! \sum_{\ell=1}^{m} \sum_{(s,a),s\neq s_{\perp}} w_{k\sigma\ell}(s,a)  q(s,a) p(\cdot|s,a)^\top \!\! \sbr{\bar{V}^{k}_{h+1}(s')-\underline{V}^{k}_{h+1}(s')}^2 } \!+\! \nonumber\\& \sqrt{ \sum_{k=1}^{K} \sum_{\sigma=\emptyset}^{m^{\oplus(H-1)}} \sum_{\ell=1}^{m} \sum_{(s,a),s\neq s_{\perp}}  w_{k\sigma\ell}(s,a) \abr{ \sbr{ \hat{q}^k(s,a) \hat{p}^k(\cdot|s,a) - q(s,a) p(\cdot|s,a)}^\top \sbr{\bar{V}^{k}_{h+1}(s')-\underline{V}^{k}_{h+1}(s')}^2} } \Bigg)
		\nonumber\\
		\leq & \! \sqrt{L} \! \sqrt{ \sum_{k=1}^{K} \!\! \sum_{\sigma=\emptyset}^{m^{\oplus(H-1)}} \!\! \sum_{\ell=1}^{m} \sum_{(s,a),s\neq s_{\perp}} \frac{w_{k\sigma\ell}(s,a)}{n^k(s,a)} } \cdot
		\nonumber\\&
		 \Bigg( \sqrt{ \sum_{k=1}^{K} \!\!\sum_{\sigma=\emptyset}^{m^{\oplus(H-1)}}\!\! \sum_{\ell=1}^{m} \sum_{(s,a),s\neq s_{\perp}} w_{k\sigma\ell}(s,a)  q(s,a) p(\cdot|s,a)^\top \!\! \sbr{\bar{V}^{k}_{h+1}(s')-\underline{V}^{k}_{h+1}(s')}^2  } \!+\! \nonumber\\& \sqrt{ H} \sqrt{ \sum_{k=1}^{K} \sum_{\sigma=\emptyset}^{m^{\oplus(H-1)}} \sum_{\ell=1}^{m} \sum_{(s,a),s\neq s_{\perp}}  w_{k\sigma\ell}(s,a) \abr{ \sbr{ \hat{q}^k(s,a) \hat{p}^k(\cdot|s,a) - q(s,a) p(\cdot|s,a)}^\top \sbr{\bar{V}^{k}_{h+1}(s')-\underline{V}^{k}_{h+1}(s')}}  } \Bigg)
		\nonumber\\
		\overset{\textup{(a)}}{\leq} &  \sqrt{L} \sqrt{SNL} \sbr{ \sqrt{28900 m H^4L^3 S^2 N} + \sqrt{H} \sqrt{170 S^2 N H^2 L^2 \sqrt{ m L} } } 
		\nonumber\\
		\leq &  184 S N H^2 L^2 \sqrt{m S L} \label{eq:term2.1}
	\end{align}
	where (a) comes from Lemmas~\ref{lemma:bound_visitation},\ref{lemma:cumulative_V_gap} and Eq.~\eqref{eq:term4} (the upper bound of $\term{4}$).
	
	Plugging Eq.~\eqref{eq:term2.1} into Eq.~\eqref{eq:term2_incomplete}, we obtain
	\begin{align}
		\term{ 2} = & 4 L \sqrt{ 3SN KH^2} \!+\! 68 SNH L^2 \!+\! 8 \underbrace{\sum_{k=1}^{K} \! \sum_{\sigma=\emptyset}^{m^{\oplus(H-1)}} \!\! \sum_{\ell=1}^{m} \! \sum_{(s,a),s\neq s_{\perp}} \!\!\!\!\! w_{k\sigma\ell}(s,a) \sqrt{\frac{\ex_{s' \sim \hat{q},\hat{p}} \mbr{\sbr{ \bar{V}^{k}_{h+1}(s')-\underline{V}^{k}_{h+1}(s')}^2} L }{n^k(s,a)}}}_{\term{ 2.1}}
		\nonumber\\
		\leq & 8 HL \sqrt{SNK} + 68 SNH L^2 + 1472  S N H^2 L^2 \sqrt{m S L}
		\nonumber\\
		\leq & 8 HL \sqrt{SNK} + 1540  S N H^2 L^2 \sqrt{m S L} \label{eq:term2}
	\end{align}
	
	\paragraph{Step 4: Bound the estimate deviation term for triggered future values -- $\term{ 3}$.}
	
	\begin{align}
	\term{ 3} = &  \sum_{k=1}^{K} \sum_{\sigma=\emptyset}^{m^{\oplus(H-1)}} \sum_{\ell=1}^{m} \sum_{(s,a),s\neq s_{\perp}} w_{k\sigma\ell}(s,a) \sbr{ \hat{q}^k(s,a) \hat{p}^k(\cdot|s,a) - q(s,a) p(\cdot|s,a) }^\top V^{*}_{h+1}
	\nonumber\\
	\leq & \sum_{k=1}^{K} \sum_{\sigma=\emptyset}^{m^{\oplus(H-1)}} \sum_{\ell=1}^{m} \sum_{(s,a),s\neq s_{\perp}} w_{k\sigma\ell}(s,a) \sbr{ 4\sqrt{\frac{\var_{s' \sim q,p}\sbr{ V^{*}_{h+1}(s')} L }{n^k(s,a)}} + \frac{ 4H L}{n^k(s,a)} }
	\nonumber\\
	\leq & 4 \sqrt{L} \sqrt{ \sum_{k=1}^{K} \sum_{\sigma=\emptyset}^{m^{\oplus(H-1)}} \sum_{\ell=1}^{m} \sum_{(s,a),s\neq s_{\perp}} \frac{w_{k\sigma\ell}(s,a)}{n^k(s,a)} } \sqrt{ \sum_{k=1}^{K} \sum_{\sigma=\emptyset}^{m^{\oplus(H-1)}} \sum_{\ell=1}^{m} \sum_{(s,a),s\neq s_{\perp}} w_{k\sigma\ell}(s,a)  \var_{s' \sim q,p}\sbr{ V^{*}_{h+1}(s')}  } \nonumber\\&+ 4H L \sum_{k=1}^{K} \sum_{\sigma=\emptyset}^{m^{\oplus(H-1)}} \sum_{\ell=1}^{m} \sum_{(s,a),s\neq s_{\perp}}  \frac{w_{k\sigma\ell}(s,a)}{n^k(s,a)} 
	\nonumber\\
	\leq & 4 L \sqrt{ SN \sum_{k=1}^{K} \ex_{\pi^k}\mbr{ \sum_{h=1}^{H} \sum_{\ell=1}^{m^h} \var_{s' \sim q,p}\sbr{ V^{*}_{h+1}(s')} } } + 4SN H L^2
	\nonumber\\
	\overset{\textup{(a)}}{\leq} & 4 L \sqrt{ 3 SN K H^2  } + 4SN H L^2
	\nonumber\\
	\leq & 8 HL \sqrt{ SN K }  + 4SN H L^2  , \label{eq:term3}
	\end{align}
	where (a) comes from Lemma~\ref{lemma:LTV_main_text}.
	
	\paragraph{Step 5: Bound the second order term. -- $\term{ 4}$.}
	
	\begin{align}
	\term{ 4} = &  \sum_{k=1}^{K} \sum_{\sigma=\emptyset}^{m^{\oplus(H-1)}} \sum_{\ell=1}^{m} \sum_{(s,a),s\neq s_{\perp}} w_{k\sigma\ell}(s,a) \sbr{ \hat{q}^k(s,a) \hat{p}^k(\cdot|s,a) - q(s,a) p(\cdot|s,a) }^\top \sbr{\bar{V}^{k}_{h+1}-V^{*}_{h+1}}
	\nonumber\\
	= &  \sum_{k=1}^{K} \sum_{\sigma=\emptyset}^{m^{\oplus(H-1)}} \sum_{\ell=1}^{m} \sum_{(s,a),s\neq s_{\perp}} w_{k\sigma\ell}(s,a) \sum_{s'} \sbr{ \hat{q}^k(s,a) \hat{p}^k(s'|s,a) - q(s,a) p(s'|s,a) } \sbr{\bar{V}^{k}_{h+1}(s')-V^{*}_{h+1}(s')}
	\nonumber\\
	\leq &  \sum_{k=1}^{K} \sum_{\sigma=\emptyset}^{m^{\oplus(H-1)}} \sum_{\ell=1}^{m} \sum_{(s,a),s\neq s_{\perp}} w_{k\sigma\ell}(s,a) \sum_{s'} \abr{ \hat{q}^k(s,a) \hat{p}^k(s'|s,a) - q(s,a) p(s'|s,a) } \cdot \sbr{\bar{V}^{k}_{h+1}(s')-V^{*}_{h+1}(s')}
	\nonumber\\
	\leq &  \sum_{k=1}^{K} \sum_{\sigma=\emptyset}^{m^{\oplus(H-1)}} \sum_{\ell=1}^{m} \sum_{(s,a),s\neq s_{\perp}} w_{k\sigma\ell}(s,a) \cdot \nonumber\\& \sum_{s'} \sbr{ \sqrt{\frac{q(s,a) p(s'|s,a) \sbr{ 1-q(s,a) p(s'|s,a) } L }{n^k(s,a)} } + \frac{L}{n^k(s,a)}  } \sbr{\bar{V}^{k}_{h+1}(s')-\underline{V}^{k}_{h+1}(s')}
	\nonumber\\
	\leq &  \sum_{k=1}^{K} \sum_{\sigma=\emptyset}^{m^{\oplus(H-1)}} \sum_{\ell=1}^{m} \sum_{(s,a),s\neq s_{\perp}} w_{k\sigma\ell}(s,a) \sum_{s'}  \sqrt{\frac{q(s,a) p(s'|s,a) L }{n^k(s,a)} } \sbr{\bar{V}^{k}_{h+1}(s')-\underline{V}^{k}_{h+1}(s')} \nonumber\\& + \sum_{k=1}^{K} \sum_{\sigma=\emptyset}^{m^{\oplus(H-1)}} \sum_{\ell=1}^{m} \sum_{(s,a),s\neq s_{\perp}} w_{k\sigma\ell}(s,a) \sum_{s'} \frac{HL}{n^k(s,a)}  
	\nonumber\\
	\leq &  \sum_{k=1}^{K} \sum_{\sigma=\emptyset}^{m^{\oplus(H-1)}} \sum_{\ell=1}^{m} \sum_{(s,a),s\neq s_{\perp}} w_{k\sigma\ell}(s,a)   \sqrt{SL \cdot \sum_{s'} \frac{q(s,a) p(s'|s,a)  }{n^k(s,a)}  \sbr{\bar{V}^{k}_{h+1}(s')-\underline{V}^{k}_{h+1}(s')}^2 } \nonumber\\& + \sum_{k=1}^{K} \sum_{\sigma=\emptyset}^{m^{\oplus(H-1)}} \sum_{\ell=1}^{m} \sum_{(s,a),s\neq s_{\perp}} w_{k\sigma\ell}(s,a) \frac{SHL}{n^k(s,a)}
%	\\
%	= &  \sqrt{SL} \sum_{k=1}^{K} \sum_{\sigma=\emptyset}^{m^{\oplus(H-1)}} \sum_{\ell=1}^{m} \sum_{(s,a),s\neq s_{\perp}} w_{k\sigma\ell}(s,a)   \sqrt{ \frac{q(s,a) p(\cdot|s,a)^\top \sbr{\bar{V}^{k}_{h+1}-\underline{V}^{k}_{h+1}}^2 }{n^k(s,a)} } + SHL \sum_{k=1}^{K} \sum_{\sigma=\emptyset}^{m^{\oplus(H-1)}} \sum_{\ell=1}^{m} \sum_{(s,a),s\neq s_{\perp}} \frac{w_{k\sigma\ell}(s,a)}{n^k(s,a)}
	\nonumber\\
	\leq &  \sqrt{SL} \sqrt{ \sum_{k=1}^{K} \sum_{\sigma=\emptyset}^{m^{\oplus(H-1)}} \sum_{\ell=1}^{m} \sum_{(s,a),s\neq s_{\perp}} \frac{w_{k\sigma\ell}(s,a)}{n^k(s,a)} } \cdot \nonumber\\& \sqrt{ \underbrace{ \sum_{k=1}^{K} \sum_{\sigma=\emptyset}^{m^{\oplus(H-1)}} \sum_{\ell=1}^{m} \sum_{(s,a),s\neq s_{\perp}} w_{k\sigma\ell}(s,a) \sum_{s'} q(s,a) p(s'|s,a)  \sbr{\bar{V}^{k}_{h+1}(s')-\underline{V}^{k}_{h+1}(s')}^2 }_{\term{ 4.1}} } \nonumber\\& + SHL \sum_{k=1}^{K} \sum_{\sigma=\emptyset}^{m^{\oplus(H-1)}} \sum_{\ell=1}^{m} \sum_{(s,a),s\neq s_{\perp}} \frac{w_{k\sigma\ell}(s,a)}{n^k(s,a)} 
	\nonumber\\
	\overset{\textup{(a)}}{\leq} &  \sqrt{SL}  \sqrt{S N L} \sqrt{ 28900 m H^4L^3 S^2 N  } + S^2NHL^2 
	\nonumber\\
	\leq &  170 S^2 N H^2 L^2 \sqrt{ m L}  \label{eq:term4}
	\end{align}
	where (a) is due to Lemmas~\ref{lemma:bound_visitation},\ref{lemma:cumulative_V_gap}.

	Finally, we combine the upper bounds of $\term{1}$, $\term{2}$, $\term{3}$, $\term{4}$ and the minimal regret contribution to bound the total regret. 
	
	Plugging Eqs.~\eqref{eq:term1},\eqref{eq:term2},\eqref{eq:term3} and \eqref{eq:term4} into Eq.~\eqref{eq:regret_decomposition}, we have
	\begin{align*}
		&\regret(K)
		\\
		\leq & \sum_{k=1}^{K} \sum_{\sigma=\emptyset}^{m^{\oplus(H-1)}} \sum_{\ell=1}^{m} \sum_{(s,a), s \neq s_{\perp}} w_{k\sigma\ell}(s,a) \Big[ \underbrace{ \sbr{ \tilde{q}^k(s,a) - q(s,a) } r(s,a) }_{\term{ 1}} + \underbrace{\sbr{ \tilde{q}^k(s,a) \tilde{p}^k(\cdot|s,a) - \hat{q}^k(s,a) \hat{p}^k(\cdot|s,a) }^\top \bar{V}^{k}_{h+1}}_{\term{ 2}} \nonumber\\ & +  \underbrace{\sbr{ \hat{q}^k(s,a) \hat{p}^k(\cdot|s,a) - q(s,a) p(\cdot|s,a) }^\top V^{*}_{h+1}}_{\term{ 3}} + \underbrace{ \sbr{ \hat{q}^k(s,a) \hat{p}^k(\cdot|s,a) - q(s,a) p(\cdot|s,a) }^\top \sbr{\bar{V}^{k}_{h+1}-V^{*}_{h+1}} }_{\term{ 4}}  \Big]
		\\
		\leq & 8 L \sqrt{m SN HK } + 8 HL \sqrt{SNK} + 1540  S N H^2 L^2 \sqrt{m S L} + 8 HL \sqrt{ SN K }  + 4 SN H L^2 \\&+ 170 S^2 N H^2 L^2 \sqrt{ m L} 
		\\
		= & O \sbr{HL \sqrt{SNK}}
	\end{align*}
	
\end{proof}

\subsection{Regret Lower Bound} \label{apx:lb_regret}

In this subsection, we prove the regret lower bounds for branching RL-RM in cases with Assumption~\ref{assumption:tri_prob} (Theorem~\ref{thm:regret_lb}) and without Assumption~\ref{assumption:tri_prob} (Theorem~\ref{thm:relax_assumption}). 

% Then, we show that for unlimited trigger probability (without Assumption~\ref{assumption:tri_prob}), any algorithm must suffer at least $\Omega ( e^H \sqrt{SNK} )$ regret.

\subsubsection{Proof of Theorem~\ref{thm:regret_lb}}
\begin{proof}[Proof of Theorem~\ref{thm:regret_lb}]
	
	\begin{figure}[t] 
		\centering    
		\includegraphics[width=0.7\columnwidth]{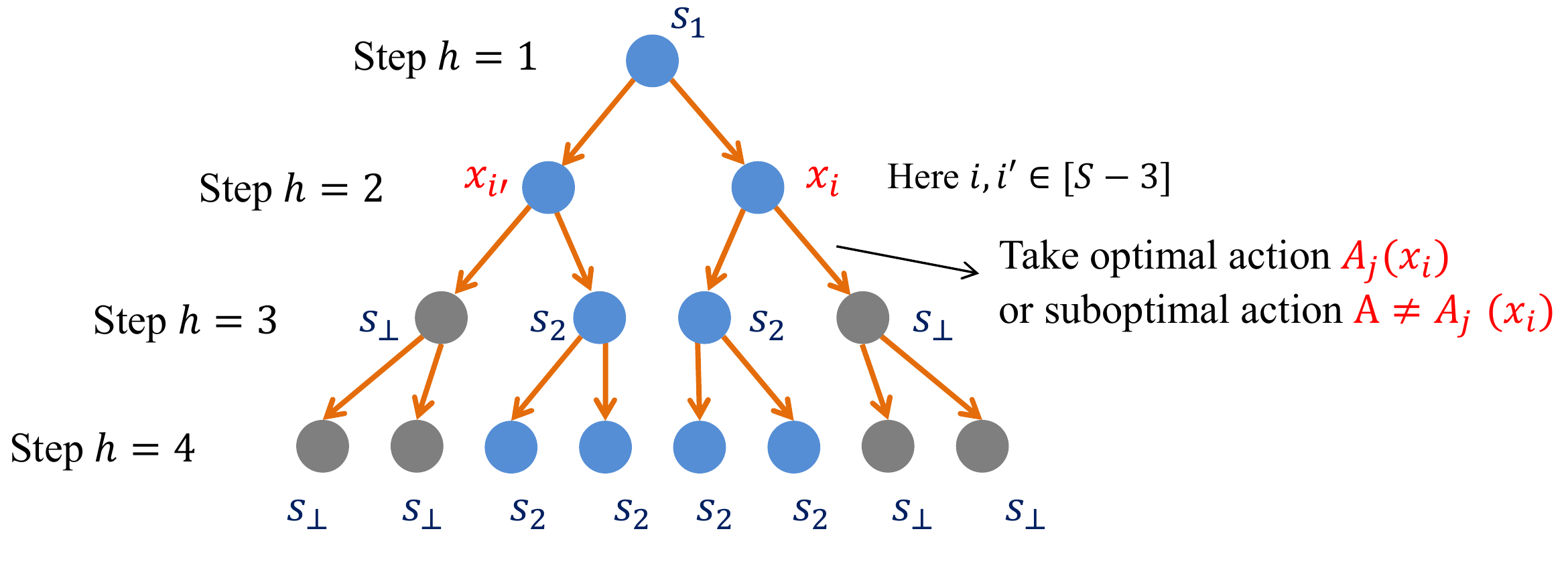} 
		\caption{The constructed instance with $m=2$ in regret lower bound analysis.} 
		\label{fig:regret_lb}     
	\end{figure}
	
	As shown in Figure~\ref{fig:regret_lb}, consider a random instance as follows: There are $N$ base actions, i.e., $A^{\univ}=\{a_1,\dots,a_{N}\}$, and $d:=\frac{N}{m}$ super actions, i.e., $A_1=\{a_1,\dots,a_m\}, A_2=\{a_{m+1},\dots,a_{2m}\},\dots,A_{d}=\{a_{m(d-1)+1},\dots,a_{md}\}$. The action set is $\cA=\{A_1,\dots,A_d\}$. 
	The state set is $\cS=\{s_{\perp},s_1,s_2,x_1,\dots,x_{S-3}\}$. 
	
	The trigger probabilities are as follows: $q(s_1,a)=q(s_2,a)=\alpha+\eta$ and $q(s_{\perp},a)=0$ for any $a \in A^{\univ}$. 
	
	The transition distributions are as follows: $s_1$ is the initial state. $p(x_i|s_1,a)=\frac{1}{S-3}$ for any $a \in A^{\univ}$. $p(s_2|x_i,a)=1$ for any $a \in A^{\univ}, i \in [S-3]$. $p(s_2|s_2,a)=1$ and $p(s_{\perp}|s_{\perp},a)=1$ for any $a \in A^{\univ}$. 
	
	The reward function is only dependent on the current state. $r(s,a)=1$ for any $s \in \cS \setminus \{s_{\perp}\}, a \in A^{\univ}$, and $r(s_{\perp},a)=0$ for any $a \in A^{\univ}$.
	
	The randomness of this instance is as follows: for each $x_i \in \{x_1,\dots,x_{S-3}\}$, we uniformly choose an action $A_j(x_i)$ from $A_1,\dots,A_d$ as the optimal action. Let $q(x_i,a)=\alpha+\eta$ for all $a \in A_j(x_i)$, and  $q(x_i,a)=\alpha$ for all $a \notin A_j(x_i)$. Let $\alpha+\eta=\frac{1}{m}$.
	
	In words, in each episode, at step $1$, an agent starts from state $s_1$ and takes an action that contains $m$ base actions, where each base action has trigger probability $\alpha+\eta$. For each state-base action pair at step $1$, if triggered successfully,  it transitions to $x_i \in \{x_1,\dots,x_{S-3}\}$ with probability $\frac{1}{S-3}$; Otherwise, if triggered successfully, it transitions to ending state $s_{\perp}$.  At step $1$, in a bandit state $x_i$, the agent takes an action $A \in \cA$ that contains $m$ base actions, where each base action has trigger probability $\alpha+\eta$ if $A=A_j(x_i)$, and has trigger probability only $\alpha$ if $A \neq A_j(x_i)$. At step $2$, for each state-base action pair, if triggered successfully, it transitions to $s_2$; Otherwise, it transitions to $s_{\perp}$. At step $3$, starting from state $s_2$, the agent takes an action where each contained base action has trigger probability $\alpha+\eta$. For each state-base action pair at step $3$, if triggered successfully, it still transitions back to $s_2$; Otherwise, it transitions to $s_{\perp}$. The following steps $4,\dots,H$ are similar to step $3$, where the agent starts from $s_2$ and transitions back to $s_2$ or transitions to $s_{\perp}$.
	
	The optimal policy $\pi_*$ is to take action $A_j(x_i)$ at state $x_i$, and we have
	\begin{align}
		\ex[\reward^{\pi_*}] = \ex \mbr{\sum_{k=1}^{K} V^{*}_1(s_1)} = K \sbr{ m (\alpha+\eta) + m^2 (\alpha+\eta)^2 + \dots + m^H (\alpha+\eta)^H } =HK \label{eq:lb_reward_pi_star}
	\end{align}
	
	Fix an algorithm $\A$. Let $\pi^k$ denote the policy taken by $\A$ in episode $k$. For each $x_i \in \{x_1,\dots,x_{S-3}\}$, let $T_{x_i,A_j(x_i)}:= \sum_{k=1}^{K} \indicator{\pi^k(x_i) = A_j(x_i) } $ denote the number of episodes where $\A$ chooses $A_j(x_i)$ in state $x_i$. Then, the number of episodes where $\A$ chooses suboptimal actions in state $x_i$ is $K-T_{x_i,A_j(x_i)}$.
	\begin{align}
		\ex[\reward^{\A}] = & \ex \mbr{\sum_{k=1}^{K} V^{\pi^k}_1(s_1) }
		\nonumber\\
		= & K m (\alpha+\eta) + \frac{1}{(S-3)d} \sum_{i=1}^{S-3} \sum_{j=1}^{d} \ex \Big[  T_{x_i,A_j(x_i)} \sbr{  m^2 (\alpha+\eta)^2 + \dots + m^H (\alpha+\eta)^H } \nonumber\\& + \sbr{K-T_{x_i,A_j(x_i)}} \sbr{ m^2 (\alpha+\eta)\alpha + m^3 (\alpha+\eta)\alpha (\alpha+\eta) + \dots + m^H (\alpha+\eta) \alpha (\alpha+\eta)^{H-2} } \Big]
		\nonumber\\
		= & K m (\alpha+\eta) + \frac{1}{(S-3)d} \sum_{i=1}^{S-3} \sum_{j=1}^{d} \ex \Big[ T_{x_i,A_j(x_i)} \sbr{  m^2 (\alpha+\eta)^2 + \dots + m^H (\alpha+\eta)^H } \nonumber\\& + \sbr{K-T_{x_i,A_j(x_i)}} \sbr{ m^2 (\alpha+\eta)\alpha + m^3 (\alpha+\eta)^2 \alpha + \dots + m^H (\alpha+\eta)^{H-1} \alpha  } \Big] \label{eq:lb_reward_A}
%		\\
%		= & \frac{1}{(S-3)d} \sum_{i=1}^{S-3} \sum_{j=1}^{d} \ex \Big[ T_{x_i,A_j(x_i)} H \\&+ \sbr{K-T_{x_i,A_j(x_i)}} \sbr{ H- \sbr{m (\alpha+\eta) + m^2 (\alpha+\eta)\eta + m^3 (\alpha+\eta)^2 \eta + \dots + m^H (\alpha+\eta)^{H-1} \eta }  } \Big]
%		\\
%		= & \frac{1}{(S-3)d} \sum_{i=1}^{S-3} \sum_{j=1}^{d} \ex \Big[ T_{x_i,A_j(x_i)} H + \sbr{K-T_{x_i,A_j(x_i)}} \sbr{ H-(1+m\eta(H-1)) } \Big]
%		\\
%		= & \frac{1}{(S-3)d} \sum_{i=1}^{S-3} \sum_{j=1}^{d} \ex \Big[ HK - \sbr{K-T_{x_i,A_j(x_i)}} \sbr{ 1+m\eta(H-1) } \Big] 
	\end{align}
	
	Subtracting Eq.~\eqref{eq:lb_reward_A} by Eq.~\eqref{eq:lb_reward_pi_star}, we have
	\begin{align}
		\ex[\regret^{\A}] = & \ex[\reward^{\pi_*}] - \ex[\reward^{\A}]
		\nonumber\\
		= & \ex \mbr{\sum_{k=1}^{K} \sbr{V^{*}_1(s_1) - V^{\pi^k}_1(s_1)}}
		\nonumber\\
		= & \frac{1}{(S-3)d}  \sum_{i=1}^{S-3} \sum_{j=1}^{d} \ex \Big[ \!\! \sbr{K-T_{x_i,A_j(x_i)}} \! \sbr{ m (\alpha+\eta) m\eta + m^2 (\alpha+\eta)^2 m\eta + \dots + m^{H-1} (\alpha+\eta)^{H-1} m\eta  } \!\! \Big] 
		\nonumber\\
		= & \frac{1}{(S-3)d} \sum_{i=1}^{S-3} \sum_{j=1}^{d} \ex \Big[ \sbr{K-T_{x_i,A_j(x_i)}} (H-1) m\eta \Big] 
%		\nonumber\\
%		= & \frac{1}{(S-3)d} \sum_{i=1}^{S-3} \sum_{j=1}^{d} \ex \mbr{\sbr{K-T_{x_i,A_j(x_i)}} \sbr{ 1+m\eta(H-1) }}
%		\nonumber\\
%		\geq & \frac{1}{(S-3)d} \sum_{i=1}^{S-3} \sum_{j=1}^{d} \ex \mbr{ m\eta(H-1) \sbr{K-T_{x_i,A_j(x_i)}} }
%		\nonumber\\
%		= & \frac{1}{(S-3)d} \sum_{i=1}^{S-3} \sum_{j=1}^{d} m\eta(H-1) \sbr{ K-\ex \mbr{T_{x_i,A_j(x_i)}} }  
		\nonumber\\
		= &  m\eta(H-1) \sbr{ K- \frac{1}{(S-3)d} \sum_{i=1}^{S-3} \sum_{j=1}^{d} \ex \mbr{T_{x_i,A_j(x_i)}} }  \label{eq:lb_regret}
	\end{align}
	
	Let $\ex_{A_j(x_i)}[\cdot]$ denote the expectation operator under the instance $\cI_{A_j(x_i)}$ where the optimal action of state $x_i$ is $A_j(x_i)$. 
	
	Let $\ex_{x_i,\unif}[\cdot]$ denote the expectation operator under the instance where all actions $A \in \cA$ at state $x_i$ have the same trigger probability, i.e., $q(x_i,a)=\alpha$ for any $a \in A^{\univ}$, and other distribution settings are the same as $\cI_{A_j(x_i)}$.
	
	Note that the KL-divergence between the above two instances is $m \cdot \kl\sbr{\bernoulli(\alpha) || \bernoulli(\alpha+\eta)} \leq m \cdot \frac{\eta^2}{(\alpha+\eta)(1-(\alpha+\eta))} \leq m\cdot \frac{\eta^2}{c_1}$ with $(\alpha+\eta)(1-(\alpha+\eta))\geq c_1$ for some absolute positive constant $c_1$.
	After a pull of $(x_i,A_j(x_i))$, we receive an observation of such difference between the two instances.  
	
	Using Lemma~A.1 in \cite{auer2002nonstochastic}, we have
	\begin{align*}
		\ex_{A_j(x_i)} \mbr{ T_{x_i,A_j(x_i)} } \leq & \ex_{x_i,\unif} \mbr{ T_{x_i,A_j(x_i)} } \\& + \frac{K}{2} \sqrt{ \frac{1}{2} \cdot \frac{1}{S-3} \ex_{x_i,\unif} \mbr{   T_{x_i,A_j(x_i)}} \cdot m \kl\sbr{\bernoulli(\alpha) || \bernoulli(\alpha+\eta)}  }
		\\
		\overset{\textup{(a)}}{\leq} & \ex_{x_i,\unif} \mbr{ T_{x_i,A_j(x_i)} } + \frac{K}{2} \sqrt{ \frac{m}{2 c_1 (S-3) }  \ex_{x_i,\unif} \mbr{ T_{x_i,A_j(x_i)} } \eta^2  }
		\\
		= & \ex_{x_i,\unif} \mbr{ T_{x_i,A_j(x_i)} } + \frac{K \eta}{2} \sqrt{ \frac{m}{2 c_1 (S-3) } \ex_{x_i,\unif} \mbr{ T_{x_i,A_j(x_i)} } }
	\end{align*}
	
	Since $\sum_{j=1}^{d} \ex_{x_i,\unif}  \mbr{ T_{x_i,A_j(x_i)} } = \sum_{j=1}^{d}\sum_{k=1}^{K} \ex_{x_i,\unif}  \mbr{  \pi^k(A_j(x_i)|x_i) } =K$,
	we have
	\begin{align*}
		\sum_{j=1}^{d} \ex_{A_j(x_i)} \mbr{ T_{x_i,A_j(x_i)} } \leq & \sum_{j=1}^{d} \ex_{x_i,\unif} \mbr{ T_{x_i,A_j(x_i)} } +  \frac{K \eta}{2} \sum_{j=1}^{d} \sqrt{ \frac{m}{2 c_1 (S-3) } \ex_{x_i,\unif} \mbr{ T_{x_i,A_j(x_i)} } }
		\\
		\leq & \sum_{j=1}^{d} \ex_{x_i,\unif} \mbr{ T_{x_i,A_j(x_i)} } +  \frac{K \eta}{2}  \sqrt{ \frac{dm}{2 c_1 (S-3) } \sum_{j=1}^{d} \ex_{x_i,\unif} \mbr{ T_{x_i,A_j(x_i)} } }
		\\
		= & K +  \frac{K \eta}{2}  \sqrt{   \frac{dmK}{2 c_1 (S-3) }  }
	\end{align*}
	and thus
	\begin{align}
		\frac{1}{(S-3)d} \sum_{i=1}^{S-1} \sum_{j=1}^{d} \ex_{A_j(x_i)} \mbr{ T_{x_i,A_j(x_i)} } \leq & \frac{1}{d} \sbr{K +  \frac{K \eta}{2}  \sqrt{  \frac{dm K}{2 c_1 (S-3) }  }}
		\nonumber\\
		= & K \sbr{\frac{1}{d} + \frac{ \eta }{2}  \sqrt{    \frac{mK}{2c_1 d(S-3) } }} \label{eq:lb_N_A_j}
	\end{align}
	
	Plugging Eq.~\eqref{eq:lb_N_A_j} into Eq.~\eqref{eq:lb_regret}, we obtain
	\begin{align}
	\ex[\regret^{\A}] \geq & m\eta(H-1) \sbr{ K- \frac{1}{(S-3)d} \sum_{i=1}^{S-3} \sum_{j=1}^{d} \ex \mbr{T_{x_i,A_j(x_i)}} }
	\nonumber\\
	\geq & m\eta(H-1) K \sbr{1-\frac{1}{d} - \frac{ \eta }{2} \sqrt{ \frac{mK}{2 c_1 d(S-3) } }} \label{eq:lb_regret_plug_eta}
	\end{align}
	
	Let $\eta=c_2 \sqrt{ \frac{d(S-3)}{m K} }$ for some small enough constant $c_2$, we have 
	\begin{align*}
	\ex[\regret^{\A}] = & \Omega \sbr{ (H-1) \sqrt{(S-3)dmK} }
	\\
	= & \Omega \sbr{ H \sqrt{SNK} }
	\end{align*}
\end{proof}

\subsubsection{Proof of Theorem~\ref{thm:relax_assumption}} \label{apx:proof_lb_relax_assump}

%In the following, we show that if trigger probability $q(s,a)$ is unlimited, i.e., without Assumption~\ref{assumption:tri_prob}, any algorithm for branching RL-RM must suffer at least $\Omega \sbr{ e^H \sqrt{SNK} }$ regret.

\begin{proof}[Proof of Theorem~\ref{thm:relax_assumption}]
	
This proof uses the same instance and analytical procedure as the proof of Theorem~\ref{thm:regret_lb}, except that we set $\alpha+\eta=\bar{q}$ for some trigger probability threshold $\bar{q}>\frac{1}{m}$.

Then, Eq.~\eqref{eq:lb_regret} becomes
\begin{align}
\ex[\regret^{\A}] = & \ex[\reward^{\pi_*}] - \ex[\reward^{\A}]
\nonumber\\
= & \ex \mbr{\sum_{k=1}^{K} \sbr{V^{*}_1(s_1) - V^{\pi^k}_1(s_1)}}
\nonumber\\
= & \frac{1}{(S-3)d} \! \sum_{i=1}^{S-3} \sum_{j=1}^{d} \! \ex \Big[\!\! \sbr{K-T_{x_i,A_j(x_i)}} \! \sbr{ m \bar{q} \cdot m\eta + m^2 \bar{q}^2 \cdot m\eta + \dots + m^{H-1} \bar{q}^{H-1} \cdot m\eta  } \!\!\Big] 
\nonumber\\
= & \frac{1}{(S-3)d} \sum_{i=1}^{S-3} \sum_{j=1}^{d} \ex \Big[ \sbr{K-T_{x_i,A_j(x_i)}} \frac{m \bar{q} \sbr{ (m \bar{q})^{H-1}-1 } }{m \bar{q}-1} \cdot m\eta \Big]  
\nonumber\\
= & \frac{m \bar{q} \sbr{ (m \bar{q})^{H-1}-1 } }{m \bar{q}-1} \cdot m\eta \sbr{ K- \frac{1}{(S-3)d} \sum_{i=1}^{S-3} \sum_{j=1}^{d} \ex \mbr{T_{x_i,A_j(x_i)}} } 
\end{align}
and Eq~\eqref{eq:lb_regret_plug_eta} becomes
\begin{align}
\ex[\regret^{\A}] \geq & \frac{m \bar{q} \sbr{ (m \bar{q})^{H-1}-1 } }{m \bar{q}-1} \cdot m\eta \sbr{ K- \frac{1}{(S-3)d} \sum_{i=1}^{S-3} \sum_{j=1}^{d} \ex \mbr{T_{x_i,A_j(x_i)}} }
\\
\geq & \frac{m \bar{q} \sbr{ (m \bar{q})^{H-1}-1 } }{m \bar{q}-1} \cdot m\eta K \sbr{1-\frac{1}{d} - \frac{ \eta }{2} \sqrt{ \frac{mK}{2 c_1 d(S-3) } }} 
\end{align}

Let $\eta=c_2 \sqrt{ \frac{d(S-3)}{m K} }$ for some small enough constant $c_2$, we have 
\begin{align*}
\ex[\regret^{\A}] = & \Omega \sbr{ \frac{m \bar{q} \sbr{ (m \bar{q})^{H-1}-1 } }{m \bar{q}-1}  \sqrt{(S-3)dmK} }
\\
= & \Omega \sbr{ \frac{m \bar{q} \sbr{ (m \bar{q})^{H-1}-1 } }{m \bar{q}-1} \sqrt{SNK} }
\end{align*}

Therefore, when relaxing the trigger probability threshold in Assumption~\ref{assumption:tri_prob} to some $\bar{q}>\frac{1}{m}$, any algorithm for branching RL-RM must suffer an exponential regret.

\end{proof}

\section{Proofs for Branching RL with Reward-Free Exploration}

In this section, we prove sample complexity upper and lower bounds (Theorems \ref{thm:sample_complexity_ub},\ref{thm:sample_complexity_lb}) for branching RL-RFE.

\subsection{Proof for Sample Complexity Upper Bound}

\subsubsection{Augmented Transition Distribution}

First, we introduce an augmented transition distribution $p^{\aug}(\cdot|s,a)$ and connect it with trigger distribution $q(s,a)$ and transition distribution $p(\cdot|s,a)$.

For any $(s,a)\in \cS \times A^{\univ}$, let $p^{\aug}(\cdot|s,a)$ denote the augmented transition distribution on $\cS$, which satisfies that
\begin{align*}
	p^{\aug}(s_{\perp}|s,a)&=1-q(s,a), 
	\\
	p^{\aug}(s'|s,a)&=q(s,a) p(s'|s,a), \ \forall s' \in \cS \setminus \{s_{\perp}\}.
\end{align*}

For any episode $k$, we can also define the empirical augmented transition distribution as 
\begin{align*}
\hat{p}^{\aug,k}(s_{\perp}|s,a)&=1-\hat{q}^k(s,a), 
\\
\hat{p}^{\aug,k}(s'|s,a)&=\hat{q}^k(s,a) \hat{p}^k(s'|s,a), \ \forall s' \in \cS \setminus \{s_{\perp}\}.
\end{align*}

\begin{lemma}\label{lemma:p_aug_equivalent}
For any function $f(\cdot)$ defined on $\cS$ such that $f(s_{\perp})=0$ (e.g., $f$ can be the value function $V^{\pi}_{h}(\cdot)$, $\forall h \in [H], \pi$), it holds that
\begin{align}
	p^{\aug}(\cdot|s,a)^\top f &=  q(s,a) p(\cdot|s,a)^\top f , \label{eq:p_aug_equivalent}
	\\
	\hat{p}^{\aug,k}(\cdot|s,a)^\top V_{h+1} &=  \hat{q}^k(s,a) \hat{p}^k(\cdot|s,a)^\top  f . \label{eq:hat_p_aug_equivalent}
\end{align}
\end{lemma}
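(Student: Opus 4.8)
The plan is to verify both identities by expanding the inner products according to the definition of the augmented transition distribution and then exploiting the boundary hypothesis $f(s_{\perp})=0$ to discard the ending-state contribution. First I would write the left-hand side of \eqref{eq:p_aug_equivalent} as an explicit sum over all states and split off the term at $s_{\perp}$:
\begin{align*}
p^{\aug}(\cdot|s,a)^\top f
= p^{\aug}(s_{\perp}|s,a)\, f(s_{\perp}) + \sum_{s' \in \cS \setminus \{s_{\perp}\}} p^{\aug}(s'|s,a)\, f(s').
\end{align*}
Because $f(s_{\perp})=0$, the first summand vanishes, and substituting the definition $p^{\aug}(s'|s,a)=q(s,a)p(s'|s,a)$ for every $s'\neq s_{\perp}$ turns the remaining sum into $q(s,a)\sum_{s'\neq s_{\perp}} p(s'|s,a)\, f(s')$.

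The key observation is that extending this last sum back to all of $\cS$ changes nothing, since the reinstated term $p(s_{\perp}|s,a)\,f(s_{\perp})$ is again zero by the hypothesis. Hence the expression equals $q(s,a)\sum_{s'\in\cS} p(s'|s,a)\, f(s') = q(s,a)\, p(\cdot|s,a)^\top f$, which is exactly \eqref{eq:p_aug_equivalent}. For \eqref{eq:hat_p_aug_equivalent} I would repeat the identical computation verbatim with the empirical quantities $\hat{p}^{\aug,k}$, $\hat{q}^k$, $\hat{p}^k$ in place of $p^{\aug}$, $q$, $p$; here the relevant instance is $f=V^{\pi}_{h+1}$, which satisfies $V^{\pi}_{h+1}(s_{\perp})=0$ (as recorded earlier from the definitions of $r,p,q$ at the ending state), so the same cancellation of the $s_{\perp}$ term applies.

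I do not expect any genuine technical obstacle: the lemma is a direct consequence of the definition of $p^{\aug}$ together with the boundary condition $f(s_{\perp})=0$. The only point requiring care is the bookkeeping over the summation domain, namely legitimately dropping the $s_{\perp}$ term on the augmented side and then harmlessly reinstating it on the product side. This is precisely where the hypothesis $f(s_{\perp})=0$ is invoked twice, and it is what reconciles the fact that $p^{\aug}$ places mass on $s_{\perp}$ while the factorization $q(s,a)\,p(\cdot|s,a)$ does not.
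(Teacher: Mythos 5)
Your proposal is correct and matches the paper's own proof essentially verbatim: both expand the inner product as a sum over $\cS$, drop the $s_{\perp}$ term via the hypothesis $f(s_{\perp})=0$, substitute the definition of $p^{\aug}$ (resp.\ $\hat{p}^{\aug,k}$) on the regular states, and recognize the result as $q(s,a)\,p(\cdot|s,a)^\top f$, with the empirical case handled identically. No gaps.
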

\begin{proof}[Proof of Lemma~\ref{lemma:p_aug_equivalent}]
	We prove Eq.~\eqref{eq:p_aug_equivalent} as follows:
	\begin{align*}
		p^{\aug}(\cdot|s,a)^\top f &=  \sum_{s' \in \cS} p^{\aug}(s'|s,a) f(s')
		\\
		&=  \sum_{s' \in \cS \setminus\{s_{\perp}\}} q(s,a) p(s'|s,a) f(s')
		\\
		&=  q(s,a) p(\cdot|s,a)^\top f 
	\end{align*}
	
	Eq.~\eqref{eq:hat_p_aug_equivalent} can be proved in a similar manner.
\end{proof}

\subsubsection{Concentration}
In the following, we introduce a concentration lemma.

\begin{lemma}[KL-divergence Based Concentration of Triggered Transition]\label{lemma:kl_con_trigger_transition}
	Defining event
	\begin{align*}
	\cG := \lbr{\kl\sbr{ \hat{p}^{\aug,k}(\cdot|s,a) \| p^{\aug}(\cdot|s,a) } \leq   \frac{\log\sbr{\frac{SN}{\delta'}}+S\log \sbr{8e (n^k(s,a)+1)} }{n^k(s,a)}  , \  \forall (s,a)\in \cS \times A^{\univ}, \forall k  } ,
	\end{align*} 
	it holds that
	\begin{align*}
		\Pr \mbr{\cG} \geq 1-\delta .
	\end{align*}
\end{lemma}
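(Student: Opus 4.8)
The plan is to recognize the augmented transition distribution as an ordinary categorical distribution over the $S$-element state space $\cS$, and then to invoke a standard time-uniform (anytime) KL concentration bound for empirical categorical distributions, closed off by a union bound over all state-base action pairs.

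First I would establish the reduction to i.i.d.\ categorical sampling. By the definition of $p^{\aug}(\cdot|s,a)$, a single visit to $(s,a)$ produces exactly one draw from the distribution on $\cS$ that places mass $1-q(s,a)$ on $s_{\perp}$ and mass $q(s,a)p(s'|s,a)$ on each $s'\neq s_{\perp}$: the pair fails to trigger (yielding $s_{\perp}$) with probability $1-q(s,a)$, and otherwise transitions to $s'\sim p(\cdot|s,a)$. Because $q,p$ are time-homogeneous and, conditioned on the filtration generated by the tree up to a given node, the triggers and transitions at distinct nodes are independent (the same independence underlies Lemma~\ref{lemma:LTV_main_text}), the samples collected for a fixed $(s,a)$ across all nodes of all episodes up to episode $k$ form a sequence of i.i.d.\ draws from $p^{\aug}(\cdot|s,a)$ of length $n^k(s,a)$, and $\hat{p}^{\aug,k}(\cdot|s,a)$ is precisely their empirical distribution. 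Note that $(s,a)$ may be sampled several times within a single episode, since the same state can label many tree nodes, but this only augments the sample count and does not affect independence.

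Next I would apply a method-of-mixtures (Laplace) argument to a fixed pair $(s,a)$. Writing $N^n=(N^n_{s'})_{s'\in\cS}$ for the observed counts after $n:=n^k(s,a)$ samples, one has $n\cdot\kl\sbr{\hat{p}^{\aug,k}(\cdot|s,a)\|p^{\aug}(\cdot|s,a)} = \sum_{s'\in\cS} N^n_{s'}\log(\hat{p}^{\aug,k}(s'|s,a)/p^{\aug}(s'|s,a))$, i.e.\ the log of the likelihood ratio of the maximum-likelihood (empirical) distribution to the truth $p^{\aug}$. Mixing the numerator likelihood against a Dirichlet prior $\nu$ on the probability simplex defines a nonnegative martingale $M_n$ with $M_0=1$ and $\E[M_n]=1$; Ville's maximal inequality yields $\Pr[\exists n:\ M_n\geq SN/\delta]\leq \delta/(SN)$. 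Lower-bounding the Dirichlet mixture integral by its value near the empirical peak (equivalently, controlling the ratio of Gamma normalizing constants) gives the polynomial correction, so that on the complement event
\begin{align*}
n^k(s,a)\cdot\kl\sbr{\hat{p}^{\aug,k}(\cdot|s,a)\,\|\,p^{\aug}(\cdot|s,a)} \leq \log\sbr{\frac{SN}{\delta}} + S\log\sbr{8e\,(n^k(s,a)+1)}
\end{align*}
holds simultaneously for all $k$. This is exactly the anytime bound used in the reward-free exploration literature (see, e.g., \cite{menard2021fast}), which I would cite rather than re-derive.

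Finally I would take a union bound over the $SN$ pairs $(s,a)\in\cS\times A^{\univ}$, allotting each the failure budget $\delta/(SN)$; since the per-pair inequality already holds uniformly over all episode counts $n^k(s,a)$, no further union over $k$ is required, and $\Pr[\cG]\geq 1-SN\cdot\frac{\delta}{SN}=1-\delta$. The main obstacle is the anytime KL bound of the middle step: making the Laplace/Dirichlet-mixture argument rigorous with the explicit correction $S\log(8e(n+1))$, and confirming that the martingale and Ville's-inequality machinery survives the adaptive, branching sampling schedule (where $n^k(s,a)$ is itself a data-dependent counter). The reduction to categorical sampling and the concluding union bound are routine by comparison.
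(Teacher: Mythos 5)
Your proposal is correct and takes essentially the same route as the paper: the paper's entire proof is a one-line citation to Theorem 3 and Lemma 3 of \cite{menard2021fast}, which is exactly the time-uniform Dirichlet-mixture/Ville KL concentration bound for empirical categorical distributions that you invoke, combined with the union bound over the $SN$ regular state-base action pairs. Your unpacking of the Laplace method and your remark that the anytime nature of the bound absorbs the data-dependent counter $n^k(s,a)$ simply make explicit what the cited result already handles.
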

\begin{proof}[Proof of Lemma~\ref{lemma:kl_con_trigger_transition}]
	Using Theorem 3 and Lemma 3 in \cite{menard2021fast},
	we can obtain this lemma.	
\end{proof}

%\subsection{Summary of Concentration Events}

%To summarize the concentration lemmas used, we define the following concentration events:
%\begin{align*}
%	\cG_{\textup{vis}}:=&\Bigg[\abr{ n^k(s,a)- \sum_{k'<k} w_{k'}(s,a) } \leq 8H\sqrt{ k \log \sbr{\frac{SNk}{\delta'}}} + 3\frac{m^{H+1}-m}{m-1} \log \sbr{\frac{SNk}{\delta'}} , 
%	\nonumber\\
%	& \qquad \forall (s,a)\in \cS \setminus\{s_{\perp}\} \times A^{\univ} , \forall k  \Bigg] 
%	\\
%	\cG_{\textup{trans}}:=&\mbr{\kl\sbr{ \hat{p}^{\aug,k}(\cdot|s,a) \| p^{\aug}(\cdot|s,a) } \leq   \frac{\log\sbr{\frac{SN}{\delta'}}+S\log \sbr{8e (n^k(s,a)+1)} }{n^k(s,a)}  , \  \forall (s,a)\in \cS \times A^{\univ}, \forall k  }
%	\\
%	\cG:=& \cG_{\textup{vis}} \cap \cG_{\textup{trans}}
%\end{align*}
%
%\begin{lemma}
%	Letting $\delta'=\frac{\delta}{5}$, the concentration event $\cG$ satisfies 
%	$$
%	\Pr \mbr{\cG} \geq 1-5\delta' = 1-\delta
%	$$
%\end{lemma}
%\begin{proof}
%	We can obtain this lemma by combining Lemmas~\ref{lemma:con_visitation_rfe},\ref{lemma:kl_con_trigger_transition}.
%\end{proof}

\subsubsection{KL Divergence-based Technical Tools}

Below, we present several useful KL divergence-based technical tools.

\begin{lemma}[Lemma 10 in \cite{menard2021fast}] \label{lemma:kl_deviation_value}
	%	[KL-divergence Based Concentration of Future Values]
	Let $p_1$ and $p_2$ be two distributions on $\cS$ such that $\kl(p_1,p_2)\leq \alpha$. Let $f$ be a function defined on $\cS$ such that for any $s \in \cS$, $0\leq f(s) \leq b$. Then,
	$$
	\abr{ p_1 f- p_2f } \leq \sqrt{2 \var_{p_2}(f)\alpha } + \frac{2}{3} b \alpha
	$$
\end{lemma}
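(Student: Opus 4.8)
The plan is to prove this by the transportation method: a change-of-measure (Gibbs/Donsker--Varadhan) inequality combined with a Bernstein-type bound on the log-moment-generating function of a bounded centered random variable. Write $\mu := p_2 f$ and $\sigma^2 := \var_{p_2}(f)$, and observe that $p_1 f - p_2 f = \E_{p_1}[f-\mu]$. Since $0 \le f(s) \le b$ for all $s$, we have $\mu \in [0,b]$ and hence $\abr{f-\mu} \le b$ pointwise. It suffices to bound $p_1 f - p_2 f$ from above by $\sqrt{2\sigma^2\alpha}+\tfrac{2}{3}b\alpha$; the reverse inequality (and therefore the absolute value) then follows by running the identical argument with the centered variable $\mu - f$ in place of $f - \mu$, which is again $p_2$-centered, bounded by $b$ in absolute value, and of variance $\sigma^2$.

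First I would invoke the variational inequality for relative entropy: for every $\lambda > 0$,
\[
\lambda\, \E_{p_1}[f-\mu] - \log \E_{p_2}\!\big[ e^{\lambda(f-\mu)} \big] \;\le\; \kl(p_1,p_2) \;\le\; \alpha,
\]
so that $p_1 f - p_2 f \le \frac{1}{\lambda}\sbr{\alpha + \psi(\lambda)}$ with $\psi(\lambda):=\log\E_{p_2}[e^{\lambda(f-\mu)}]$. Next I would control $\psi(\lambda)$ for the centered, bounded variable $Y:=f-\mu$. Using $\E_{p_2}[Y]=0$, the moment bound $\E_{p_2}[Y^k]\le \sigma^2 b^{\,k-2}$ for $k\ge 2$ (immediate from $\abr{Y}\le b$), the elementary series estimate $\sum_{k\ge 2}\frac{x^k}{k!}\le \frac{x^2/2}{1-x/3}$ for $0\le x<3$, and $1+u\le e^u$, I obtain
\[
\psi(\lambda) \;\le\; \frac{\lambda^2\sigma^2/2}{1 - b\lambda/3}, \qquad 0 < \lambda < 3/b.
\]

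The final and main analytic step is the optimization over $\lambda$ (the sub-gamma inversion). The clean route is to plug in the explicit near-optimal choice $\lambda = u/(1+\tfrac{b}{3}u)$ with $u:=\sqrt{2\alpha/\sigma^2}$, which satisfies $\lambda < 3/b$ and yields $1-b\lambda/3 = (1+\tfrac{b}{3}u)^{-1}$; a direct substitution collapses the two terms to
\[
\frac{\alpha}{\lambda} + \frac{\lambda\sigma^2/2}{1-b\lambda/3} \;=\; \frac{\alpha}{u} + \frac{b}{3}\alpha + \frac{u\sigma^2}{2} \;=\; \sqrt{2\sigma^2\alpha} + \tfrac{b}{3}\alpha \;\le\; \sqrt{2\sigma^2\alpha} + \tfrac{2}{3}b\alpha.
\]
The hard part is precisely this balancing: one must trade the $\alpha/\lambda$ term against the MGF term while respecting the pole at $\lambda = 3/b$, and it is this trade-off that produces the two characteristic regimes ($\sqrt{\sigma^2\alpha}$-dominated versus $b\alpha$-dominated). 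The change-of-measure step and the MGF estimate are routine; the only points needing care are verifying $\abr{f-\mu}\le b$ so that the one-line moment bound applies, and noting that the slightly loose stated constant $\tfrac{2}{3}b$ is implied by the sharper $\tfrac{1}{3}b$ that the argument actually delivers.
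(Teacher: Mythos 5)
Your proof is correct. Note that the paper itself offers no proof of this statement: it is imported verbatim as Lemma 10 of \cite{menard2021fast}, so there is nothing internal to compare against; your transportation argument (Donsker--Varadhan change of measure, Bernstein-type bound $\psi(\lambda)\leq \frac{\lambda^2\var_{p_2}(f)/2}{1-b\lambda/3}$ on the log-MGF of the centered bounded variable, then the sub-gamma inversion with $\lambda = u/(1+\tfrac{b}{3}u)$, $u=\sqrt{2\alpha/\var_{p_2}(f)}$) is precisely the canonical route by which such KL-deviation bounds are established in the cited source, and your algebra checks out, including the symmetrization via $\mu-f$ that upgrades the one-sided bound to the absolute value and the observation that the argument actually yields the sharper constant $\tfrac{1}{3}b\alpha$. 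The only point you leave implicit is the degenerate case $\var_{p_2}(f)=0$, where your choice of $u$ is undefined; this is harmless, since finiteness of $\kl(p_1,p_2)$ forces $p_1 \ll p_2$, hence $f$ equals its $p_2$-mean $p_1$-almost surely and both sides of the claimed inequality reduce to the trivial bound (alternatively, let $\lambda \uparrow 3/b$ in your bound $\frac{\alpha}{\lambda}+\frac{\lambda \var_{p_2}(f)/2}{1-b\lambda/3}$).
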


\begin{lemma}[Lemma 11 in \cite{menard2021fast}] \label{lemma:kl_change_transition}
	%[KL-divergence Based Change of Transition]
	Let $p_1$ and $p_2$ be two distributions on $\cS$ such that $\kl(p_1,p_2)\leq \alpha$. Let $f$ be a function defined on $\cS$ such that for any $s \in \cS$, $0\leq f(s) \leq b$. Then,
	\begin{align*}
	\var_{p_2}(f) \leq 2 \var_{p_1}(f) + 4b^2\alpha
	\\
	\var_{p_1}(f) \leq 2 \var_{p_2}(f) + 4b^2\alpha
	\end{align*}
\end{lemma}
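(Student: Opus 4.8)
The plan is to reduce both inequalities to the preceding deviation bound, Lemma~\ref{lemma:kl_deviation_value}, via the variational characterization of the variance, $\var_p(f)=\min_{c}p\mbr{(f-c)^2}=p\mbr{(f-p f)^2}$. The single idea driving everything is: to compare $\var_{p_1}(f)$ and $\var_{p_2}(f)$, center $f$ at one of the two means and apply Lemma~\ref{lemma:kl_deviation_value} to the resulting centered square. Concretely, for the first inequality $\var_{p_1}(f)\le 2\var_{p_2}(f)+4b^2\alpha$ I would set $h(s):=\sbr{f(s)-p_2 f}^2$, which lies in $[0,b^2]$ since $\abr{f(s)-p_2 f}\le b$. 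Then $p_2 h=\var_{p_2}(f)$ exactly, while the variational characterization gives $\var_{p_1}(f)\le p_1 h$; hence $\var_{p_1}(f)\le \var_{p_2}(f)+\abr{p_1 h-p_2 h}$.

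The next step is to control $\abr{p_1 h-p_2 h}$ by Lemma~\ref{lemma:kl_deviation_value} applied to $h$ (with range bound $b^2$), giving $\abr{p_1 h-p_2 h}\le \sqrt{2\var_{p_2}(h)\alpha}+\tfrac23 b^2\alpha$. Because $0\le h\le b^2$ we have $h^2\le b^2 h$, so $\var_{p_2}(h)\le p_2\mbr{h^2}\le b^2\, p_2 h=b^2\var_{p_2}(f)$, and the cross term becomes $b\sqrt{2\var_{p_2}(f)\alpha}$. This leaves a self-referential inequality of the shape $\var_{p_1}(f)\le \var_{p_2}(f)+b\sqrt{2\var_{p_2}(f)\alpha}+\tfrac23 b^2\alpha$, which I would close with Young's inequality $b\sqrt{2\var_{p_2}(f)\alpha}\le \tfrac12\var_{p_2}(f)+b^2\alpha$; this yields $\var_{p_1}(f)\le \tfrac32\var_{p_2}(f)+\tfrac53 b^2\alpha$, comfortably inside the claimed bound.

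The second inequality, $\var_{p_2}(f)\le 2\var_{p_1}(f)+4b^2\alpha$, is the delicate direction and I expect it to be the main obstacle. Since Lemma~\ref{lemma:kl_deviation_value} only controls deviations with the variance under the reference measure $p_2$, centering at $p_1 f$ here forces an extra mean-gap term $\sbr{p_1 f-p_2 f}^2$ and a looser estimate $\var_{p_2}(h)\le b^2\sbr{\var_{p_2}(f)+\sbr{p_1 f-p_2 f}^2}$ that did not appear before. I would dispatch these by applying Lemma~\ref{lemma:kl_deviation_value} to $f$ itself, bounding $\abr{p_1 f-p_2 f}\le \sqrt{2\var_{p_2}(f)\alpha}+\tfrac23 b\alpha$, so every extra contribution is of higher order in $\alpha$. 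Finally I would split on the size of $\alpha$: when $\alpha\ge \tfrac{1}{16}$ the claim is immediate from the elementary universal bound $\var_{p_2}(f)\le b^2/4\le 4b^2\alpha$ for a $[0,b]$-valued random variable; when $\alpha<\tfrac{1}{16}$ the same Young-inequality step gives the leading estimate $\var_{p_2}(f)\le 2\var_{p_1}(f)+\tfrac{10}{3}b^2\alpha$, and the higher-order terms are absorbed into the slack between $\tfrac{10}{3}$ and the target constant $4$. Combining the two directions completes the proof of Lemma~\ref{lemma:kl_change_transition}.
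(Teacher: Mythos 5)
First, a point of reference: the paper never proves this lemma itself --- it is imported wholesale as Lemma 11 of \citet{menard2021fast} --- so your proposal has to be judged against the proof in that cited source, which is exactly the ``center at one mean and self-bound'' argument you set up. Your first direction is correct and complete: with $h:=(f-p_2f)^2\in[0,b^2]$ you get $\var_{p_1}(f)\le p_1h$, $p_2h=\var_{p_2}(f)$, $\var_{p_2}(h)\le b^2\,p_2h$, and Lemma~\ref{lemma:kl_deviation_value} followed by Young's inequality gives $\var_{p_1}(f)\le\tfrac32\var_{p_2}(f)+\tfrac53 b^2\alpha$, well inside the claim.

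The second direction, however, has a concrete numerical hole as you wrote it. Following your steps --- expand $p_2\mbr{(f-p_1f)^2}=\var_{p_2}(f)+\sbr{p_1f-p_2f}^2$, bound the mean gap by Lemma~\ref{lemma:kl_deviation_value} applied to $f$ (so $\sbr{p_1f-p_2f}^2\le 4\var_{p_2}(f)\alpha+\tfrac89b^2\alpha^2$), then run the Young step --- one obtains $\var_{p_2}(f)\le 2\var_{p_1}(f)+b^2\alpha\sbr{\tfrac{10}{3}+8\alpha+\tfrac83\sqrt{\alpha}}$. Absorbing $8\alpha+\tfrac83\sqrt{\alpha}$ into the slack $4-\tfrac{10}{3}=\tfrac23$ requires $\alpha\le\tfrac{1}{36}$, whereas the elementary bound $\var_{p_2}(f)\le b^2/4\le 4b^2\alpha$ only applies for $\alpha\ge\tfrac{1}{16}$: the interval $\alpha\in(\tfrac{1}{36},\tfrac{1}{16})$ is covered by neither branch of your case split. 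The fix is to not expand the mean gap at all. Set $y:=p_2\mbr{(f-p_1f)^2}$, so that $\var_{p_2}(f)\le y$ and $\var_{p_2}\sbr{(f-p_1f)^2}\le b^2 y$; then Lemma~\ref{lemma:kl_deviation_value} applied to $(f-p_1f)^2$ gives the self-referential inequality $y\le\var_{p_1}(f)+b\sqrt{2\alpha y}+\tfrac23b^2\alpha$, and your own Young step $b\sqrt{2\alpha y}\le\tfrac{y}{2}+b^2\alpha$ closes it as $y\le 2\var_{p_1}(f)+\tfrac{10}{3}b^2\alpha$ --- no case split, no higher-order terms, and perfectly symmetric with your first direction. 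With that one change your argument is correct and coincides with the proof in the cited reference.
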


\begin{lemma}[Lemma 12 in \cite{menard2021fast}] \label{lemma:kl_change_value}
	%[KL-divergence Based Change of Values]
	Let $p_1$ and $p_2$ be two distributions on $\cS$ such that $\kl(p_1,p_2)\leq \alpha$. Let $f,g$ be two functions defined on $\cS$ such that for any $s \in \cS$, $0\leq f(s),g(s) \leq b$. Then,
	\begin{align*}
	\var_{p_1}(f) \leq & 2 \var_{p_1}(g) + 2bp_1|f-g|
	\\
	\var_{p_2}(f) \leq & \var_{p_1}(f) + 3b^2\alpha\|p_1-p_2\|_1
	\end{align*}
\end{lemma}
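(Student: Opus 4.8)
The plan is to treat the two inequalities separately; both are elementary consequences of the variational characterisation of variance, $\var_{p}(f)=\min_{c\in\R}\ex_{p}[(f-c)^2]=\ex_p[(f-\ex_p f)^2]$, combined with the elementary inequality $(a+b)^2\le 2a^2+2b^2$ and the range constraint $f,g\in[0,b]$. The KL hypothesis $\kl(p_1,p_2)\le\alpha$ is needed only to guarantee that $p_1$ and $p_2$ are close in total variation (e.g.\ via Pinsker, $\|p_1-p_2\|_1\le\sqrt{2\alpha}$); the structural content of both statements is purely about how a variance changes under a small perturbation of either the integrand or the measure.

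For the first inequality I would start from the variational characterisation: since $\bar g:=\ex_{p_1}[g]$ is a constant, $\var_{p_1}(f)\le \ex_{p_1}[(f-\bar g)^2]$. Writing $f-\bar g=(f-g)+(g-\bar g)$ and applying $(a+b)^2\le 2a^2+2b^2$ gives $\ex_{p_1}[(f-\bar g)^2]\le 2\ex_{p_1}[(f-g)^2]+2\ex_{p_1}[(g-\bar g)^2]=2\ex_{p_1}[(f-g)^2]+2\var_{p_1}(g)$, where the second term collapses to $\var_{p_1}(g)$ precisely because I centred at the $p_1$-mean of $g$. Finally, because $f,g\in[0,b]$ we have $|f-g|\le b$, so $(f-g)^2\le b\,|f-g|$ pointwise; taking expectations yields $\ex_{p_1}[(f-g)^2]\le b\,p_1|f-g|$. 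Combining the two steps produces exactly $\var_{p_1}(f)\le 2\var_{p_1}(g)+2b\,p_1|f-g|$.

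For the second inequality I would expand each variance as a second moment minus a squared mean, $\var_{p_i}(f)=\ex_{p_i}[f^2]-m_i^2$ with $m_i:=\ex_{p_i}[f]$, and estimate the difference term by term. The second-moment gap is $\ex_{p_2}[f^2]-\ex_{p_1}[f^2]=\sum_s (p_2(s)-p_1(s))f^2(s)$, bounded by $b^2\|p_1-p_2\|_1$ since $f^2\le b^2$. The squared-mean gap factorises as $m_2^2-m_1^2=(m_2-m_1)(m_2+m_1)$, where $|m_2-m_1|=|\sum_s(p_2-p_1)(s)f(s)|\le b\|p_1-p_2\|_1$ and $m_1+m_2\le 2b$, giving at most $2b^2\|p_1-p_2\|_1$. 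Adding the two contributions yields $\var_{p_2}(f)-\var_{p_1}(f)\le 3b^2\|p_1-p_2\|_1$, the constant $3=1+2$ arising transparently from this split; reconciling the extra $\alpha$ factor appearing in the stated bound with this pure $\ell_1$ estimate is the one piece of bookkeeping to settle, and amounts to deciding whether the KL hypothesis is folded into the constant (via Pinsker) or left implicit.

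I do not expect a genuine obstacle here: both statements are short manipulations, and the only points requiring care are (i) choosing the right centring constant ($\bar g$ rather than $\bar f$) so that the cross term collapses into $\var_{p_1}(g)$ in the first inequality, and (ii) tracking constants in the telescoping $b^2+2b^2=3b^2$ for the second. The genuinely subtle modelling question, which I would flag rather than grind through, is exactly how the constraint $\kl(p_1,p_2)\le\alpha$ should enter the final constant, since the direct expansion already delivers the result in terms of $\|p_1-p_2\|_1$ alone.
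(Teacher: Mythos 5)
Your proof is correct, but there is nothing in the paper to compare it against: the paper never proves this lemma — it imports it verbatim as Lemma 12 of \cite{menard2021fast}, exactly as it does with Lemmas~\ref{lemma:kl_deviation_value} and~\ref{lemma:kl_change_transition}. Both of your arguments are sound and complete: for the first inequality, centring at $\ex_{p_1}[g]$ and using $\var_{p_1}(f)=\min_{c}\ex_{p_1}[(f-c)^2]$ together with $(x+y)^2\le 2x^2+2y^2$ and the pointwise bound $(f-g)^2\le b|f-g|$ yields $\var_{p_1}(f)\le 2\var_{p_1}(g)+2b\,p_1|f-g|$; for the second, writing $\var_{p_i}(f)=\ex_{p_i}[f^2]-m_i^2$ and bounding the second-moment gap by $b^2\|p_1-p_2\|_1$ and the squared-mean gap $|m_1^2-m_2^2|=|m_1-m_2|(m_1+m_2)$ by $2b^2\|p_1-p_2\|_1$ gives $\var_{p_2}(f)\le\var_{p_1}(f)+3b^2\|p_1-p_2\|_1$.

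On the one point you flag — the extra $\alpha$ in the paper's statement — your instinct is right, and you could have been more decisive: the $\alpha$ is a transcription error, not something to be reconciled. The original lemma in \cite{menard2021fast} reads $\var_{p_2}(f)\le\var_{p_1}(f)+3b^2\|p_1-p_2\|_1$ with no KL factor, and the hypothesis $\kl(p_1,p_2)\le\alpha$ plays no role in this part. Indeed, the statement as printed cannot be true: take $\alpha=\kl(p_1,p_2)$ and let $p_2$ be a perturbation of $p_1$ of size $\epsilon$ in $\ell_1$; then the gap $\var_{p_2}(f)-\var_{p_1}(f)$ is generically of order $\epsilon$, while $3b^2\alpha\|p_1-p_2\|_1=O(\epsilon^3)$ because $\kl(p_1,p_2)=O(\epsilon^2)$. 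So the version you proved (with the $\alpha$ deleted) is the correct statement, and it is also the form the paper's downstream use in Lemma~\ref{lemma:estimation_error} actually requires.
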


%\subsection{Sample Complexity Upper Bound}
%In this subsection, we prove the sample complexity upper bound (Theorem \ref{thm:sample_complexity_ub}) of algorithm $\algbonusleader$.
%Our analytical procedure follows that in \cite{menard2021fast}.

\subsubsection{Estimation Error}

%In order to prove Theorem \ref{thm:sample_complexity_ub}, 
Next, we state an important lemma on estimation error.

\begin{lemma}[Estimation Error]\label{lemma:estimation_error}
	Suppose that the concentration event $\cG$ holds. Then, for any episode $k$, policy $\pi$ and reward function $r$,
	\begin{align*}
	\abr{\hat{V}^{k,\pi}_1(s;r)-V^{\pi}_1(s;r)}  \leq 4e \sqrt{ B^{k}_1(s) } + B^{k}_1(s) .
	\end{align*}
\end{lemma}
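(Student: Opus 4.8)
The plan is to bound the estimation error by establishing a one-step deviation that propagates through the trajectory tree and then collapses onto $B^k_1(s)$, adapting the reward-free analysis of \cite{menard2021fast} to the augmented branching kernels. The first move is to pass to the augmented transition distributions $p^{\aug}$ and $\hat p^{\aug,k}$, so that by Lemma~\ref{lemma:p_aug_equivalent} the branching Bellman recursion for both the true and the estimated model becomes an ordinary MDP recursion over $\cS$. Concretely, for a pair $(s,a)$ I fold the reward into the value-to-go by setting $f^{(s,a)}(s') := r(s,a)\indicator{s' \neq s_{\perp}} + \hat V^{k,\pi}_{h+1}(s')$, which vanishes at $s_{\perp}$; then the contribution of $(s,a)$ to $\hat V^{k,\pi}_h(s)$ is exactly $\hat p^{\aug,k}(\cdot|s,a)^\top f^{(s,a)}$, and the analogous true-model quantity uses $p^{\aug}(\cdot|s,a)$ and the true value. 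This folding is forced by the fact that the trigger $q$ multiplies both the reward and the transition.

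Next I isolate the one-step error $\hat V^{k,\pi}_h(s)-V^{\pi}_h(s)$. For each regular pair I split it into a transition-deviation term $(\hat p^{\aug,k}-p^{\aug})(\cdot|s,a)^\top f^{(s,a)}$ and a propagated-error term $p^{\aug}(\cdot|s,a)^\top(\hat V^{k,\pi}_{h+1}-V^{\pi}_{h+1})$. On the event $\cG$ of Lemma~\ref{lemma:kl_con_trigger_transition} we have $\kl(\hat p^{\aug,k}\|p^{\aug}) \le \beta(n^k(s,a),\delta)/n^k(s,a)$, so Lemma~\ref{lemma:kl_deviation_value} bounds the deviation term by $\sqrt{2\var_{p^{\aug}}(f^{(s,a)})\,\beta/n^k}+\tfrac{2}{3}b\,\beta/n^k$ with $b=O(H)$. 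I then use Lemmas~\ref{lemma:kl_change_transition} and~\ref{lemma:kl_change_value} to replace the variance of $f^{(s,a)}$ by the variance of the true value $V^{\pi}_{h+1}$ up to an additional propagated term, and to convert the $\hat p^{\aug,k}(\cdot|s,a)^\top B^k_{h+1}$ appearing inside $g^k_h$ between the true and empirical kernels. The pivotal accounting step is the inequality $\sqrt{2H\,\hat p^{\aug,k}(\cdot|s,a)^\top B^k_{h+1}\,\beta/n^k} \le \tfrac{1}{H}\hat p^{\aug,k}(\cdot|s,a)^\top B^k_{h+1} + O(H^2\beta/n^k)$ (AM--GM together with $\var \le H\cdot\mathbb{E}$ since $0\le B^k_{h+1}\le H$), which is precisely what turns the kernel-conversion error into the amplification factor $(1+\tfrac{1}{H})$ and the local term $12H^2\beta/n^k$ in the definition of $g^k_h$.

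Unrolling this recursion, the propagation factor $(1+\tfrac1H)$ is applied at most $H$ times, so its accumulated weight is at most $(1+\tfrac1H)^H\le e$, while the maximum over super actions in the definition of $B^k_h$ dominates the fixed policy $\pi$, which is exactly what makes the final bound uniform in $\pi$ as required for RFE. I then split the accumulated sum into a variance part $\sum\sqrt{2\var_{p^{\aug}}(V^{\pi}_{h+1})\,\beta/n^k}$ and a lower-order part $\sum O(H^2\beta/n^k)$. For the variance part, Cauchy--Schwarz over the tree gives $\sqrt{\ex\sum\var_{p^{\aug}}(V^{\pi}_{h+1})}\cdot\sqrt{\ex\sum\beta/n^k}$; the first factor is $O(H)$ by the branching law of total variance (Lemma~\ref{lemma:LTV_main_text}) combined with the $3H^2$ second-moment bound on triggered states (Lemma~\ref{lemma:triggered_nodes_main_text}), and the second factor is $O(\sqrt{B^k_1(s)}/H)$ because the recursion defining $B^k_h$ yields $B^k_1(s)\ge 12H^2\,\ex_{\pi}[\sum_{\mathrm{tree}}\beta/n^k]$ up to the truncation at $H$ (in the truncated regime $B^k_1(s)=H$ the claim is immediate, as all value functions lie in $[0,H]$). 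The product of the two factors is $O(\sqrt{B^k_1(s)})$, yielding the $4e\sqrt{B^k_1(s)}$ term, and the lower-order part contributes the additive $B^k_1(s)$.

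The step I expect to be the main obstacle is the two-model bookkeeping in the second and third paragraphs: I must move every variance and every transition expectation consistently between the true kernel $p^{\aug}$ (which satisfies Assumption~\ref{assumption:tri_prob}, and is therefore the model under which Lemmas~\ref{lemma:LTV_main_text} and~\ref{lemma:triggered_nodes_main_text} deliver the $O(H^2)$ variance bound) and the empirical kernel $\hat p^{\aug,k}$ (under which $B^k_h$ and $g^k_h$ are defined), absorbing each conversion error into exactly the $(1+\tfrac1H)$ factor and the $12H^2\beta/n^k$ local term without loss. Making all constants line up so the amplification never exceeds $e$ and the accumulated local error exactly matches $B^k_1(s)$ is delicate; the reward-folding and the resulting discrepancy at $s_{\perp}$ (where $f^{(s,a)}$ is $0$ rather than $r(s,a)$) require the shift-robust variance estimates of Lemma~\ref{lemma:kl_change_value}, but these are controlled by the same $\beta/n^k$ terms and do not affect the leading order.
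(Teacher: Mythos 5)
Your overall architecture matches the paper's: KL-based one-step deviation bounds on the event $\cG$ via Lemmas~\ref{lemma:kl_deviation_value}--\ref{lemma:kl_change_value}, conversion errors absorbed into a $(1+\frac{1}{H})$ propagation factor and a $12H^2\beta/n^k$ local term by AM--GM, unrolling over the tree with $(1+\frac{1}{H})^H\le e$, Cauchy--Schwarz, the branching law of total variance plus the $3H^2$ second-moment bound, and finally matching against $B^k_1(s)$ with the max over super actions dominating the fixed $\pi$. But there is a genuine gap exactly at the step you yourself flag as delicate: your two Cauchy--Schwarz factors live under \emph{different} probability models, and the proposal never reconciles them. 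Because your propagated-error term is $p^{\aug}(\cdot|s,a)^\top(\hat V^{k,\pi}_{h+1}-V^{\pi}_{h+1})$, the unrolled tree weights are true-model visitation probabilities, so your variance factor $\sqrt{\ex[\sum\var_{p^{\aug}}(V^{\pi}_{h+1})]}$ is a true-model quantity---and indeed that is the only way Lemmas~\ref{lemma:LTV_main_text} and~\ref{lemma:triggered_nodes_main_text} apply as you invoke them, since Assumption~\ref{assumption:tri_prob} is guaranteed only for the true $q$. However, $B^k_h$ is defined through the \emph{empirical} kernel, $g^k_h(s,a)=12H^2\frac{\beta(n^k(s,a),\delta)}{n^k(s,a)}+(1+\frac{1}{H})\hat q^k(s,a)\hat p^k(\cdot|s,a)^\top B^k_{h+1}$, so unfolding it controls the \emph{empirical-model}-weighted sum of $\beta/n^k$; your claimed inequality $B^k_1(s)\ge 12H^2\,\ex_{\pi}[\sum_{\mathrm{tree}}\beta/n^k]$ with a true-model expectation does not follow from the definition. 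Closing it requires a separate induction showing the empirical recursion dominates a true-model one, e.g.\ $\hat q^k\hat p^k(\cdot|s,a)^\top B^k_{h+1}\ge(1-\frac{1}{H})q(s,a)p(\cdot|s,a)^\top B^k_{h+1}-O(H^2\beta/n^k)$ on $\cG$ (note this is the \emph{reverse} direction of the conversion the paper carries out in the proof of Theorem~\ref{thm:sample_complexity_ub}), and it must also survive the $\min\{\cdot,H\}$ truncation at \emph{interior} nodes; your remark that the truncated regime is immediate only covers truncation at the root.

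The paper avoids this two-model problem altogether by keeping the entire variance analysis under the empirical model: it decomposes with the propagation kernel $\hat q^k\hat p^k$ (so all tree weights are empirical visitation probabilities $\hat w^{k,\pi}_{\sigma\ell}$), uses Lemma~\ref{lemma:kl_change_value} to pass from the variance of $V^{\pi}_{h+1}$ to $\var_{\hat q,\hat p}(\hat V^{k,\pi}_{h+1})$ with the residual absorbed into the $(1+\frac{1}{H})$ factor, and then applies the branching law of total variance \emph{to the estimated model} $(\hat q^k,\hat p^k)$---which is legitimate only after clipping $\hat q^k(s,a)\leftarrow\min\{\hat q^k(s,a),\frac{1}{m}\}$ so that Assumption~\ref{assumption:tri_prob} holds empirically, a caveat the paper states explicitly. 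So your premise that the true model is the one under which the law of total variance must be invoked is precisely what forces you into the unresolved change of measure: either carry out that extra lower-bound induction on $B^k_1$, or switch to the paper's empirical-model bookkeeping (with the clip).
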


\begin{proof}[Proof of Lemma~\ref{lemma:estimation_error}]

For any $t \in \N, \kappa \in (0,1)$, let $\beta(t,\kappa):=\log(SN/\kappa)+S\log(8e(t+1))$. 
Then, for any $\pi$, $r$ and $(s,A) \in \cS \setminus \{s_{\perp}\} \times \cA$,
\begin{align*}
	& \abr{\hat{Q}^{k,\pi}_h(s,A;r)-Q^{\pi}_h(s,A;r)} 
	\\
	= & \sum_{a \in A} \abr{ \sbr{ \hat{q}^k(s,a) - q(s,a) } r(s,a) +  \hat{q}^k(s,a) \hat{p}^k(\cdot|s,a)^\top \hat{V}^{k,\pi}_{h+1} - q(s,a) p(\cdot|s,a)^\top V^{\pi}_{h+1}  }
	\\
	\leq & \sum_{a \in A} \Big( \abr{ \hat{q}^k(s,a) - q(s,a) } r(s,a) + \abr{\hat{q}^k(s,a) \hat{p}^k(\cdot|s,a) - q(s,a) p(\cdot|s,a)}^\top V^{\pi}_{h+1} \\& + \hat{q}^k(s,a) \hat{p}^k(\cdot|s,a)^\top \abr{\hat{V}^{k,\pi}_{h+1} - V^{\pi}_{h+1}} \Big)
	\\
	\overset{\textup{(a)}}{\leq} & \sum_{a \in A} \Bigg( 2 \sqrt{\frac{\beta(n^k(s,a),\delta')}{n^k(s,a)}} r(s,a) + \sqrt{\frac{2 \var_{s' \sim q,p}\sbr{ V^{*}_{h+1}(s')} \beta(n^k(s,a),\delta') }{n^k(s,a)}} + \frac{2}{3}H \frac{\beta(n^k(s,a),\delta')}{n^k(s,a)} \\& + \hat{q}^k(s,a) \hat{p}^k(\cdot|s,a)^\top \abr{\hat{V}^{k,\pi}_{h+1} - V^{\pi}_{h+1}} \Bigg)
	\\
	\overset{\textup{(b)}}{\leq} & \sum_{a \in A} \Bigg( 2 \sqrt{\frac{\beta(n^k(s,a),\delta')}{n^k(s,a)}} r(s,a) + \sqrt{\frac{4 \var_{s' \sim \hat{q},\hat{p}}\sbr{V^{*}_{h+1}(s')} \beta(n^k(s,a),\delta') }{n^k(s,a)}+8H^2 \sbr{\frac{\beta(n^k(s,a),\delta')}{n^k(s,a)}}^2 } \\& + \frac{2}{3}H \frac{\beta(n^k(s,a),\delta')}{n^k(s,a)} + \hat{q}^k(s,a) \hat{p}^k(\cdot|s,a)^\top \abr{\hat{V}^{k,\pi}_{h+1} - V^{\pi}_{h+1}} \Bigg)
	\\
	\overset{\textup{(c)}}{\leq} & \sum_{a \in A} \Bigg( 2 \sqrt{\frac{\beta(n^k(s,a),\delta')}{n^k(s,a)}} r(s,a) \\& \!+\! \sqrt{\frac{8 \var_{s'} (\hat{V}^{k,\pi}_{h+1}(s')) \! \beta(n^k(s,a),\delta') }{n^k(s,a)} \!+\! 8H \hat{q}^k(s,a) \hat{p}^k(\cdot|s,a)^\top \!\! \abr{ V^{\pi}_{h+1} \!-\! \hat{V}^{k,\pi}_{h+1}} \!\! \frac{\beta(n^k(s,a),\delta')}{n^k(s,a)}  \!+\! 8H^2 \!\! \sbr{\!\!\frac{\beta(n^k(s,a),\delta')}{n^k(s,a)} \!\!}^{\!\!2} } \\& + \frac{2}{3}H \frac{\beta(n^k(s,a),\delta')}{n^k(s,a)} + \hat{q}^k(s,a) \hat{p}^k(\cdot|s,a)^\top \abr{\hat{V}^{k,\pi}_{h+1} - V^{\pi}_{h+1}} \Bigg)
	\\
	\overset{\textup{(d)}}{\leq} & \sum_{a \in A} \Bigg( 2\sqrt{\frac{\beta(n^k(s,a),\delta')}{n^k(s,a)}} r(s,a) + \sqrt{\frac{8 \var_{s' \sim \hat{q},\hat{p}}\sbr{\hat{V}^{k,\pi}_{h+1}(s')} \beta(n^k(s,a),\delta') }{n^k(s,a)}} + \sqrt{8H^2 \sbr{\frac{\beta(n^k(s,a),\delta')}{n^k(s,a)}}^2 } \\& + \sqrt{ \frac{1}{H} \hat{q}^k(s,a) \hat{p}^k(\cdot|s,a)^\top  \abr{ V^{\pi}_{h+1} - \hat{V}^{k,\pi}_{h+1}} 8H^2 \frac{\beta(n^k(s,a),\delta')}{n^k(s,a)} }   + \frac{2}{3}H \frac{\beta(n^k(s,a),\delta')}{n^k(s,a)} \\&+ \hat{q}^k(s,a) \hat{p}^k(\cdot|s,a)^\top \abr{\hat{V}^{k,\pi}_{h+1} - V^{\pi}_{h+1}} \Bigg)
	\\
	\overset{\textup{(e)}}{\leq} & \sum_{a \in A} \Bigg( 2\sqrt{\frac{\beta(n^k(s,a),\delta')}{n^k(s,a)}} r(s,a) + \sqrt{\frac{8 \var_{s' \sim \hat{q},\hat{p}}\sbr{\hat{V}^{k,\pi}_{h+1}(s')} \beta(n^k(s,a),\delta') }{n^k(s,a)}} \\& + \frac{1}{H} \hat{q}^k(s,a) \hat{p}^k(\cdot|s,a)^\top  \abr{ V^{\pi}_{h+1} - \hat{V}^{k,\pi}_{h+1}} + 12H^2 \frac{\beta(n^k(s,a),\delta')}{n^k(s,a)} + \hat{q}^k(s,a) \hat{p}^k(\cdot|s,a)^\top \abr{\hat{V}^{k,\pi}_{h+1} - V^{\pi}_{h+1}} \Bigg)
	\\
	= & \sum_{a \in A} \Bigg( 2\sqrt{\frac{\beta(n^k(s,a),\delta')}{n^k(s,a)}} r(s,a) + \sqrt{\frac{8 \var_{s' \sim \hat{q},\hat{p}}\sbr{\hat{V}^{k,\pi}_{h+1}(s')} \beta(n^k(s,a),\delta') }{n^k(s,a)}}  + 12H^2 \frac{\beta(n^k(s,a),\delta')}{n^k(s,a)} \\&+ \sbr{1+\frac{1}{H}} \hat{q}^k(s,a) \hat{p}^k(\cdot|s,a)^\top \abr{\hat{V}^{k,\pi}_{h+1} - V^{\pi}_{h+1}} \Bigg) ,
\end{align*} 
Here (a)(b)(c) use Lemmas~\ref{lemma:kl_deviation_value},\ref{lemma:kl_change_transition},\ref{lemma:kl_change_value}, respectively. (d) is due to that $\sqrt{x+y+z}\leq \sqrt{x}+\sqrt{y}+\sqrt{z}$ for $x,y,z\geq 0$. (e) comes from that $\sqrt{xy}\leq x+y$ for $x,y\geq 0$.

Then, unfolding $\abr{\hat{Q}^{k,\pi}_1(s,a;r)-Q^{\pi}_1(s,a;r)}$, we have
\begin{align}
	& \abr{\hat{Q}^{k,\pi}_1(s,A;r)-Q^{\pi}_1(s,A;r)} 
	\nonumber\\
	\leq & \sum_{\sigma=\emptyset}^{m^{\oplus(H-1)}} \sum_{\ell=1}^{m} \sbr{1+\frac{1}{H}}^{h-1}  \ex_{\hat{q},\hat{p},\pi} \Bigg[ \Bigg( 2\sqrt{\frac{\beta(n^k(s_{\sigma},a_{\sigma \oplus \ell}),\delta')}{n^k(s_{\sigma},a_{\sigma \oplus \ell})}} r(s_{\sigma},a_{\sigma \oplus \ell}) \nonumber\\& + \sqrt{\frac{8 \var_{s' \sim \hat{q},\hat{p}} \sbr{ \hat{V}^{k,\pi}_{h+1}(s')} \beta(n^k(s_{\sigma},a_{\sigma \oplus \ell}),\delta') }{n^k(s_{\sigma},a_{\sigma \oplus \ell})}}  + 12H^2 \frac{\beta(n^k(s_{\sigma},a_{\sigma \oplus \ell}),\delta')}{n^k(s_{\sigma},a_{\sigma \oplus \ell})}\Bigg) \cdot \indicator{ s_{\sigma}\neq s_{\perp} }\Bigg] 
	\nonumber\\
	\leq & e\sum_{\sigma=\emptyset}^{m^{\oplus(H-1)}} \sum_{\ell=1}^{m} \sum_{(s,a), s \neq s_{\perp}} \hat{w}^{k,\pi}_{\sigma\ell}(s,a) \sbr{  5 \sqrt{ \frac{ \var_{s' \sim \hat{q},\hat{p}}\sbr{\hat{V}^{k,\pi}_{h+1}(s')}  }{ H^2 } \frac{ H^2 \beta(n^k(s,a),\delta') }{n^k(s,a)} } + 12H^2 \frac{\beta(n^k(s,a),\delta')}{n^k(s,a)} }
	\nonumber\\
	\leq & 5e \sqrt{ \sum_{\sigma=\emptyset}^{m^{\oplus(H-1)}} \sum_{\ell=1}^{m} \sum_{(s,a), s \neq s_{\perp}} \hat{w}^{k,\pi}_{\sigma\ell}(s,a)   \frac{ \var_{s' \sim \hat{q},\hat{p}}\sbr{\hat{V}^{k,\pi}_{h+1}(s')}  }{ H^2 } } \sqrt{ H^2 \sum_{\sigma=\emptyset}^{m^{\oplus(H-1)}} \sum_{\ell=1}^{m} \sum_{(s,a), s \neq s_{\perp}} \hat{w}^{k,\pi}_{\sigma\ell}(s,a) \frac{ \beta(n^k(s,a),\delta') }{n^k(s,a)} } \nonumber\\& + 12 e H^2 \sum_{\sigma=\emptyset}^{m^{\oplus(H-1)}} \sum_{\ell=1}^{m} \sum_{(s,a), s \neq s_{\perp}} \hat{w}^{k,\pi}_{\sigma\ell}(s,a)  \frac{\beta(n^k(s,a),\delta')}{n^k(s,a)} 
	\nonumber\\
	= & 5e \sqrt{ \frac{1}{H^2} \ex_{\hat{q},\hat{p},\pi} \mbr{ \sum_{\sigma=\emptyset}^{m^{\oplus(H-1)}} \sum_{\ell=1}^{m}  \var_{s' \sim \hat{q},\hat{p}} \sbr{ \hat{V}^{k,\pi}_{h+1}(s')} } } \sqrt{ H^2 \sum_{\sigma=\emptyset}^{m^{\oplus(H-1)}} \sum_{\ell=1}^{m} \sum_{(s,a), s \neq s_{\perp}} \hat{w}^{k,\pi}_{\sigma\ell}(s,a) \frac{ \beta(n^k(s,a),\delta') }{n^k(s,a)} } \nonumber\\& + 12 e H^2 \sum_{\sigma=\emptyset}^{m^{\oplus(H-1)}} \sum_{\ell=1}^{m} \sum_{(s,a), s \neq s_{\perp}} \hat{w}^{k,\pi}_{\sigma\ell}(s,a)  \frac{\beta(n^k(s,a),\delta')}{n^k(s,a)}
	\nonumber\\
	\overset{\textup{(a)}}{=} & 10e \sqrt{ H^2 \sum_{\sigma=\emptyset}^{m^{\oplus(H-1)}} \sum_{\ell=1}^{m} \sum_{(s,a), s \neq s_{\perp}} \hat{w}^{k,\pi}_{\sigma\ell}(s,a) \frac{ \beta(n^k(s,a),\delta') }{n^k(s,a)} } + 12 e H^2 \sum_{\sigma=\emptyset}^{m^{\oplus(H-1)}} \sum_{\ell=1}^{m} \sum_{(s,a), s \neq s_{\perp}} \hat{w}^{k,\pi}_{\sigma\ell}(s,a)  \frac{\beta(n^k(s,a),\delta')}{n^k(s,a)} , \label{eq:unfolding_Q_1} 
\end{align}
where (a) uses branching law of total variance (Lemma~\ref{lemma:LTV_main_text}), which also holds for the estimated model $(\hat{q}^k,\hat{p}^k)$ if adding a clip operation $\hat{q}^k(s,a) \leftarrow \min\{\hat{q}^k(s,a),\frac{1}{m}\}$ in algorithm $\algbonusleader$ to guarantee $\hat{q}^k \leq \frac{1}{m}$.

Define 
\begin{align*}
	B^{k,\pi}_h(s,A) := & \min \lbr{\sum_{a \in A} \sbr{12 H^2 \frac{\beta(n^k(s,a),\delta')}{n^k(s,a)} + \sbr{1+\frac{1}{H}} \hat{q}^k(s,a) \hat{p}^k(\cdot|s,a)^\top B^{k,\pi}_{h+1} }, H} 
	\\
	B^{k,\pi}_h(s) := & B^{k,\pi}_h(s,\pi(s)) .
	\\
	B^{k}_h(s,A) := & \min \lbr{\sum_{a \in A} \sbr{12 H^2 \frac{\beta(n^k(s,a),\delta')}{n^k(s,a)} + \sbr{1+\frac{1}{H}} \hat{q}^k(s,a) \hat{p}^k(\cdot|s,a)^\top B^{k}_h(s) }, H}
	\\
	B^{k}_h(s) := & \max_{A \in \cA} B^{k}_h(s,A)
\end{align*}

In the following, we show 
\begin{align}
	\abr{\hat{Q}^{k,\pi}_1(s,A;r)-Q^{\pi}_1(s,A;r)} \leq 4e \sqrt{ B^{k,\pi}_1(s,A) } + B^{k,\pi}_1(s,A) \label{eq:Q_diff_leq_sqrt_B}
\end{align}

If $B^{k,\pi}_1(s,A)=H$, Eq.~\eqref{eq:Q_diff_leq_sqrt_B} holds trivially. 
Otherwise, unfolding $B^{k,\pi}_1(s,A)$, we have
\begin{align*}
	B^{k,\pi}_1(s,A) = 12 e H^2 \sum_{\sigma=\emptyset}^{m^{\oplus(H-1)}} \sum_{\ell=1}^{m} \sum_{(s,a), s \neq s_{\perp}} \hat{w}^{k,\pi}_{\sigma\ell}(s,a)  \frac{\beta(n^k(s,a),\delta')}{n^k(s,a)} ,
\end{align*}
and using Eq.~\eqref{eq:unfolding_Q_1}, we obtain
%Then, for any $\pi$, $r$ and $(s,A) \in \cS \setminus \{s_{\perp}\} \times \cA$,
\begin{align*}
	\abr{\hat{Q}^{k,\pi}_1(s,A;r)-Q^{\pi}_1(s,A;r)} \leq 4e \sqrt{ B^{k,\pi}_1(s,A) } + B^{k,\pi}_1(s,A) .
\end{align*}

%for any $\pi$, $r$ and $s \in \cS \setminus \{s_{\perp}\}$,
Thus, by the definitions of $B^{k}_h(s,A)$ and $B^{k}_h(s)$, we have
\begin{align*}
	\abr{\hat{V}^{k,\pi}_1(s;r)-V^{\pi}_1(s;r)} \leq 4e \sqrt{ B^{k,\pi}_1(s) } + B^{k,\pi}_1(s) \leq 4e \sqrt{ B^{k}_1(s) } + B^{k}_1(s) .
\end{align*}

\end{proof}

\subsubsection{Proof of Theorem~\ref{thm:sample_complexity_ub}}

Now, we prove the sample complexity upper bound for algorithm $\algbonusleader$ (Theorem~\ref{thm:sample_complexity_ub}).

\begin{proof}[Proof of Theorem~\ref{thm:sample_complexity_ub}]
First, we prove the correctness.

Let $K$ denote the number of episodes that algorithm $\algbonusleader$ costs.
According to the stopping rule (Line~\ref{line:rfe_stop_rule} in Algorithm~$\algbonusleader$) and Lemma~\ref{lemma:estimation_error}, when algorithm $\algbonusleader$ stops in episode $K$, we have that for any $\pi,r$,
\begin{align*}
%	\abr{\hat{V}^{K,\pi}_1(s_1;r)-V^{\pi}_1(s_1;r)} \leq \min \lbr{ 4e \sqrt{ B^{k}_1(s) } + B^{k}_1(s) ,   C^{k,\max}_1(s_1) } \leq \frac{\varepsilon}{2}
	\abr{\hat{V}^{K,\pi}_1(s_1;r)-V^{\pi}_1(s_1;r)} \leq  4e \sqrt{ B^{K}_1(s) } + B^{K}_1(s) \leq \frac{\varepsilon}{2}
\end{align*}

Then, we have that for any $r$,
\begin{align*}
	V^{*}_1(s_1;r)-V^{\hat{\pi}^*}_1(s_1;r) = & V^{*}_1(s_1;r)-\hat{V}^{K,\pi^*}_1(s_1;r)+\hat{V}^{K,\pi^*}_1(s_1;r)-\hat{V}^{K,\hat{\pi}^*}_1(s_1;r)+\hat{V}^{K,\hat{\pi}^*}_1(s_1;r)-V^{\hat{\pi}^*}_1(s_1;r)
	\\
	\leq & \abr{V^{*}_1(s_1;r)-\hat{V}^{K,\pi^*}_1(s_1;r)}+\abr{\hat{V}^{K,\hat{\pi}^*}_1(s_1;r)-V^{\hat{\pi}^*}_1(s_1;r)}
	\\
	\leq & \frac{\varepsilon}{2} + \frac{\varepsilon}{2}
	\\
	= & \varepsilon
\end{align*}

Now, we prove the sample complexity.

%\textbf{Case (1): Bound sample complexity using $B^{k}_1(s_1)$.}

\begin{align*}
	B^{k}_h(s,A) \leq & \sum_{a \in A} \sbr{12 H^2 \frac{\beta(n^k(s,a),\delta')}{n^k(s,a)} + \sbr{1+\frac{1}{H}} \hat{q}^k(s,a) \hat{p}^k(\cdot|s,a)^\top B^{k}_{h+1} }
	\\
	= & \sum_{a \in A} \Bigg( 12 H^2 \frac{\beta(n^k(s,a),\delta')}{n^k(s,a)} + \sbr{1+\frac{1}{H}} q(s,a) p(\cdot|s,a)^\top B^{k}_{h+1} \\& + \sbr{1+\frac{1}{H}} \sbr{\hat{q}^k(s,a) \hat{p}^k(\cdot|s,a) - q(s,a) p(\cdot|s,a) }^\top B^{k}_{h+1} \Bigg)
	\\
	\overset{\textup{(a)}}{\leq} & \sum_{a \in A} \Bigg( 12 H^2 \frac{\beta(n^k(s,a),\delta')}{n^k(s,a)} + \sbr{1+\frac{1}{H}} q(s,a) p(\cdot|s,a)^\top B^{k}_{h+1} \\& + \sbr{1+\frac{1}{H}} \Bigg( \sqrt{2 \var_{s' \sim q,p} \sbr{ B^{k}_{h+1}(s')} \frac{\beta(n^k(s,a),\delta') }{n^k(s,a)}} + \frac{2}{3}H \frac{\beta(n^k(s,a),\delta')}{n^k(s,a)} \Bigg) \Bigg)
	\\
	\leq & \sum_{a \in A} \Bigg( 12 H^2 \frac{\beta(n^k(s,a),\delta')}{n^k(s,a)} + \sbr{1+\frac{1}{H}} q(s,a) p(\cdot|s,a)^\top B^{k}_{h+1} \\& + \sbr{1+\frac{1}{H}} \Bigg( \sqrt{ 2H q(s,a) p(\cdot|s,a)^\top B^{k}_{h+1} \frac{\beta(n^k(s,a),\delta') }{n^k(s,a)}} + \frac{2}{3}H \frac{\beta(n^k(s,a),\delta')}{n^k(s,a)} \Bigg) \Bigg)
	\\
	= & \sum_{a \in A} \Bigg( 12 H^2 \frac{\beta(n^k(s,a),\delta')}{n^k(s,a)} + \sbr{1+\frac{1}{H}} q(s,a) p(\cdot|s,a)^\top B^{k}_{h+1} \\& + \sbr{1+\frac{1}{H}} \Bigg( \sqrt{ \frac{1}{H} q(s,a) p(\cdot|s,a)^\top B^{k}_{h+1} 2 H^2 \frac{\beta(n^k(s,a),\delta') }{n^k(s,a)}} + \frac{2}{3}H \frac{\beta(n^k(s,a),\delta')}{n^k(s,a)} \Bigg) \Bigg)
	\\
	\leq & \sum_{a \in A} \Bigg( 12 H^2 \frac{\beta(n^k(s,a),\delta')}{n^k(s,a)} + \sbr{1+\frac{1}{H}} q(s,a) p(\cdot|s,a)^\top B^{k}_{h+1} \\& + \sbr{1+\frac{1}{H}} \Bigg(  \frac{1}{H} q(s,a) p(\cdot|s,a)^\top B^{k}_{h+1} + 2 H^2 \frac{\beta(n^k(s,a),\delta') }{n^k(s,a)} + \frac{2}{3}H \frac{\beta(n^k(s,a),\delta')}{n^k(s,a)} \Bigg) \Bigg)
	\\
	\leq & \sum_{a \in A} \Bigg( 12 H^2 \frac{\beta(n^k(s,a),\delta')}{n^k(s,a)} + \sbr{1+\frac{1}{H}} q(s,a) p(\cdot|s,a)^\top B^{k}_{h+1} + \frac{2}{H} q(s,a) p(\cdot|s,a)^\top B^{k}_{h+1} \\& + 6 H^2 \frac{\beta(n^k(s,a),\delta') }{n^k(s,a)} \Bigg)
	\\
	= & \sum_{a \in A} \Bigg( 18 H^2 \frac{\beta(n^k(s,a),\delta')}{n^k(s,a)} + \sbr{1+\frac{3}{H}} q(s,a) p(\cdot|s,a)^\top B^{k}_{h+1} \Bigg),
\end{align*}
where (a) uses Lemma~\ref{lemma:kl_deviation_value}.

%By the construction of $B^{k}_h(s,A)$, we have
%\begin{align*}
%	B^{k}_h(s,A) \leq & \min \lbr{\sum_{a \in A} \Bigg( 18 H^2 \frac{\beta(n^k(s,a),\delta')}{n^k(s,a)} + \sbr{1+\frac{3}{H}} q(s,a) p(\cdot|s,a)^\top B^{k}_{h+1} \Bigg), H }
%	\\
%	\leq & \sum_{a \in A} \Bigg( 18 H^2 \sbr{\frac{\beta(n^k(s,a),\delta')}{n^k(s,a)} \wedge 1} + \sbr{1+\frac{3}{H}} q(s,a) p(\cdot|s,a)^\top B^{k}_{h+1} \Bigg)
%\end{align*}

Then, unfolding $B^{k}_1(s)=B^{k}_1(s,\pi^k(s))$ and summing over $k=1,\dots,K-1$, we have

\begin{align}
\sum_{k=1}^{K-1} B^{k}_1(s) \leq & \sum_{k=1}^{K-1} \sum_{\sigma=\emptyset}^{m^{\oplus(H-1)}} \sum_{\ell=1}^{m}  \ex_{q,p,\pi^k} \mbr{ 18 e^3 H^2 \frac{\beta(n^k(s_{\sigma},a_{\sigma \oplus \ell}),\delta')}{n^k(s_{\sigma},a_{\sigma \oplus \ell})} \cdot \indicator{ s_{\sigma} \neq s_{\perp} } }
\nonumber\\
= & 18 e^3 H^2 \ex_{q,p,\pi^k} \mbr{ \sum_{k=1}^{K-1} \sum_{(s,a), s \neq s_{\perp}} X_k(s,a) \frac{\beta(n^k(s,a),\delta')}{n^k(s,a)}  }
\nonumber\\
\leq & 18 e^3 H^2 \ex_{q,p,\pi^k} \mbr{ \sum_{(s,a), s \neq s_{\perp}} \sum_{k=1}^{K-1}  X_k(s,a) \frac{\beta(n^k(s,a),\delta')}{n^k(s,a)} }
\nonumber\\
\leq & 18 e^3 H^2 \cdot \beta \sbr{(K-1) \frac{m^{H+1}-m}{m-1},\delta'} \ex_{q,p,\pi^k} \mbr{  \sum_{(s,a), s \neq s_{\perp}}  \log \sbr{ n^{K-1}(s,a) } }
\nonumber\\
\leq & 18 e^3 H^2 \cdot \beta \sbr{(K-1) \frac{m^{H+1}-m}{m-1},\delta'}  \sum_{(s,a), s \neq s_{\perp}}  \log \sbr{ \ex_{q,p,\pi^k} \mbr{  n^{K-1}(s,a) } }
\nonumber\\
\leq & 18 e^3 H^2 SN \cdot \beta \sbr{(K-1) \frac{m^{H+1}-m}{m-1},\delta'} \log \sbr{ H(K-1) }
\nonumber\\
\overset{\textup{(a)}}{\leq} & 18 e^3 H^2 SN \cdot \sbr{ \log \sbr{\frac{SN}{\delta'}} + S \log \sbr{8e Hm^{H+1} (K-1) } }  \log \sbr{ H(K-1) } ,
\end{align}
where (a) comes from $\beta(t,\kappa):=\log(SN/\kappa)+S\log(8e(t+1))$
%,  \forall t \in \N, \forall \kappa \in (0,1)$
and $\frac{m^{H+1}-m}{m-1}\leq Hm^{H+1}$.

According to the stopping rule, we have $\varepsilon \leq 4e \sqrt{ B^{k}_1(s) } + B^{k}_1(s)$ for $k=1,\dots,K-1$. 
Then, summing over $k=1,\dots,K-1$ for both sides, we obtain
\begin{align*}
(K-1) \varepsilon \leq & 4e \sum_{k=1}^{K-1} \sqrt{ B^{k}_1(s) } + \sum_{k=1}^{K-1} B^{k}_1(s) 
\\
\leq & 4e \sqrt{ (K-1) \sum_{k=1}^{K-1}    B^{k}_1(s) }  + \sum_{k=1}^{K-1} B^{k}_1(s) 
\end{align*}
and thus
\begin{align*}
(K-1) 
\leq & \frac{4e}{\varepsilon} \sqrt{ (K-1) \sum_{k=1}^{K-1}    B^{k}_1(s) }  + \frac{1}{\varepsilon} \sum_{k=1}^{K-1} B^{k}_1(s) 
\\
\leq & \frac{4e}{\varepsilon} \sqrt{(K-1) \cdot 18 e^3 H^2 SN \cdot \sbr{ \log \sbr{\frac{SN}{\delta'}} + S \log \sbr{8e Hm^{H+1} (K-1) } }  \log \sbr{ H(K-1) } }
\\& 
+\frac{18 e^3 H^2 SN }{\varepsilon} \cdot \sbr{ \log \sbr{\frac{SN}{\delta'}} + S \log \sbr{8e Hm^{H+1} (K-1) } }  \log \sbr{ H(K-1) }
\\
\leq & \frac{4e\sqrt{18 e^3 H^2 SN }}{\varepsilon} \sqrt{(K-1) \cdot  \log \sbr{\frac{SN}{\delta'}}  \log \sbr{ H(K-1) } + S \log\sbr{8e m^{H+1}} \log^2 \sbr{H(K-1)} }
\\& 
+\frac{18 e^3 H^2 SN }{\varepsilon} \cdot \sbr{ \log \sbr{\frac{SN}{\delta'}}  \log \sbr{ H(K-1) } + S \log\sbr{8e m^{H+1}} \log^2 \sbr{H(K-1)} } 
\end{align*}

Using Lemma 13 in \cite{menard2021fast} with $\tau=K-1$, $C=\frac{4e\sqrt{18 e^3 H^2 SN }}{\varepsilon}$, $A=\log \sbr{\frac{SN}{\delta'}}$, $\alpha= H$, $B=E=S \log\sbr{8e m^{H+1}}$ and $D=\frac{18 e^3 H^2 SN }{\varepsilon}$,
we obtain
\begin{align*}
K-1 = & \tilde{O} \sbr{C^2(A+B)C_1^2}
\\
= & O \sbr{ \frac{H^2 SN}{\varepsilon^2} \sbr{ \log \sbr{\frac{SN}{\delta}} + S \log\sbr{e\cdot m^{H}} } C_1^2},
\end{align*}
where 
$C_1=\log(\alpha(A+E)(C+D))$.

Thus, we have
\begin{align*}
K = O \sbr{ \frac{H^2 SN}{\varepsilon^2} \sbr{ \log \sbr{\frac{SN}{\delta}} + S \log\sbr{e\cdot m^{H}} } C_1^2} ,
\end{align*}
where 
\begin{align*}
	C_1=\log\sbr{ \sbr{\log \sbr{\frac{SN}{\delta}} + S \log\sbr{e\cdot m^{H}}} \cdot \frac{H SN }{\varepsilon} } .
\end{align*}

\end{proof}

\subsection{Sample Complexity Lower Bound} \label{apx:lb_rfe}
In this subsection, we prove the sample complexity lower bound (Theorem~\ref{thm:sample_complexity_lb}) for branching RL-RFE.

\begin{proof}[Proof of Theorem~\ref{thm:sample_complexity_lb}]
	This lower bound analysis follows the proof procedure of Theorem 2 in \cite{dann2015sample}.
	
	We consider the same instance as the proof of regret minimization lower bound in Section~\ref{apx:lb_regret}.
	
	The optimal policy $\pi_*$ is to take action $A_j(x_i)$ at state $x_i$, and we have
	\begin{align}
	 V^{*}_1(s_1) = m (\alpha+\eta) + m^2 (\alpha+\eta)^2 + \dots + m^H (\alpha+\eta)^H = H \label{eq:lb_rfe_V_star}
	\end{align}
	
	Fix a policy $\pi$. For each $i \in [S-3]$, let $G_i:=\{\pi(x_i)=A_j(x_i)\}$ denotes the event that policy $\pi$ chooses the optimal action $A_j(x_i)$ in state $x_i$. Then, we have
	\begin{align}
	V^{\pi}_1(s_1) = & m (\alpha+\eta) + \frac{1}{(S-3)} \sum_{i=1}^{S-3} \ex \Big[  \indicator{G_i} \sbr{ m^2 (\alpha+\eta)^2 + \dots + m^H (\alpha+\eta)^H } \nonumber\\& + \sbr{1-\indicator{G_i}} \sbr{ m^2 (\alpha+\eta)\alpha + m^3 (\alpha+\eta)\alpha (\alpha+\eta) + \dots + m^H (\alpha+\eta) \alpha (\alpha+\eta)^{H-2} } \Big]
	\nonumber\\
	= &  m (\alpha+\eta) + \frac{1}{(S-3)} \sum_{i=1}^{S-3} \ex \Big[  \indicator{G_i} \sbr{ m^2 (\alpha+\eta)^2 + \dots + m^H (\alpha+\eta)^H } \nonumber\\& + \sbr{1-\indicator{G_i}} \sbr{ m^2 (\alpha+\eta)\alpha + m^3 (\alpha+\eta)^2 \alpha + \dots + m^H (\alpha+\eta)^{H-1} \alpha  } \Big] \label{eq:lb_rfe_V_pi}
	\end{align}
	
	Subtracting Eq.~\eqref{eq:lb_rfe_V_pi} by Eq.~\eqref{eq:lb_rfe_V_star}, we have
	\begin{align*}
		V^{*}_1(s_1) - V^{\pi}_1(s_1) = & \frac{1}{(S-3)} \sum_{i=1}^{S-3} \ex \Big[ \sbr{1-\indicator{G_i}} \sbr{ m (\alpha+\eta) m\eta + m^2 (\alpha+\eta)^2 m\eta + \dots + m^{H-1} (\alpha+\eta)^{H-1} m\eta  } \Big]
		\\
		= & \frac{1}{(S-3)} \sum_{i=1}^{S-3} \ex \Big[ \sbr{1-\indicator{G_i}}  m\eta (H-1) \Big]
		\\
		= &  m\eta (H-1) \sbr{1-\frac{1}{(S-3)} \sum_{i=1}^{S-3} \ex \Big[\indicator{G_i}\Big]}  
	\end{align*}
	The following analysis follows the proof procedure of Theorem 2 in \cite{dann2015sample}.
	For $\pi$ to be $\varepsilon$-optimal, we need
	\begin{align*}
		\Pr \mbr{ m\eta (H-1) \sbr{1-\frac{1}{(S-3)} \sum_{i=1}^{S-3} \ex \Big[\indicator{G_i}\Big]} \leq \varepsilon }\geq 1-\delta
	\end{align*}
	Let $\eta=\frac{8e^2\varepsilon}{cm(H-1)}$, where $c$ is an absolute constant that we specify later. Then, we have
	\begin{align*}
	\Pr \mbr{ \frac{1}{(S-3)} \sum_{i=1}^{S-3} \ex \mbr {\indicator{G_i}} \leq 1-\frac{c}{8e^4} } \geq 1-\delta
	\end{align*}
	Using Markov's inequality, we have
	\begin{align*}
	1-\delta \leq \Pr \mbr{ \frac{1}{(S-3)} \sum_{i=1}^{S-3} \ex \mbr {\indicator{G_i}} \leq 1-\frac{c}{8e^4} } \leq \frac{1}{(S-3)\sbr{1-\frac{c}{8e^4}}} \sum_{i=1}^{S-3} \Pr \mbr{G_i}
	\end{align*}
	
	Since all $G_i$ ($i \in [S-3]$) are independent of each other, there exist $\{\delta_i\}_{i \in [S-3]}$ such that $\Pr \mbr{\bar{G}_i} \leq \delta_i$ and 
	\begin{align*}
	\frac{1}{(S-3)\sbr{1-\frac{c}{8e^4}}} \sum_{i=1}^{S-3} \sbr{1 - \delta_i } \geq 1-\delta ,
	\end{align*}
	which is equivalent to
	\begin{align*}
	\sum_{i=1}^{S-3} \delta_i \leq (S-3) \sbr{1 + \delta \sbr{1-\frac{c}{8e^4}} - \sbr{1-\frac{c}{8e^4}}} .
	\end{align*}
	
	Let $\varepsilon$ be small enough such that $\eta=\frac{8e^2\varepsilon}{cm(H-1)} \leq \frac{1}{4}$, and let $\delta$ to be small enough $\delta \leq \frac{c}{8e^4}$.
	Since all $G_i$ are independent, we can use Theorem 1 in \cite{mannor2004sample} to obtain
	\begin{align*}
		\delta_i \leq & \frac{1}{c} \sbr{1 + \delta \sbr{1-\frac{c}{8e^4}} - \sbr{1-\frac{c}{8e^4}}}
		\\
		\leq & \frac{1}{c} \sbr{1 + \delta - \sbr{1-\frac{c}{8e^4}}}
		\\
		= & \frac{\delta}{c} + \frac{1}{8e^4}
		\\
		\leq & \frac{2}{8e^4}
	\end{align*}
	
	Let $n_i$ denote the number of observations on state $x_i$. The KL-divergence of the trigger distribution on state $x_i$ between our constructed instance and the uniform instance is $m \cdot \kl\sbr{\bernoulli(\alpha) || \bernoulli(\alpha+\eta)} \leq m \cdot  \frac{\eta^2}{(\alpha+\eta)(1-(\alpha+\eta))} = O(m \eta^2)$ with $(\alpha+\eta)(1-(\alpha+\eta))\geq c_1$ for some absolute positive constant $c_1$.
	Then, to ensure $\Pr \mbr{\bar{G}_i} \leq \delta_i$, we need
	\begin{align}
		\ex \mbr{n_i} \geq  \frac{c_1 d}{ m \eta^2} \log\sbr{\frac{c_2}{\delta_i}} \cdot \indicator{c \delta_i \leq \sbr{1 + \delta - \sbr{1-\frac{c}{8e^4}}}}, \label{eq:lb_rfe_n_i}
	\end{align}
	where $c_1$ and $c_2$ are appropriate absolute constant, e.g., $c_1=400$ and $c_2=4$.
	
	In the following, we compute the worst bound over all $\delta_1,\dots,\delta_{S-3}$ to ensure that $\sum_{i=1}^{S-3} \delta_i \leq (S-3) \sbr{1 + \delta \sbr{1-\frac{c}{8e^4}} - \sbr{1-\frac{c}{8e^4}}}$.
	\begin{align}
		 \min_{\delta_1,\dots,\delta_{S-3}} & \sum_{i=1}^{S-3} \log\sbr{\frac{1}{\delta_i}} \cdot \indicator{c \delta_i \leq \sbr{1 + \delta - \sbr{1-\frac{c}{8e^4}}}} \label{eq:lb_rfe_opt}
		\nonumber\\
		s.t. & \sum_{i=1}^{S-3} \delta_i \leq (S-3) \sbr{1 + \delta \sbr{1-\frac{c}{8e^4}} - \sbr{1-\frac{c}{8e^4}}}
	\end{align}
	
	Using Lemma D.1 in \cite{dann2015sample}, the optimal solution of this optimization is $\delta_1=\dots=\delta_{S-3}=z$, if $c \sbr{1-\log z} \leq 1$ with $z=1 + \delta \sbr{1-\frac{c}{8e^4}} - \sbr{1-\frac{c}{8e^4}}$.
	
	Since $z \geq 1 - \sbr{1-\frac{c}{8e^4}}=\frac{c}{8e^4}$ and $c \sbr{1-\log z}$ is decreasing wi respect to $z$, we can obtain a sufficient condition for $c \sbr{1-\log z} \leq 1$ as 
	\begin{align*}
		c \sbr{1-\log \sbr{ \frac{c}{8e^4} } } \leq 1
	\end{align*}
	
	Let $c=\frac{1}{10}$, which satisfies this condition.
	Thus, $\delta_1=\dots=\delta_{S-3}=z$ is the optimal solution to Eq.~\eqref{eq:lb_rfe_opt}.
	
	Since in each episode, we only observe a single state $x_i$, the number of required episodes is at least
	\begin{align*}
		K \geq & \sum_{i=1}^{S-3} \ex \mbr{n_i} 
		\\
		\geq & \frac{c_1 d (S-3)}{m \eta^2} \log\sbr{\frac{c_2}{1 + \delta \sbr{1-\frac{c}{8e^4}} - \sbr{1-\frac{c}{8e^4}}}} 
		\\
		\geq & \frac{c_1 c^2 d (S-3) m (H-1)^2 }{64 e^4 \varepsilon^2} \log\sbr{ \frac{c_2}{ \delta \sbr{1-\frac{c}{8e^4}} + \frac{c}{8e^4} } }  
		\\
		= & \Omega \sbr{ \frac{ S N H^2 }{\varepsilon^2} \log\sbr{ \frac{1}{ \delta } } } 
	\end{align*}
\end{proof}

}

\end{document}